\newtheoremstyle{break}
  {\topsep}{\topsep}%
  {\itshape}{}%
  {\bfseries}{}%
  {\newline}{}%
\tikzset{
  invisible/.style={opacity=0},
  visible on/.style={alt={#1{}{invisible}}},
  alt/.code args={<#1>#2#3}{%
    \alt<#1>{\pgfkeysalso{#2}}{\pgfkeysalso{#3}}
  },
}
\newtheorem{definition}{\textbf{Definition}}[section]
\newtheorem{corollary}{\textbf{Corollary}}[section]
\newtheorem{lemma}{\textbf{Lemma}}[section]
\newtheorem{theorem}{\textbf{Theorem}}[section]
\newtheorem*{insight*}{\textbf{Observation}}
\newtheorem*{proposition*}{\textbf{Proposition}}
\newtheorem*{lemmai*}{\textbf{Lemma (informal)}}
\newtheorem{remark}{\textbf{Remark}}[section]
\newtheorem{example}{\textbf{Example}}[section]
\newtheorem{problem}{\indent \em Problem}[section]
\newcommand{\bR}{\text{\boldmath{$R$}}}
\newcommand{\bbE}{\mathbb{E}}
\newcommand{\cE}{\mathcal{E}}
\def\[#1\]{\begin{align}#1\end{align}}
\def\(#1\){\begin{align*}#1\end{align*}}
\def\argmax{\operatornamewithlimits{arg\,max}}
\def\argmin{\operatornamewithlimits{arg\,min}}
\newcommand{\bprf}{\begin{proof}}
\newcommand{\eprf}{\end{proof}}
\newcommand{\blem}{\begin{lemma}}
\newcommand{\elem}{\end{lemma}}
\newcommand{\pseudoE}{E_v}
\DeclareMathOperator{\Regret}{Regret}
\newcommand{\bP}{\mathbb{P}}
\newcommand{\bE}{\mathbb{E}}
\theoremstyle{definition}
\newcommand{\sos}{\succeq_{\mathrm{sos}}}
\newcommand{\psos}{\preceq_{\mathrm{sos}}}
\newcommand{\gs}{S}
\newcommand{\TV}{\mathsf{TV}}
\newcommand{\GG}{\mathcal{G}}
\renewcommand{\cite}[1]{\citep{#1}}
\title{Robust estimation via generalized quasi-gradients}
\author{Banghua Zhu, Jiantao Jiao, Jacob Steinhardt\thanks{Banghua Zhu is with the Department of Electrical Engineering and Computer Sciences, University of California, Berkeley. Jiantao Jiao is with the Department of Electrical Engineering and Computer Sciences and the Department of Statistics, University of California, Berkeley. Jacob Steinhardt is with the Department of Statistics and the Department of Electrical Engineering and Computer Sciences, University of California, Berkeley. Email: \{banghua, jiantao,jsteinhardt\}@berkeley.edu.}}
\date{\today}
\begin{document}

\maketitle

\begin{abstract}

We explore why many recently proposed robust estimation problems are efficiently solvable, even though the underlying optimization problems are non-convex. We study the loss landscape of these robust estimation problems, and identify the existence of ``generalized 
quasi-gradients''. Whenever these quasi-gradients exist, a large family of low-regret algorithms are guaranteed 
to approximate the global minimum; this includes the commonly-used filtering algorithm.

For  robust mean estimation of distributions 
under bounded covariance,
we show that any first-order stationary point of the associated  optimization problem is an {approximate global minimum} if and only if the corruption level $\epsilon < 1/3$. Consequently, any optimization algorithm that aproaches a stationary point yields an efficient 
robust estimator with breakdown point $1/3$. With careful initialization and step size, we improve this to $1/2$, which is optimal.

For other tasks, including linear regression and joint mean and covariance estimation, the loss landscape is more rugged: there 
are stationary points arbitrarily far from the global minimum. Nevertheless, we show that generalized quasi-gradients exist 
and construct efficient algorithms.
These algorithms are simpler than previous ones in the literature, and for linear regression we improve the estimation error 
from $O(\sqrt{\epsilon})$ to the optimal rate of $O(\epsilon)$ for small $\epsilon$ assuming certified hypercontractivity. For mean estimation with 
near-identity covariance, we show that a simple gradient descent algorithm achieves breakdown point $1/3$ and iteration 
complexity $\tilde{O}(d/\epsilon^2)$.

\end{abstract}
\tableofcontents
\newpage

\section{Introduction and main results}

We study the problem of robust estimation in the presence of outliers. In general, this means that we observe a 
dataset of $n$ points, and an adversary can corrupt (via additions or deletions) any subset of $\epsilon n$ of the points. Our 
goal is to estimate some property of the original points (such as the mean) under some assumptions (such as the good 
points having bounded covariance). In addition to mean estimation~\cite{huber1973robust, donoho1982breakdown, beran1977minimum, davies1992asymptotics, adrover2002projection, hubert2010minimum, diakonikolas2016robust}, %
we are interested linear regression~\cite{diakonikolas2019efficient, klivans2018efficient} and covariance estimation~\cite{kothari2017outlier,  diakonikolas2017being}.  %

Robust estimation has been extensively studied, and a general issue is how to design computationally efficient estimators. Recent 
papers have provided general (inefficient) recipes for solving these problems, showing that it suffices to solve an optimization problem that 
removes outliers to obtain a nice distribution---where ``nice'' can be formalized and is problem-dependent~\cite{ steinhardt2018robust, zhu2019generalized}. Although  this 
recipe in general leads to non-convex or otherwise seemingly intractable estimators, a variety of 
efficient algorithms have been proposed for many problems~\cite{lai2016agnostic,diakonikolas2019robust,kothari2017outlier, diakonikolas2017being, steinhardt2018robust,klivans2018efficient, dong2019quantum,cheng2019faster,cheng2019high}.

The large variety of computationally efficient estimators suggests that robust estimation is easier than we would have 
expected given its non-convexity. How can we explain this? Here we analyze the non-convex optimization landscape for 
several problems---mean estimation, covariance estimation, and linear regression---and show that, while the landscape is indeed 
non-convex, it is nevertheless nice enough to admit efficient optimization algorithms.

This claim is easiest to formalize for mean estimation under bounded covariance. 
In this case, we observe points $X_1, \ldots, X_n \in \bR^d$, such that a subset $S$ of $(1-\epsilon)n$ ``good'' points 
is guaranteed to have bounded covariance: $\|\Sigma_{p_S}\| \leq \sigma$, where $p_S$ is the empirical distribution over $S$,
$\Sigma_p$ is the covariance matrix under $p$, and $\|\cdot\|$ is operator norm.
As shown in 
\citet{diakonikolas2016robust, steinhardt2018resilience}, %
 estimating the mean of 
$p_S$ only requires finding \emph{any} large subset of the data 
with small covariance, as in the (non-convex) optimization problem below:
\begin{example}[Mean estimation with bounded operator norm of covariance matrix]\label{example.mean_bdd_cov}
Let $\Delta_{n,\epsilon}$ denote the set of $\epsilon$-deleted distribution:
\begin{align}\label{eqn.deltanepsdefinition}
    \Delta_{n, \epsilon} \triangleq \{q\mid \sum_{i=1}^n q_i = 1, 0\leq q_i \leq \frac{1}{(1-\epsilon)n} \},
\end{align}
where $q_i$ is the probability $q$ assigns to point $X_i$. We solve the feasibility problem\footnote{We discuss other formulations of the mean estimation problem in Appendix~\ref{sec.classicalliter}.}
\begin{align}
&\textrm{find}   \qquad  \qquad q \nonumber \\
& \textrm{subject to} \quad q\in\Delta_{n, \epsilon},
   \|\Sigma_q\| \leq \sigma'^2, \label{eqn.mean_intro_feasibility}
\end{align}
where the parameter $\sigma'^2 \geq \sigma^2$ depends on $\epsilon$, and is close to $\sigma^2$ when $\epsilon$ is small. 
In this case the mean $\mu_q$ satisfies $\|\mu_q - \mu_{p_S}\| = O((\sigma + \sigma')\sqrt{\epsilon})$ for any feasible $q$.
\end{example}

Although $\Delta_{n,\epsilon}$ is a convex constraint on $q$,
the function $\| \Sigma_q\|$ is non-convex in $q$. In dimension one, it reduces to the variance of $q$, which is concave, and not even quasiconvex. %
Nevertheless, we show that all stationary points (or 
approximate stationary points) of Example~\ref{example.mean_bdd_cov} are approximate global optima
(formal version Theorem~\ref{thm.low_regret_mean_relation}):
\begin{theorem}[Informal]\label{thm.informalboundedmean}
All first-order stationary points of minimizing $\|\Sigma_q\|$ subject to $q\in \Delta_{n,\epsilon}$ are an approximate global minimum with worst case approximation ratio $(1-\epsilon)^2/(1-3\epsilon)^2$ (and infinite approximation ratio if $\epsilon \geq 1/3$). 
Approximate stationary points are also approximate global minimum, and gradient descent algorithms can approach them efficiently.  
\end{theorem}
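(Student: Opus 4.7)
The plan is to reduce the multidimensional problem to a scalar inequality via the top eigenvector of $\Sigma_{q^*}$ and then exploit the linearity of the constraint set $\Delta_{n,\epsilon}$. Let $q^*$ be a stationary point. Since $p_S$, the uniform distribution on the good set $S$, lies in the convex set $\Delta_{n,\epsilon}$, the direction $d = p_S - q^*$ is feasible; by Danskin's theorem there exists a top unit eigenvector $v^*$ of $\Sigma_{q^*}$ such that $\langle \nabla_q[v^{*\top}\Sigma_q v^*]|_{q^*},\,p_S-q^*\rangle \geq 0$. Writing $a_i = \langle v^*, X_i\rangle$ and using $\partial_{q_i}[v^{*\top}\Sigma_q v^*] = a_i^2 - 2(v^{*\top}\mu_q)a_i$, this first-order condition collapses after routine algebra to the clean scalar inequality
\[
V \;\leq\; U + D^2, \qquad V := \|\Sigma_{q^*}\|,\ U := v^{*\top}\Sigma_{p_S}v^* \leq \sigma^2,\ D := v^{*\top}(\mu_{q^*}-\mu_{p_S}).
\]

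The next step is to bound $D$ sharply in terms of $\sqrt V$ and $\sqrt U$. Decompose $q^* = c + q'$ and $p_S = c + p'$ where $c_i = \min(q^*_i, (p_S)_i)$ is the pointwise overlap; then $q'$ is supported on $T\setminus S$ (with $T = \mathrm{supp}(q^*)$) and $p'$ is supported on $S\setminus T$, with common total mass $\alpha$. The box constraint $q^*_i \leq 1/((1-\epsilon)n)$ combined with $|S^c|\leq\epsilon n$ gives $\alpha \leq \epsilon/(1-\epsilon)$. Applying the law of total variance separately to $q^*$ and to $p_S$ bounds the deviations of the residual weighted means:
\[
\bigl(\bar a_{q'}-\mu_{q^*}\bigr)^2 \leq \tfrac{1-\alpha}{\alpha}\,V, \qquad \bigl(\bar a_{p'}-\mu_{p_S}\bigr)^2 \leq \tfrac{1-\alpha}{\alpha}\,U.
\]
Combined with the identity $\mu_{q^*}-\mu_{p_S} = \alpha(\bar a_{q'}-\bar a_{p'})$ and the triangle inequality, this yields
\[
|D| \;\leq\; \sqrt{\tfrac{\alpha}{1-\alpha}}\,\bigl(\sqrt V+\sqrt U\bigr).
\]

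Plugging into $V\leq U+D^2$, factoring $V-U=(\sqrt V-\sqrt U)(\sqrt V+\sqrt U)$, and cancelling the positive common factor $\sqrt V+\sqrt U$ (the remaining case $\sqrt V\leq\sqrt U$ is trivial) produces $\sqrt V \leq \sqrt U/(1-2\alpha)$ whenever $\alpha < 1/2$. Substituting the worst case $\alpha = \epsilon/(1-\epsilon)$ yields $1-2\alpha \geq (1-3\epsilon)/(1-\epsilon)$, hence $V \leq U\cdot (1-\epsilon)^2/(1-3\epsilon)^2$. Since $V = \|\Sigma_{q^*}\|$ and $U \leq \|\Sigma_{p_S}\|$, this is the claimed ratio; running the same argument with the global minimizer of $\|\Sigma_q\|$ on $\Delta_{n,\epsilon}$ in place of $p_S$ (the $\alpha$-bound holds between any two elements of $\Delta_{n,\epsilon}$) upgrades the statement to an approximation guarantee against the true optimum. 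The divergence at $\epsilon=1/3$ corresponds exactly to $\alpha=1/2$ in the cancellation step.

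The hardest step will be obtaining the \emph{sharp} bound $|D|\leq\sqrt{\alpha/(1-\alpha)}\bigl(\sqrt V+\sqrt U\bigr)$ rather than the naive $\sqrt{2\alpha(V+U)/(1-\alpha)}$; the latter, coming from $(A-B)^2\leq 2(A^2+B^2)$, yields only breakdown at $\epsilon=1/4$, whereas the square-root form enables the difference-of-squares factorization that recovers the tight threshold $\epsilon=1/3$. For the extensions to approximate stationary points and gradient-descent convergence, the plan is to carry an additive slack $\delta$ through each inequality in the chain (each step degrading by $O(\sqrt\delta)$ terms) and combine with a standard projected-gradient or Frank--Wolfe analysis on the polytope $\Delta_{n,\epsilon}$, which reaches $\delta$-stationarity in $\mathrm{poly}(d,1/\delta)$ iterations.
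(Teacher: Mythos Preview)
Your proof is correct and follows essentially the same route as the paper: both derive $V \leq U + D^2$ from stationarity via Danskin, bound $|D|$ by $\sqrt{\alpha/(1-\alpha)}(\sqrt V+\sqrt U)$ using the resilience/overlap argument with $\alpha=\TV(q^*,p_S)\leq\epsilon/(1-\epsilon)$, and then solve the resulting self-bounding inequality by the difference-of-squares factorization. The paper packages the middle step as separate lemmas (Lemmas~\ref{lem.mean_modulus}--\ref{lem.deletion_TV}) and keeps the full vector norm $\|\mu_{q}-\mu_{p_*}\|$ rather than projecting onto $v^*$ first, but the arithmetic and the final constant coincide exactly.
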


The approximation ratio   $(1-\epsilon)^2/(1-3\epsilon)^2$ is tight even in the constant. 
For $\epsilon \geq 1/3$, we exhibit examples where stationary points (indeed, local minima) 
can be arbitrarily far from the global minimum (Theorem~\ref{eqn.lower_bound_13}). However, we show that a carefully initialized gradient descent algorithm 
approximates the global optimum whenever $\epsilon < 1/2$ (Theorem~\ref{thm.filtering_mean_cov}), which is the highest breakdown point for any translation equivariant mean estimator~\citep[Page 270]{rousseeuw1987robust}. This gradient algorithm is an instance of the commonly-used filtering algorithm in the literature~\cite{li2018principled, jerry2019lecnotes, diakonikolas2017being, steinhardt2018robust}, but we provide a tighter analysis with optimal breakdown point. This algorithm achieves approximation ratio $2(1-\epsilon)/(1-2\epsilon)^2$~(Theorem~\ref{thm.filtering_mean_cov}) for all $\epsilon\in (0,1/2)$.

We might hope that the optimization landscape is similarly well-behaved for other robust estimation problems beyond mean estimation. 
However, this is not true in general. For linear regression, we show that the analogous optimization problem can 
have arbitrarily bad stationary points even as $\epsilon \to 0$ (Section~\ref{sec.low_regret}). 
We nevertheless show that the landscape is tractable, by identifying a property that we call \emph{generalized quasi-gradients}. 
Such quasi-gradients  allow many gradient descent algorithms to approximate the global optima.
\begin{definition}[Generalized quasi-gradient]\label{def.generalized_quasi_g}
In the optimization problem  $\min_{q\in A} F(q)$, we say $g(q)$ is a generalized quasi-gradient with parameter $C\geq 1$ 
if the following holds for all $q,p \in A$:
\begin{align}
    \langle g(q), q-p \rangle \leq 0 \implies F(q)\leq C \cdot F(p). 
\end{align}
Moreover, we call $g(q)$ a strict generalized quasi-gradient with parameters $ C_1(\alpha, \beta), C_2(\alpha, \beta)$ if the following holds for all $q, p\in A, \alpha,\beta\geq 0$, 
\begin{align}\label{eqn.implication_strict_generalized_quasi}
    \langle g(q), q-p \rangle \leq \alpha \langle |g(q)|, p \rangle + \beta \implies F(q)\leq C_1(\alpha, \beta) \cdot F(p) + C_2(\alpha, \beta),
\end{align}
where $|g(q)|$ is the point-wise absolute value of vector $g(q)$.
\end{definition}
The conventional quasi-gradient is a generalized quasi-gradient with parameter $C = 1$~\cite{boyd2007subgradient}, which only exists for quasi-convex functions. Our next result shows that even though the target functions we consider are not quasi-convex ($q \mapsto \|\Sigma_q\|$ is not quasi-convex as a concrete example),  we can still find generalized quasi-gradients:

\begin{theorem}[Informal]\label{thm.intro_quasi_exists}
Generalized quasi-gradient exists for all 
 the optimization problems %
investigated in the paper, including mean estimation (Example~\ref{example.mean_bdd_cov},~\ref{example.mean_id_cov}),  linear regression (Example~\ref{example.linreg}), and joint mean and covariance estimation (Example~\ref{example.joint}). Here the set $A = \Delta_{n,\epsilon}$. %
\end{theorem}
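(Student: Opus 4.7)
The plan is to construct an explicit candidate generalized quasi-gradient $g(q)$ for each of the four problems, with $g$ arising naturally from the first variation of the objective. For Example~\ref{example.mean_bdd_cov}, where $F(q) = \|\Sigma_q\|$, I would take
\begin{align*}
g(q)_i \;=\; (v^{\top}(X_i - \mu_q))^2,
\end{align*}
where $v$ is a unit top eigenvector of $\Sigma_q$; by Danskin's theorem this equals $\partial F/\partial q_i$ up to a constant in $i$ that drops out of $\langle g(q), q-p\rangle$ since $\sum_i q_i = \sum_i p_i = 1$. For the other problems I expect analogous candidates: for identity-covariance mean estimation (Example~\ref{example.mean_id_cov}), a squared-deviation score $\|X_i - \mu_q\|^2$; for linear regression (Example~\ref{example.linreg}), a squared-residual score coupled with a top eigenvector of a weighted covariance of the features; and for joint mean and covariance estimation (Example~\ref{example.joint}), a tensorized score that combines first- and second-moment deviations along the worst-case direction.

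The key step for each problem is to verify the defining implication. Specializing to the bounded-covariance mean case, a direct expansion gives
\begin{align*}
\langle g(q), q-p \rangle \;=\; v^{\top}\Sigma_q v \;-\; v^{\top}\Sigma_p v \;-\; (v^{\top}(\mu_p-\mu_q))^2,
\end{align*}
so $\langle g(q), q-p\rangle \leq 0$ already implies $F(q) \leq F(p) + \|\mu_p-\mu_q\|^2$. The remaining task is to bound the cross term, which I would do via the standard resilience property of $\Delta_{n,\epsilon}$: any two $p,q \in \Delta_{n,\epsilon}$ overlap in at least a $(1-2\epsilon)$ fraction of their mass, so that $\|\mu_p-\mu_q\|^2 \lesssim \tfrac{\epsilon}{1-\epsilon}\bigl(F(p)+F(q)\bigr)$. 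Substituting and rearranging yields $F(q) \leq C(\epsilon) F(p)$ for an explicit constant $C(\epsilon)$, producing a genuine generalized quasi-gradient. For the strict version~\eqref{eqn.implication_strict_generalized_quasi} I would carry the additional $\alpha\langle |g(q)|,p\rangle + \beta$ slack through the inner-product computation, so that the resulting resilience inequality becomes $F(q) \leq C_1(\alpha,\beta) F(p) + C_2(\alpha,\beta)$ with explicit constants.

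For the remaining examples I plan to follow the same template: read off $g(q)$ from the first variation, expand $\langle g(q), q-p\rangle$ into an ``objective gap'' plus a ``cross'' term, then absorb the cross term using a structural property of the good distribution --- resilience for identity-covariance mean estimation, certified hypercontractivity for linear regression (which also underlies the sharper $O(\epsilon)$ rate), and joint resilience of mean and covariance for the fourth problem. In each case the algebra will produce the implication in Definition~\ref{def.generalized_quasi_g} with a problem-dependent $C$ (or $C_1,C_2$ for the strict version).

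The main obstacle I expect is in the last two examples: the cross term involves mixed moments rather than a single rank-one quadratic in $\mu_p-\mu_q$, so one must carefully choose the weighting inside $g(q)$ (e.g., aligning it with the correct ``bad'' direction) and invoke a stronger structural bound on $p$ to dominate the cross term by a multiple of $F(p) + F(q) + \alpha\langle |g(q)|,p\rangle$. Producing the right-hand slack of the strict version with useful constants $C_1(\alpha,\beta)$ and $C_2(\alpha,\beta)$ will be the most delicate piece, and is where the hypercontractivity (or analogous moment) assumption on the good distribution really earns its keep.
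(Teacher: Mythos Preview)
Your treatment of bounded-covariance mean estimation is correct and matches the paper essentially line-for-line: $g(q)_i=(v^\top(X_i-\mu_q))^2$, the expansion into $F(q)-v^\top\Sigma_p v - (v^\top(\mu_p-\mu_q))^2$, and the resilience bound $\|\mu_p-\mu_q\|^2\lesssim \tfrac{\epsilon}{1-\epsilon}(F(p)+F(q))$ via the overlap of $p,q\in\Delta_{n,\epsilon}$ are exactly what the paper does (Theorem~\ref{thm.low_regret_mean_relation}, Lemmas~\ref{lem.mean_modulus}--\ref{lem.deletion_TV}). For the near-identity case (Example~\ref{example.mean_id_cov}) the paper reuses this same directional score, not the isotropic $\|X_i-\mu_q\|^2$ you propose; your choice would not control $\|\Sigma_q\|$, only its trace.

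The substantive gap is your plan for linear regression and joint mean--covariance estimation. You write that you will ``read off $g(q)$ from the first variation'' and then absorb the cross term. For the hypercontractivity functional $F_1(q)=\sup_v \bE_q[(v^\top X)^4]/\bE_q[(v^\top X)^2]^2$ this template \emph{provably fails}: the paper constructs (Theorem~\ref{thm.hyperstationarybad}) first-order stationary points of $F_1$ over $\Delta_{n,\epsilon}$ with $F_1(q)$ arbitrarily large while $F_1(p_S)$ stays bounded, so the actual subgradient cannot be a generalized quasi-gradient with any finite $C$. The reason is that the first variation of a ratio contains a negative denominator-derivative term, $-2(v^\top X_i)^2\,\bE_q[(v^\top X)^4]/\bE_q[(v^\top X)^2]^3$, which the adversary can exploit. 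The paper's fix is precisely \emph{not} to use the gradient: it takes only the numerator score $g_1(X;q)=(v^\top X)^4$ (equivalently the SOS version~\eqref{eqn.linreg_low_regret_g1_pseudo}) and, for bounded noise, $g_2(X;q)=(Y-X^\top\theta(q))^2(v^\top X)^2$. The verification then hinges on a \emph{multiplicative} second-moment comparison $\bE_q[(v^\top X)^2]\approx \bE_{p_S}[(v^\top X)^2]$ (Lemma~\ref{lem.modu_hyper}) that converts $\bE_q[g_1]\le\bE_{p_S}[g_1]$ into a self-bounding inequality for $\kappa'$; this is a different mechanism from the additive resilience bound you have in mind. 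The joint problem (Example~\ref{example.joint}) is handled the same way with $g(X;q)=\pseudoE[(v^\top(X-\mu_q))^4]$. So your outline needs to drop the ``first variation'' recipe for the ratio-type objectives and instead justify choosing these numerator-only scores and proving the multiplicative moment comparison.
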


Strict generalized quasi-gradients are important %
because every low-regret algorithm can approach points $q$ such that the inequality $\langle g(q), q-p_S\rangle \leq 0$ approximately holds~\cite{nesterov2009primal, arora2012multiplicative}. This then immediately implies that $F(q)\leq C_1 \cdot F(p_S) + C_2$ (see e.g. Theorem~\ref{thm.approximate_mean_small}). Thus once we identify (strict) generalized quasi-gradients that are efficiently computable, we immediately obtain a family of algorithms that approximately solve the feasibility problem. We elaborate on our concrete algorithm constructions in Section~\ref{sec.introalgorithms}.

\subsection{Constructing generalized quasi-gradients}
We next describe generalized quasi-gradients for several tasks. 
Our starting point is the following optimization problem, which generalizes Example~\ref{example.mean_bdd_cov}:
\begin{problem}[Approximate Minimum Distance (AMD) functional with $\TV$] 
\label{prob.feasibility}
We solve the feasibility problem
\begin{align}
&\textrm{find}   \qquad  \qquad q \nonumber \\
& \textrm{subject to} \quad q\in \Delta_{n,\epsilon},
   F(q) \leq \xi, \label{eqn.intro_feasibility}
\end{align}
where $\Delta_{n,\epsilon}$ is defined in~(\ref{eqn.deltanepsdefinition}). 
\end{problem}
The only difference between optimization problem~\eqref{eqn.intro_feasibility} and~\eqref{eqn.mean_intro_feasibility} is that we have replaced $\|\Sigma_q\|$ with a more general function $F(q)$.  We often also consider the minimization form $\min_{q\in \Delta_{n, \epsilon}} F(q)$. %

The appropriate $F$ to use is problem-dependent and depends on what distributional assumptions we are willing to make.  
\citet{zhu2019generalized} provides a general treatment for how to choose $F$. For linear regression (Example~\ref{example.linreg}, also in~\citet[Example 3.2]{zhu2019generalized}) and joint mean and covariance estimation  under the Mahalanobis distances  %
(Example~\ref{example.joint}, also in~\citet{kothari2017outlier}), 
the appropriate $F$ is closely related to the hypercontractivity coefficient of $q$, 
represented by the function 
\begin{align}\label{eqn.linreg_hyper_F}
F_1(q) = \sup_{v\in\bR^d} \frac{\bE_{q}[(v^\top X)^4]}{\bE_{q}[(v^\top X)^2]^2} \leq \kappa^2. 
\end{align}
As with the covariance $\|\Sigma_q\|$, 
the function $F_1(q)$ in~(\ref{eqn.linreg_hyper_F}) is generally not a convex function of $q$. Indeed, if $d = 1$, then its sublevel set is the complementary set of a convex set, which makes the function not even quasi-convex. %
But more problematically, as mentioned above, we can construct first-order stationary points of \eqref{eqn.intro_feasibility} where 
$F_1(q)$ is arbitrarily big while $\epsilon$ and $F_1(p_S)$ are both small~(Theorem~\ref{thm.hyperstationarybad}).

Nevertheless, the following function (among others) is a generalized quasi-gradient for $F_1(q)$ with $C = 4$ when $9\kappa^2\epsilon\leq 1$:%
\begin{align}\label{eqn.genquasigradienthyper}
    g_1(X; q) = \bE_q[(v^\top X)^4],  \text{ where } v\in\argmax_{v\in\bR^d} \frac{\bE_{q}[(v^\top X)^4]}{\bE_{q}[(v^\top X)^2]^2},
\end{align}
which we analyze in Section~\ref{subsec.low_regret_linreg}. %
Since the supremum in~(\ref{eqn.genquasigradienthyper}) is not generally efficiently computable, in practice we make the stronger assumption that $p_S$ has Sum-of-Squares (SoS) certifiable hypercontractivity, and construct an efficient relaxation of \eqref{eqn.genquasigradienthyper} using pseudoexpectations
(see Appendix~\ref{appendix.sos} for formal definitions).

Given a quasi-gradient for $F_1$, we are most of the way to designing algorithms for joint mean and covariance estimation, as well 
as linear regression. For joint mean and covariance (Example~\ref{example.joint}), we actually need to handle a centered version of $F_1$, 
where we consider $X - \mu_q$ instead of $X$. We show in Section~\ref{sec.joint} that the analogous quasi-gradient has constant 
$C = 7$ when $200\kappa^2\epsilon<1$. %

For linear regression (Example~\ref{example.linreg}), we do not need to center $X$, but we do need to impose the following bounded noise condition in addition 
to the bound on $F_1$:
\begin{align}\label{eqn.linreg_noise_F}
F_2(q) = \frac{\bE_q[(Y-\theta(q)^\top X)^2(v^\top X)^2]}{\bE_q[(v^\top X)^2]}\leq \sigma^2,
\end{align}
where $\theta(p) = \argmin_{\theta\in\bR^d} \bE_p[(Y-\theta^\top X)^2]$ is the optimal regression parameters for $p$. The corresponding 
quasi-gradient is
\begin{equation}
    g_2(X; q) = (Y-X^\top \theta(q))^2(v^\top X)^2, \text{ where } v\in\argmax_{v\in\bR^d} \frac{\bE_q[ (Y-X^\top \theta(q))^2(v^\top X)^2]}{\bE_q[(v^\top X)^2]}.
\end{equation}
which we show in Section~\ref{subsec.low_regret_linreg} has $C = 3$ when $64\kappa^3\epsilon<1$.

Other robust estimation problems have been studied such as sparse mean estimation, sparse PCA and moment estimation~\cite{li2017robust, diakonikolas2019outlier, li2018principled}. For most cases we are aware of, we can similarly construct generalized quasi-gradients and obtain efficient algorithms. As a concrete example we exhibit quasig-radients for sparse mean estimation in   Theorem~\ref{thm.generalized_sparse}.

\subsection{Efficient algorithms from generalized quasi-gradients}\label{sec.introalgorithms}

Having constructed (strict) generalized quasi-gradients for several robust estimation problems, we next show that such generalized quasi-gradients 
enable efficient optimization. Specifically, any algorithm with vanishing regret $\sum_t \langle g(q_t), q_t-p_S\rangle  = o(t)$ as $t \to \infty$ converges to an approximate global minimum, assuming $g$ is a strict generalized quasi-gradient. Typically, any online learning algorithm 
will yield vanishing regret, but the robust setting is complicated by the fact that online convergence rates typically depend on the 
maximum norm of the gradients, and an adversary can include outliers that make these gradients arbitrarily large. We provide two 
strategies to handle this: explicit low-regret with na\"{i}ve pruning, and filtering. The first removes large points as a pre-processing 
step, after which we can employ standard regret bounds; the second picks the step size carefully to ensure convergence in 
$O(\epsilon n)$ steps even if the gradients can be arbitrarily large. Both algorithms are a form of gradient descent on $q$ using the generalized quasi-gradients. %

For the explicit low-regret algorithm, after the gradient step we project the distribution back to the set of deleted distribution $\Delta_{n, \epsilon} = \{q\mid \sum_{i=1}^n q_i = 1, 0\leq q_i \leq \frac{1}{(1-\epsilon)n} \}$ after one-step update. This explicitly ensures that $q\in\Delta_{n, \epsilon}$. The performance of explicit low-regret algorithms are analyzed in Lemma~\ref{lem.explicit_condition}. 

For the filter algorithm, we only project the distribution back to the probability simplex $\Delta_n =\{q\mid \sum_{i=1}^n q_i = 1, \forall i\in[n], q_i\geq 0 \}$. We show that if the strict generalized quasi-gradient is coordinate-wise non-negative with appropriate parameters~(Lemma~\ref{lem.filter_condition}), then the algorithm will output some $q$ with $\TV(q, p_S)\leq \epsilon/(1-\epsilon)$. The set of $q$ satisfying this 
property is a supserset of $\Delta_{n,\epsilon}$, but is exactly what we need for statistical inference. 
Our analysis closely follows previous analyses (see e.g.~\citet{li2018principled, jerry2019lecnotes, steinhardt2018robust, diakonikolas2017being}), but we provide tighter bounds 
at several points that lead to better breakdown point. %

Both algorithms converge to approximate global optima, but need different assumptions to achieve fast convergence. 
The explicit low-regret algorithm requires us to identify and remove bad points that can blow up the gradient, which 
is only possible in some settings. The filtering algorithm works if the strict generalized quasi-gradients are non-negative with appropriate parameters, which again only holds in some settings. Together, however, these cover all the settings we need for our analysis.

\paragraph{Concrete algorithmic results.}
Our result for mean estimation with bounded covariance provides an efficient algorithm with  breakdown point $1/2$ and iteration complexity $\epsilon n$. Our analysis  is the first  that achieves both optimal breakdown point and  optimal rate  $\Theta(\sqrt{\epsilon})$ for $\epsilon\leq 1/4$ in this task.

For mean estimation with near identity covariance, the projected gradient algorithm has breakdown point $1/3$ and iteration 
complexity $\tilde{O}(d/\epsilon^2)$ (Theorem~\ref{thm.low_regret_mean_identity_cov}), which improves the iteration complexity of $\tilde{O}(nd^3/\epsilon)$ in the 
concurrent work of \citet{cheng2020highdimensional}, since $\epsilon \geq 1/n$ without loss of generality. The breakdown point is also consistent with the lower bound in Theorem~\ref{thm.informalboundedmean} if we allow arbitrary initialization. %

The generalized quasi-gradients for linear regression immediately yield a filtering algorithm that achieves estimation error 
$O(\epsilon)$ for $\epsilon < 1/(200\kappa^3)$ under certified hypercontractivity (Theorem~\ref{thm.linreg_implicit}), which is optimal and improves over the previous bound of 
$O(\sqrt{\epsilon})$ in~\citet{klivans2018efficient}. We similarly obtain a filtering algorithm for 
joint mean and covariance estimation, which matches the $O(\epsilon^{3/4})$ rate in mean estimation and $O(\sqrt{\epsilon})$ in covariance estimation for $\epsilon \leq 1/(4\kappa^2)$ in \citet{kothari2017outlier} but with a 
simpler %
algorithm (Theorem~\ref{thm.filtering_joint}).

\subsection{Notation and discussion on the corruption model}\label{subsec.corruption_model}

{\bf Notations:} We use $X$ for random variables, $p$ for the population distribution, and $p_n$ for the corresponding empirical distribution from $n$ samples. Blackbold letter $\bbE$ is used for expectation. We write $A\lesssim B$ to denote that $A \leq C B$ for an absolute constant $C$. We let $\mu_p = \bE_p[X]$ and $\Sigma_p = \bE_p[(X-\mu_p)(X-\mu_p)^\top]$ denote the mean and covariance of a distribution $p$. We also use $\mathsf{Cov}(X)$ to denote the covariance of a random variable $X$. %

We use $\mathsf{TV}(p,q) = \sup_{A} p(A) - q(A)$ to denote the total variation distance between $p$ and $q$.  %
We use $ \mathsf{supp}(\cdot)$ to denote the support of a distribution,  $\mathsf{conv}(\cdot)$ to denote convex hull. We say that a distribution $q$ is an \emph{$\epsilon$-deletion} of $p$ if for any set $A$, $q(A)\leq p(A)/(1-\epsilon)$. This implies that $\TV(p,q)\leq \epsilon$ since $\TV(p,q) = \sup_{A} q(A)-p(A) \leq \sup_A \epsilon q(A) \leq \epsilon$. We use $\Delta_n = \{p\mid \sum_{i=1}^n p_i = 1\}$ to denote the probability simplex. 

We write $f(x) = O(g(x))$ for $x\in A$ if   there exists some positive real number $M$   such that $|f(x)|\leq M g(x)$ for all $x\in A$. If $A$ is not specified, we have $|f(x)|\leq M g(x)$ for all $x\in [0,+\infty)$ (thus the notation is non-asymptotic).  We use $\tilde O(\cdot)$ to be the big-$O$ notation ignoring logarithmic factors. 

We discuss our formal corruption model here. In the traditional {finite-sample} total-variation corruption model in~\cite{donoho1982breakdown}, it is assumed that there exists a set of $n$ good samples, and the adversary is allowed to either add or delete an $\epsilon$ fraction of points. In contrast, throughout the paper we assume that there exists a set of $n$ possibly corrupted samples, and a set $S$ of $(1-\epsilon)n$ of them are good samples. The final goal is to estimate some property of the good samples, assuming that the good samples  
satisfy some nice property $F(p_S) \leq \xi$, where $p_S$ is the empirical distribution. %

Although our formulation only allows the adversary to add points, for all the tasks we consider, the property 
$F(p_S) \leq \xi$ is stable under deletions. For instance, an $\epsilon$-deletion of a bounded-covariance distribution also has 
small covariance. Therefore, our results also apply to the total variation setting (additions and deletions) without loss of generality: 
if $S^*$ is the original set of all good points, and $S$ is the remaining set after deletions, then $F(p_S)$ is small whenever 
$F(p_{S^*})$ is small.
This point is shown in more detail in~\citet{steinhardt2018resilience, zhu2019generalized} in the form of a \emph{generalized resilience}
property.   %

Throughout the paper, we only impose assumptions on the true empirical distribution $p_S$ instead of the population distribution. However, 
for all of the assumptions we consider, \citet{zhu2019generalized} show that if they hold for the population distribution then they also 
hold in finite samples for large enough $n$.
The deterministic finite-sample setting frees us from probabilistic considerations and lets us focus on the deterministic optimization 
problem, and directly implies results in the statistical setting via the aforementioned generalization bounds.

\section{Mean estimation: a landscape theory}\label{sec.landscape}

In this section, we study the  landscape of the optimization problem induced by mean estimation. We first show that any first-order stationary point is an approximate global minimum, and then show that it suffices to have an approximate first-order stationary point to guarantee approximate global minimum.

We start by analyzing mean estimation with bounded covariance  (Example~\ref{example.mean_bdd_cov}). 
We consider the  optimization problem of minimizing $\|\Sigma_q\|$ subject to $q\in \Delta_{n,\epsilon}$. 
Since we have the representation 
$\|\Sigma_q\|_2 =  \sup_{v\in\bR^d, \|v\|_2=1} \sum_{i=1}^n q_i(v^\top X_i)^2-(v^\top \mu_q)^2$, 
the optimization problem can be formulated as 
\begin{align}
    \min_{q} \,  & \sup_{v\in\bR^d, \|v\|_2=1}  \sum_{i=1}^n q_i(v^\top X_i)^2-(v^\top \mu_q)^2 \nonumber \\ \label{eqn.KKT_min}
    \text{s.t.}\,  & q \in \Delta_{n, \epsilon} %
\end{align}

While $\Delta_{n, \epsilon}$ is a convex set, the objective function  is non-convex. In the following sections, we show that any stationary point is %
an approximate {global minimum} for this non-convex problem if and only if the corruption level $\epsilon<1/3$. We further show that 
approximate stationary points are also approximate global minima, with slightly worse breakdown point.

\subsection{Stationary points are approximate global minimum}

We first recall the definition of first-order stationary points for locally Lipschitz functions~\citep[Proposition 2.4.3 and Corollary]{clarke1990optimization}~\cite{bian2017optimality, lacoste2016convergence}.
\begin{definition}[First-order stationary points]
Consider the constrained optimization problem $\min_{x\in A} F(x)$, where $A$ is a closed convex set and $F(\cdot): B\mapsto \bR$ is a locally-Lipschitz function with domain $B\supset A$. We say that $x\in A$ is a first-order stationary point if  there exists $g\in \partial F(x)$ such that
\begin{align}\label{eqn.first_order}
    \langle g, x-y  \rangle \leq 0, \forall y \in A,
\end{align}
where $\partial F(x) $ is the  Clarke subdifferential of the function $F(x)$ on $B$, which is defined in Definition~\ref{def.clarke_subdifferential}
\end{definition}

We interpret this definition for the minimization problem~(\ref{eqn.KKT_min}):
\begin{lemma}\label{lem.stationary_quasi_stationary_mean}
If $q$ is a first-order stationary point of \eqref{eqn.KKT_min}, then for any $p\in\Delta_{n, \epsilon}$, there exists some $v \in \mathbf{R}^d, \|v\|_2 = 1$ such that 
\begin{align}\label{eqn.q_mean_bdd_stationary}
    \bE_q[(v^\top(X-\mu_q))^2] \leq  \bE_{p}[(v^\top(X-\mu_q))^2].
\end{align}
Moreover, $v$ is a principal  eigenvector of $\Sigma_q$: $v \in\argmax_{\|v\|_2}   \bE_q[(v^\top(X-\mu_q))^2]$.
\end{lemma}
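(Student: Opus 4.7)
The plan is to rewrite the objective as a pointwise supremum of smooth functions and apply Danskin's theorem (for the Clarke subdifferential of a max function) to translate the abstract stationarity condition $\langle g, q-p\rangle \le 0$ into a concrete inequality on variances. Write
\[
F(q) \;\triangleq\; \|\Sigma_q\| \;=\; \sup_{v\in\bR^d,\,\|v\|_2=1} h_v(q), \qquad h_v(q) \;=\; \sum_{i=1}^n q_i(v^\top X_i)^2 - \Big(\sum_{i=1}^n q_i v^\top X_i\Big)^2,
\]
so that $F$ is the supremum of a family of smooth functions of $q$. Let $V(q)\triangleq\{v : \|v\|_2=1,\,h_v(q)=F(q)\}$ denote the set of principal unit eigenvectors of $\Sigma_q$.

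Next I would compute the partial derivatives explicitly: setting $a_i = v^\top X_i$ and $\bar a = v^\top \mu_q$,
\[
\frac{\partial h_v}{\partial q_i}(q) \;=\; a_i^2 - 2\bar a\, a_i \;=\; (a_i-\bar a)^2 - \bar a^2 \;=\; (v^\top(X_i-\mu_q))^2 - (v^\top\mu_q)^2.
\]
Since $h_v$ is smooth and $V(q)$ is compact, the Clarke subdifferential of $F$ at $q$ is
\[
\partial F(q) \;=\; \convex\!\big\{\nabla h_v(q) : v\in V(q)\big\}.
\]
Thus, the first-order stationarity of $q$ gives weights $\lambda_k\ge 0$ with $\sum_k \lambda_k=1$ and directions $v_k\in V(q)$ such that $g=\sum_k \lambda_k \nabla h_{v_k}(q)$ satisfies $\langle g,q-p\rangle\le 0$ for every $p\in\Delta_{n,\epsilon}$.

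To finish, I would simplify $\langle \nabla h_v(q), q-p\rangle$. Because $p,q$ are probability vectors, the $(v^\top\mu_q)^2$ term contributes $0$ to $\langle \nabla h_v(q),q-p\rangle$, leaving
\[
\langle \nabla h_v(q),\, q-p\rangle \;=\; \bE_q[(v^\top(X-\mu_q))^2] - \bE_p[(v^\top(X-\mu_q))^2].
\]
Applying this to each $v_k$ and taking the convex combination, the stationarity inequality becomes
\[
\sum_k \lambda_k\Big(\bE_q[(v_k^\top(X-\mu_q))^2] - \bE_p[(v_k^\top(X-\mu_q))^2]\Big) \;\le\; 0.
\]
Hence at least one index $k$ must satisfy the desired inequality, and the corresponding $v_k\in V(q)$ is by construction a principal eigenvector of $\Sigma_q$, completing the proof.

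The main obstacle I anticipate is justifying the Danskin/envelope representation of $\partial F(q)$ rigorously for the Clarke subdifferential in Definition~\ref{def.clarke_subdifferential}; once that is in hand, the rest is a short computation exploiting the fact that $\sum_i(q_i-p_i)=0$ cancels the offending $(v^\top\mu_q)^2$ term. Given that the paper explicitly cites~\citep{clarke1990optimization} for locally Lipschitz functions, I would simply invoke the standard max-function formula there and focus the argument on the gradient computation and the averaging step.
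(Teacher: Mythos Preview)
Your proposal is correct and follows essentially the same approach as the paper: both invoke Danskin's formula to express the Clarke subdifferential as a convex combination over principal eigenvectors, compute the same gradient $(v^\top(X_i-\mu_q))^2 - (v^\top\mu_q)^2$, use $\sum_i(q_i-p_i)=0$ to drop the constant offset, and then extract a single $v$ from the convex combination. The paper packages the convex combination as a matrix $V=\sum_k \alpha_k v_k v_k^\top$ rather than scalar weights, but the logic is identical.
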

\begin{proof}
Let $F(q) =  \sup_{v\in\bR^d, \|v\|_2=1}  \sum_{i=1}^n q_i(v^\top X_i)^2-(v^\top \mu_q)^2$ be defined on $\bR^n$.
From Danskin’s formula~\citep[Theorem 10.22]{clarke2013functional}, we know that 
the subdifferential of $F(q)$ with respect to $q_i$ is   %
\begin{align}\label{eqn.KKT_huge_vector}
    \partial_{q_i} F(q) =  (X_i-\mu_q)^\top V (X_i-\mu_q) - \mu_q^\top V\mu_q,  %
  \end{align}
where $V = \sum_{i}\alpha_i v_iv_i^\top$ is a convex combination of supremum-achieving $v_i\in \argmax_{\|v\|_2=1} \sum_{i=1}^n q_i (v^\top(X_i - \mu_q))^2$.   
By taking $x=q$, $y = p$ in~\eqref{eqn.first_order}, we have
\begin{align}
    \bE_q[(X-\mu_q)^\top V(X-\mu_q)]\leq \bE_{p}[(X-\mu_q)^\top V(X-\mu_q)].
\end{align} 
Since the equality holds for a combination of $v_i$, it must hold for some single $v_i$ that maximizes $\bE_q[(v^\top (X-\mu_q))^2]$. Thus we derive the conclusion. %
\end{proof}

Define $p_*$ as the global minimum of the optimization problem~\eqref{eqn.KKT_min}.
By taking $p = p_*$ in  Equation~\eqref{eqn.q_mean_bdd_stationary}, we know that $\|\Sigma_q\|=\bE_q[(v^\top(X-\mu_q))^2]\leq \bE_{p_*}[(v^\top(X-\mu_q))^2]$.  With this condition, 
 we show in the following theorem that any  first-order stationary point  for the minimization problem~(\ref{eqn.KKT_min}) is an approximate global minimum.

\begin{theorem}[Stationary points are approximate global minimum]\label{thm.low_regret_mean_relation}
Assume $\epsilon \in[0, 1/3)$.  Then for any $q\in\Delta_{n, \epsilon}$ that satisfies~\eqref{eqn.q_mean_bdd_stationary},   we have %
\begin{align}\label{eqn.approximation_ratio_mean_bdd}
\|\Sigma_q\|\leq \left(\frac{1-\epsilon}{1-3\epsilon}\right)^2 \|\Sigma_{p_{*}}\|,
\end{align} 
which is tight in that there exists a first-order stationary point $q\in\Delta_{n, \epsilon}$ such that $\|\Sigma_q\| = \left(\frac{1-\epsilon}{1-3\epsilon}\right)^2 \|\Sigma_{p_{*}}\|$ for some set of observations $X_1,X_2,\ldots,X_n$. 
\end{theorem}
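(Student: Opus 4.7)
The plan is to apply the stationarity hypothesis~\eqref{eqn.q_mean_bdd_stationary} at $p = p_*$ and then control the resulting mean-shift term via a resilience argument tailored to the set $\Delta_{n,\epsilon}$. First, I would observe that any two $q,p \in \Delta_{n,\epsilon}$ satisfy $\TV(q,p) \leq \alpha_0 := \epsilon/(1-\epsilon)$: since $q_i,p_i \leq 1/((1-\epsilon)n)$, one has $\sum_i \max(q_i,p_i) \leq 1/(1-\epsilon)$, so the overlap mass is at least $(1-2\epsilon)/(1-\epsilon)$. Taking $p = p_*$ in~\eqref{eqn.q_mean_bdd_stationary} and expanding $v^\top(X-\mu_q) = v^\top(X-\mu_{p_*}) + v^\top(\mu_{p_*}-\mu_q)$ yields
\begin{align*}
\|\Sigma_q\| \;=\; v^\top \Sigma_q v \;\leq\; \bE_{p_*}[(v^\top(X-\mu_q))^2] \;\leq\; \|\Sigma_{p_*}\| + (v^\top(\mu_q-\mu_{p_*}))^2,
\end{align*}
using that $v$ is a unit principal eigenvector of $\Sigma_q$, so $v^\top \Sigma_q v = \|\Sigma_q\|$, together with $v^\top \Sigma_{p_*} v \leq \|\Sigma_{p_*}\|$.

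The main work is a resilience bound on the mean-shift along $v$. Writing $q = (1-\alpha')r + \alpha' s_q$ and $p_* = (1-\alpha')r + \alpha' s_{p_*}$ where $r \propto \min(q,p_*)$ and $\alpha' := \TV(q,p_*) \leq \alpha_0$, the one-dimensional law of total variance along $v$ gives $v^\top \Sigma_q v \geq \alpha'(1-\alpha')(v^\top(\mu_{s_q}-\mu_r))^2$, hence $|v^\top(\mu_q-\mu_r)| = \alpha'|v^\top(\mu_{s_q}-\mu_r)| \leq \sqrt{\alpha'/(1-\alpha')}\sqrt{v^\top \Sigma_q v}$, and symmetrically for $p_*$. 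By the triangle inequality along $v$ and monotonicity of $x \mapsto x/(1-x)$, setting $\beta := \alpha_0/(1-\alpha_0) = \epsilon/(1-2\epsilon)$, I obtain
\begin{align*}
|v^\top(\mu_q-\mu_{p_*})| \;\leq\; \sqrt{\beta}\,\bigl(\sqrt{\|\Sigma_q\|}+\sqrt{\|\Sigma_{p_*}\|}\bigr).
\end{align*}
Substituting into the previous display and letting $x = \sqrt{\|\Sigma_q\|}$, $y = \sqrt{\|\Sigma_{p_*}\|}$ gives $(1-\beta)x^2 - 2\beta xy - (1+\beta)y^2 \leq 0$; solving this quadratic (the identity $\sqrt{\beta^2 + (1-\beta^2)} = 1$ makes it clean) yields $x/y \leq (1+\beta)/(1-\beta) = (1-\epsilon)/(1-3\epsilon)$, which is~\eqref{eqn.approximation_ratio_mean_bdd}. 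The algebra requires $\beta < 1$, equivalent to $\epsilon < 1/3$.

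For the tightness statement, I would construct a one-dimensional sample with three point locations: $(1-2\epsilon)n$ copies of $0$, $\epsilon n$ copies of $A$, and $\epsilon n$ copies of $-B$, where $A = B(1-\epsilon)/(1-3\epsilon)$. The distribution $q$ placing weight $1/((1-\epsilon)n)$ on every copy of $0$ and $A$ satisfies the stationarity criterion because the threshold condition $(1-\alpha_0)A = B + \alpha_0 A$ is realized at equality by this choice of $A,B$, while the symmetric distribution supported on $0$ and $-B$ is a global minimizer; direct computation gives $\|\Sigma_q\|/\|\Sigma_{p_*}\| = A^2/B^2 = ((1-\epsilon)/(1-3\epsilon))^2$.

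The main obstacle I anticipate is the resilience step: one must decompose both $q$ and $p_*$ through their common part and apply the law of total variance along the \emph{specific} direction $v$ furnished by stationarity, then track the bookkeeping carefully enough that the final quadratic factors as sharply as it does. The tightness construction is then essentially forced by requiring equality in each inequality of the resilience chain.
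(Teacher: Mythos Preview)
Your proposal is correct and follows essentially the same route as the paper: apply stationarity at $p_*$, bound the mean shift by a resilience argument (your inline law-of-total-variance step along $v$ is precisely the content of the paper's Lemma~\ref{lem.mean_modulus}, combined with the $\TV$ bound of Lemma~\ref{lem.deletion_TV}), and solve the resulting quadratic in $\sqrt{\|\Sigma_q\|}$. Your tightness construction likewise matches the three-point example the paper develops in Appendix~\ref{sec.lower_KKT}, with $A/B=(1-\epsilon)/(1-3\epsilon)$ chosen so that the stationarity inequality is an equality.
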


\begin{proof}[Proof of Theorem~\ref{thm.low_regret_mean_relation}]
For the supremum achieving $v$ chosen in Lemma~\ref{lem.stationary_quasi_stationary_mean},
we have
\begin{align}
    \|\Sigma_q\| & =  \mathbb{E}_{q}[(v^\top (X-\mu_q)^2)] \\
    & \stackrel{(i)}{\leq} \mathbb{E}_{p_{*}}[(v^\top (X-\mu_q)^2)] \\
    & = \mathbb{E}_{p_{*}}[ (v^\top (X-\mu_{p_{*}})^2) + (v^\top (\mu_q - \mu_{p_{*}}))^2] \\
    & \stackrel{(ii)}{\leq}  \sup_{v\in \mathbf{R}^d, \|v\|_2 \leq 1} \mathbb{E}_{p_{*}}[ (v^\top (X-\mu_{p_{*}})^2)] + \sup_{v\in \mathbf{R}^d, \|v\|_2 \leq 1}(v^\top (\mu_q - \mu_{p_{*}}))^2 \\
    & = \|\Sigma_{p_{*}}\| + \|\mu_q - \mu_{p_{*}}\|^2.
\end{align}
Here (i) comes from Lemma~\ref{lem.stationary_quasi_stationary_mean}, (ii) comes from substituting $v$ with the largest unit-norm vector.
To bound $\|\mu_q-\mu_{p_*}\|$, we introduce the following two lemmas. The first lemma upper bounds $\|\mu_q - \mu_{p^*}\|_2$ in terms 
of $\TV(q, p^*)$, while the second establishes that $\TV(q, p^*)$ is small. These types of results are standard in literature~\citep[Lemma 2.1, Lecture 4]{jerry2019lecnotes},~\citep[Lemma C.2]{zhu2019generalized},~\cite{diakonikolas2017being, steinhardt2018robust, dong2019quantum}. Here we provide the tight results that improve over the existing results:  %
\begin{lemma}\label{lem.mean_modulus}
For any distributions $p, q$ with $\TV(p, q)\leq \epsilon$, we have
\begin{align}
    \|\mu_p - \mu_q\|\leq \sqrt{\frac{\|\Sigma_p\| \epsilon}{1-\epsilon}} + \sqrt{\frac{\|\Sigma_q\| \epsilon}{1-\epsilon}} .
\end{align}
\end{lemma}
\begin{lemma}\label{lem.deletion_TV}
For a distribution $p_n$, suppose that $q \in \Delta_{n,\epsilon_1}$ and $q' \in \Delta_{n,\epsilon_2}$. 
Then, 
\begin{align}
    \TV(q,q') \leq \frac{\max\{\epsilon_2,\epsilon_1\}}{1-\min\{\epsilon_2,\epsilon_1\}}. 
\end{align}
\end{lemma}
The proofs are deferred to Appendices~\ref{proof.mean_modulus} and \ref{sec.proof_lemma_sec2}. 
Since both $q$ and $p_{*}$ are $\epsilon$-deletions of corrupted distribution $p_n$, from Lemma~\ref{lem.deletion_TV}  we have $\TV(q, p_{*})\leq \frac{\epsilon}{1-\epsilon}$. Combining it with Lemma~\ref{lem.mean_modulus}, we have 
\begin{align}
    \|\Sigma_q\|\leq \|\Sigma_{p_{*}}\| + \left(\sqrt{\frac{\|\Sigma_q\|\epsilon}{1-2\epsilon}} + \sqrt{\frac{\|\Sigma_{p_{*}}\|\epsilon}{1-2\epsilon}} \right)^2.
\end{align}
Solving the above inequality on $\|\Sigma_q\|$, we know that when $\epsilon\in[0, 1/3)$,
\begin{align}\label{eqn.tight_ratio}
    \|\Sigma_q\|\leq \left(\frac{1-\epsilon}{1-3\epsilon}\right)^2 \|\Sigma_{p_{*}}\|
\end{align}
We defer the result of  tightness of~\eqref{eqn.tight_ratio} to Appendix~\ref{sec.lower_KKT}, and illustrate the example in Figure~\ref{fig:KKT}.
\end{proof}

Since we always have $\|\Sigma_{p_*}\|\leq \|\Sigma_{p_S}\|$, 
Equation~\eqref{eqn.approximation_ratio_mean_bdd} also implies\footnote{In fact, $p_*$ and $p_S$ are interchangeable throughout the section, i.e. for all the results that relate $q$ and $p_*$, it is also true for $q$ and $p_S$.}   $\|\Sigma_q\|\leq \left(\frac{1-\epsilon}{1-3\epsilon}\right)^2 \|\Sigma_{p_{S}}\|$. The approximation ratio on covariance matrix is exactly tight, which indicates that a $1/3$ corruption level is tight for stationary points to be approximate global minimum: when $\epsilon\geq 1/3$, there exist cases when a local minimum is arbitrarily far from the true mean. We   illustrate this phenomenon in Figure~\ref{fig:KKT}, and provide a formal analysis in Appendix~\ref{sec.lower_KKT}. 

\begin{figure}
    \centering
    \includegraphics[width=\linewidth]{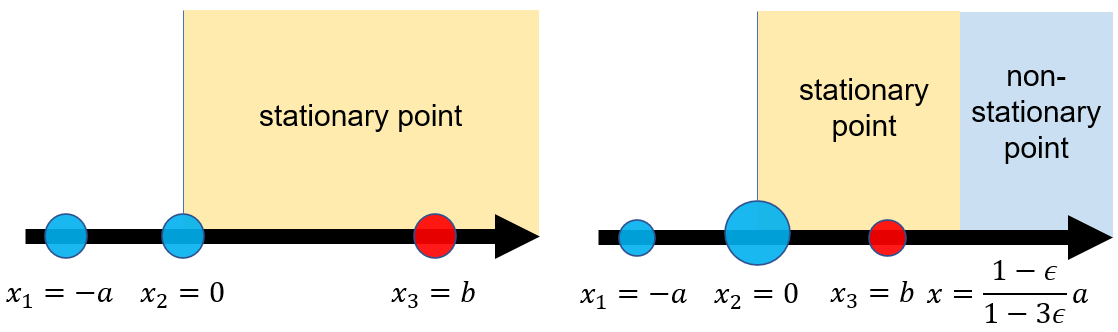}
    \caption{Illustration of the stationary points under different corruption level. The blue points are probability mass on $p_{*}$ (or $p_S$) and the red point is added by adversary. Both $x_1$ and $x_3$ have mass $\epsilon$ and $x_2$ has mass $1-2\epsilon$ in corrupted distribution $p$. Left: When $\epsilon = 1/3$,  the three points share equal probability. As long as $a>0$,   deleting $x_1$ and keeping the rest two points will yield a valid stationary point (in fact, a local minimum). Thus the adversary can drive the mean to infinity. Right: When $\epsilon<1/3$,   deleting $x_1$  only yields a valid stationary point when $b<\frac{1-\epsilon}{1-3\epsilon}\cdot a$. Thus the adversary cannot  create stationary points far from mean.} %
    \label{fig:KKT}
\end{figure}
As a direct  corollary of Theorem~\ref{thm.low_regret_mean_relation} and Lemma~\ref{lem.mean_modulus}, %
since $\TV(q, p_S)\leq \epsilon/(1-\epsilon)$ and both $q$ and $p_S$ have bounded covariance, we can bound the distance between the two means.
\begin{corollary}\label{cor.mean_bdd}
For any $q\in\Delta_{n, \epsilon}$ that satisfies~\eqref{eqn.q_mean_bdd_stationary}, for the true distribution $p_S$ we have
\begin{align}
   \| \mu_q -  \mu_{p_{S}}\|_2 \leq  \sigma\sqrt{\frac{\epsilon}{1-2\epsilon}} +\sigma \sqrt{\frac{\epsilon(1-\epsilon)^2}{(1-3\epsilon)^2(1-2\epsilon)}} = O\left ( \sigma \sqrt{\frac{\epsilon}{1-2\epsilon}} \cdot \frac{1-\epsilon}{1-3\epsilon} \right).
\end{align}
\end{corollary}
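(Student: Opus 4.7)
The plan is to assemble Corollary~\ref{cor.mean_bdd} as a direct application of the three ingredients already established in the section: the stationarity bound on the covariance from Theorem~\ref{thm.low_regret_mean_relation}, the modulus of continuity of the mean under $\TV$ from Lemma~\ref{lem.mean_modulus}, and the $\TV$ bound on two $\epsilon$-deletions of the same empirical distribution from Lemma~\ref{lem.deletion_TV}. No substantive new argument is needed — the work is purely arithmetic bookkeeping.

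First, I would observe that both $q$ and $p_S$ lie in $\Delta_{n,\epsilon}$ by assumption (the former by hypothesis, the latter because $p_S$ is obtained from the corrupted empirical distribution $p_n$ by deleting exactly the $\epsilon n$ adversarial points). Applying Lemma~\ref{lem.deletion_TV} with $\epsilon_1 = \epsilon_2 = \epsilon$ then yields
\begin{align}
\TV(q, p_S) \leq \frac{\epsilon}{1-\epsilon}.
\end{align}
Next, Lemma~\ref{lem.mean_modulus} applied to the pair $(q, p_S)$ with this $\TV$ bound gives
\begin{align}
\|\mu_q - \mu_{p_S}\|_2 \leq \sqrt{\frac{\|\Sigma_q\| \cdot \epsilon/(1-\epsilon)}{1 - \epsilon/(1-\epsilon)}} + \sqrt{\frac{\|\Sigma_{p_S}\| \cdot \epsilon/(1-\epsilon)}{1 - \epsilon/(1-\epsilon)}},
\end{align}
and the elementary simplification $\frac{\epsilon/(1-\epsilon)}{1-\epsilon/(1-\epsilon)} = \frac{\epsilon}{1-2\epsilon}$ reduces this to
\begin{align}
\|\mu_q - \mu_{p_S}\|_2 \leq \sqrt{\frac{\|\Sigma_q\|\,\epsilon}{1-2\epsilon}} + \sqrt{\frac{\|\Sigma_{p_S}\|\,\epsilon}{1-2\epsilon}}.
\end{align}

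Finally, I would invoke Theorem~\ref{thm.low_regret_mean_relation} (using the footnote remark that $p_*$ and $p_S$ are interchangeable throughout this section, so that $\|\Sigma_q\| \leq ((1-\epsilon)/(1-3\epsilon))^2 \|\Sigma_{p_S}\|$) together with the baseline assumption $\|\Sigma_{p_S}\| \leq \sigma^2$ to substitute
\begin{align}
\sqrt{\frac{\|\Sigma_{p_S}\|\,\epsilon}{1-2\epsilon}} \leq \sigma\sqrt{\frac{\epsilon}{1-2\epsilon}}, \qquad \sqrt{\frac{\|\Sigma_q\|\,\epsilon}{1-2\epsilon}} \leq \sigma\sqrt{\frac{\epsilon(1-\epsilon)^2}{(1-3\epsilon)^2(1-2\epsilon)}}.
\end{align}
Adding the two terms yields the precise inequality in the statement, and factoring $\sigma\sqrt{\epsilon/(1-2\epsilon)}$ from the sum recovers the big-$O$ form $O\bigl(\sigma\sqrt{\epsilon/(1-2\epsilon)} \cdot (1-\epsilon)/(1-3\epsilon)\bigr)$.

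There is no real obstacle here: the entire corollary is a bookkeeping exercise that packages Theorem~\ref{thm.low_regret_mean_relation} and Lemma~\ref{lem.mean_modulus} into a single statistical guarantee. The only point requiring a small amount of care is correctly handling the fact that the $\TV$ bound $\epsilon/(1-\epsilon)$, rather than $\epsilon$, enters the modulus-of-continuity inequality — this is exactly what produces the denominator $1-2\epsilon$ in the final expression, and explains why the bound diverges as $\epsilon \to 1/3$ (from the $\Sigma_q$ term, matching the tight regime in Theorem~\ref{thm.low_regret_mean_relation}) rather than as $\epsilon \to 1/2$.
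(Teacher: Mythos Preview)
Your proposal is correct and follows exactly the approach the paper outlines: combine Lemma~\ref{lem.deletion_TV} to get $\TV(q,p_S)\leq \epsilon/(1-\epsilon)$, plug into Lemma~\ref{lem.mean_modulus}, and bound $\|\Sigma_q\|$ via Theorem~\ref{thm.low_regret_mean_relation} (with $p_S$ in place of $p_*$). Your explicit arithmetic, including the simplification $\frac{\epsilon/(1-\epsilon)}{1-\epsilon/(1-\epsilon)}=\frac{\epsilon}{1-2\epsilon}$, is exactly what the paper leaves implicit when it calls this a ``direct corollary.''
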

This corollary shows that any first-order stationary point of the optimization problem is a near-optimal estimator for mean under bounded covariance assumption up to the ratio $(1-\epsilon)/(1-3\epsilon)$,  since $\sigma \sqrt{\frac{\epsilon}{1-2\epsilon}}$ is the information theoretic limit in mean estimation under the bounded covariance assumption up to universal constants for all $\epsilon\in (0,1/2)$ (see~\citet{donoho1988automatic,zhu2019generalized} and Lemma~\ref{lem.mean_modulus}).

\subsection{Approximate stationary points are approximate global minimum}\label{sec.mean_quasi_stationary}

In the previous section, we show that any stationary point is an approximate global minimum. However,  in practice we cannot find an exact stationary point, but only an approximate stationary point. In this section, we show that even when~\eqref{eqn.q_mean_bdd_stationary} only holds approximately, $q$ is still an  approximate global minimum.   This proves to be  important for generalization to other tasks in Section~\ref{sec.low_regret} and algorithm design in Section~\ref{sec.filter}. 

Concretely, we relax the condition~\eqref{eqn.q_mean_bdd_stationary} to the following for $\alpha, \beta\geq 0$: 
\begin{align}\label{eqn.q_mean_bdd_approximate_stationary}
    \bE_q[(v^\top(X-\mu_q))^2] \leq  (1+\alpha) \bE_{p_*}[(v^\top(X-\mu_q))^2] + \beta,
\end{align}
where $v$ is a principal  eigenvector of $\Sigma_q$: $v \in\argmax_{\|v\|_2}   \bE_q[(v^\top(X-\mu_q))^2]$. Compared with~\eqref{eqn.q_mean_bdd_stationary}, we only require the relationship holds for the global minimum $p_*$ instead of all $p$, and  allow both multiplicative error  $\alpha\bE_{p_*}[(v^\top(X-\mu_q))^2]$ and additive error $\beta$.

In fact,
equation~\eqref{eqn.q_mean_bdd_approximate_stationary} is more general than the traditional definition of approximate stationary point where one requires that $\|\partial_q F(q)\|$ small~\cite{dutta2013approximate, cheng2020highdimensional}.  %
We show that as long as $\|\partial_q F(q)\|$ is upper bounded by some $\gamma\geq 0$, the condition~\eqref{eqn.q_mean_bdd_approximate_stationary} holds with $\alpha = 0$, $\beta = O(\epsilon\gamma)$.

Assume for simplicity that $v$ is the only principal  eigenvector of $\Sigma_q$. Then we have $\partial_{q_i} F(q) = (v^\top(X_i-\mu_q))^2 - (v^\top \mu_q)^2$, and  
\begin{align}
     & \bE_q[(v^\top(X-\mu_q))^2] -  \bE_{p_*}[(v^\top(X-\mu_q))^2]  \nonumber \\ 
     = &     \bE_q[(v^\top(X-\mu_q))^2-(v^\top \mu_q)^2] -  \bE_{p_*}[(v^\top(X-\mu_q))^2-(v^\top \mu_q)^2]
     \nonumber \\ 
     = & \sum_{i\in[n]} (q_i - p_{*, i})\cdot \partial_{q_i} F(q) \nonumber \\ 
      \stackrel{(i)}{\leq} & \sqrt{\sum_{i\in[n]} (q_i - p_{*, i})^2} \cdot \|\partial_{q} F(q)\| \nonumber \\ 
       \stackrel{(ii)}{\leq} & \sqrt{\frac{2\epsilon}{(1-\epsilon)^2n}} \cdot \|\partial_{q} F(q)\| \nonumber \\  \stackrel{(iii)}{\leq} & O(\epsilon \cdot \|\partial_{q} F(q)\|). \label{eqn.relation_tradition_stationary}
\end{align}
Here (i) comes from Cauchy Schwarz, (ii) comes from optimizing over all $q, p_{*, i}\in\Delta_{n, \epsilon}$, (iii) comes from that $1/n\leq \epsilon\leq 1/2$. Thus any approximate stationary point with small $\|\partial_{q} F(q)\|$ will also satisfy~\eqref{eqn.q_mean_bdd_approximate_stationary} with $\alpha = 0, \beta = O(\epsilon \cdot \|\partial_{q} F(q)\|)$. 

Now we show that any   point that satisfies~\eqref{eqn.q_mean_bdd_approximate_stationary} is an approximate global minimum.
\begin{theorem}\label{thm.approximate_mean_small}
Assume $\epsilon \in[0, 1/3)$. Define $p_*$ as the global minimum of the optimization problem~\eqref{eqn.KKT_min}. Assume $q\in\Delta_{n, \epsilon}$ and there exists some $v\in\argmax_{\|v\|\leq 1} \bE_q[(v^\top(X-\mu_q))^2]$ such that
\begin{align}
    \bE_q[(v^\top(X-\mu_q))^2] -  \bE_{p_*}[(v^\top(X-\mu_q))^2] \leq \alpha \bE_{p_*}[(v^\top(X-\mu_q))^2] + \beta.
\end{align}
Then for some universal constants $C_1, C_2$, we have
\begin{align} 
\|\Sigma_q\|\leq \left(1 +  \frac{C_1 (\alpha+\epsilon)}{(1-(3+\alpha)\epsilon)^2}\right) \|\Sigma_{p_{*}}\|+ \frac{C_2\beta}{(1-(3+\alpha)\epsilon)^2}.
\end{align} 
\end{theorem}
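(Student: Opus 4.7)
The plan is to closely mirror the proof of Theorem~\ref{thm.low_regret_mean_relation}, tracking how the multiplicative factor $(1+\alpha)$ and the additive constant $\beta$ propagate through the same chain of inequalities. The heart of the argument will again be reducing everything to a one-variable quadratic inequality in $\sqrt{\|\Sigma_q\|}$, but with coefficients that carry explicit dependence on $\alpha$ and $\beta$.

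First, I would expand
\begin{align*}
\bE_{p_*}[(v^\top(X-\mu_q))^2] = \bE_{p_*}[(v^\top(X-\mu_{p_*}))^2] + (v^\top(\mu_q-\mu_{p_*}))^2 \leq \|\Sigma_{p_*}\| + \|\mu_q-\mu_{p_*}\|^2,
\end{align*}
and combine with the hypothesis $\bE_q[(v^\top(X-\mu_q))^2] \leq (1+\alpha)\bE_{p_*}[(v^\top(X-\mu_q))^2] + \beta$ (using that $v$ is a top eigenvector of $\Sigma_q$ so the left hand side equals $\|\Sigma_q\|$) to obtain
\begin{align*}
\|\Sigma_q\| \leq (1+\alpha)\|\Sigma_{p_*}\| + (1+\alpha)\|\mu_q-\mu_{p_*}\|^2 + \beta.
\end{align*}
Since both $q$ and $p_*$ lie in $\Delta_{n,\epsilon}$, Lemma~\ref{lem.deletion_TV} gives $\TV(q,p_*) \leq \epsilon/(1-\epsilon)$, and Lemma~\ref{lem.mean_modulus} then yields $\|\mu_q - \mu_{p_*}\| \leq \sqrt{\|\Sigma_q\|\,\epsilon/(1-2\epsilon)} + \sqrt{\|\Sigma_{p_*}\|\,\epsilon/(1-2\epsilon)}$. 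Setting $x = \sqrt{\|\Sigma_q\|}$, $y = \sqrt{\|\Sigma_{p_*}\|}$, and $c = \epsilon/(1-2\epsilon)$, substitution produces a quadratic inequality of the form
\begin{align*}
\bigl(1 - (1+\alpha)c\bigr)\,x^2 \;\leq\; 2(1+\alpha)c\,xy + (1+\alpha)(1+c)\,y^2 + \beta.
\end{align*}
The leading coefficient $1 - (1+\alpha)c$ is positive precisely when $(3+\alpha)\epsilon < 1$, which exactly matches the factor $(1-(3+\alpha)\epsilon)^2$ in the denominator of the conclusion and is the correct analog of the breakdown condition $\epsilon < 1/3$ in Theorem~\ref{thm.low_regret_mean_relation}.

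The remaining step is to solve this quadratic explicitly for $x$ and then square. I would apply the standard quadratic formula and then use $(u+v)^2 \leq 2u^2 + 2v^2$ to split the contribution of $y^2$ from that of $\beta$, obtaining a bound of the shape $x^2 \leq \bigl(1 + C_1(\alpha+\epsilon)/(1-(3+\alpha)\epsilon)^2\bigr)\,y^2 + C_2\beta/(1-(3+\alpha)\epsilon)^2$. The main obstacle is the algebraic bookkeeping here: one must verify that after solving the quadratic, the multiplicative overhead is genuinely $O((\alpha+\epsilon)/(1-(3+\alpha)\epsilon)^2)$ rather than something cruder. As a sanity check, at $\alpha = \beta = 0$ the inequality reduces to $x^2 \leq \bigl((1-\epsilon)/(1-3\epsilon)\bigr)^2 y^2$, recovering Theorem~\ref{thm.low_regret_mean_relation}; the general case is then a perturbation of this tight bound, and the $(\alpha+\epsilon)$ combination in the numerator arises naturally because $\alpha$ enters the computation wherever $c = \epsilon/(1-2\epsilon)$ enters, so the two share the same structural role.
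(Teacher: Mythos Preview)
Your proposal is correct and follows essentially the same approach as the paper: expand $\bE_{p_*}[(v^\top(X-\mu_q))^2]$, apply Lemmas~\ref{lem.deletion_TV} and~\ref{lem.mean_modulus} to bound $\|\mu_q-\mu_{p_*}\|$, and solve the resulting self-bounding quadratic in $\sqrt{\|\Sigma_q\|}$. The paper's proof is actually less explicit than yours at the final step, simply writing ``solving the above inequality on $\|\Sigma_q\|$'' without spelling out the quadratic formula or the $(u+v)^2\leq 2u^2+2v^2$ split.
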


We defer the proof to Appendix~\ref{proof.approximate_mean_small}. 
For the task of mean estimation with bounded covariance, we want that $\|\Sigma_q\|\leq C\cdot \|\Sigma_{p_S}\|$. This is satisfied when $\alpha$ is constant and $\beta \lesssim  \|\Sigma_{p_S}\|$, in sacrifice of a smaller breakdown point  $1/(3+\alpha)$ .

\subsection{Application to the case of mean estimation with near identity covariance}\label{sec.id_quasi_gradient}

Beyond bounded covariance, we can make the stronger assumption that the covariance is close to the identity on all 
large subsets of the good data and the distribution has stronger tail bound (e.g. bounded higher moments or sub-Gaussianity). This stronger assumption yields tighter bounds~\cite{diakonikolas2017being, zhu2019generalized}. %
We can adapt our previous landscape analysis to this setting as well.

\begin{example}[Mean estimation with near identity covariance]\label{example.mean_id_cov}
Let $\Delta_{S, \epsilon}= \{r\mid \forall i \in [n], r_i \leq \frac{p_{S,i}}{1-\epsilon}\}$ denote the set of $\epsilon$-deletions on $p_S$. 
We assume that the true distribution $p_S$  has near identity covariance, and its mean is stable under deletions, i.e. the following holds for any $r\in\Delta_{S, \epsilon}$:
\begin{align}
 \|\mu_r - \mu_{p_S}\|\leq \rho,  \|\Sigma_{p_S} - I\|\leq \tau.\nonumber 
    \end{align} 
Our goal is to solve the following feasibility problem for some $\tau'\geq \tau$ that may depend on $\epsilon$, 
\begin{align}
    &\textrm{find}   \qquad  \qquad q \nonumber \\
& \textrm{subject to} \quad q\in \Delta_{n,\epsilon},
  \|\Sigma_q\|\leq 1+\tau'.\label{eqn.id_feasibility}
\end{align}
\end{example}
Once we find such a $q$, we have the following lemma to guarantee mean recovery:
\begin{lemma}\label{lem.empirical_id_abbreviated}
Under the same assumption on $p_S$ as Example~\ref{example.mean_id_cov}, any solution $q$ to the  feasibility problem~\eqref{eqn.id_feasibility} satisfies $\|\mu_p -\mu_q\| = O(\rho+\sqrt{\epsilon(\tau+\tau')}+\epsilon)$. 
\end{lemma}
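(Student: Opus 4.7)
The plan is a triangle-inequality argument through a common ``core'' distribution. Set $r_i=\min(q_i,p_{S,i})$, $\eta=1-\sum_i r_i=\TV(q,p_S)$, and decompose $q=(1-\eta)\bar r+\eta\bar q'$, $p_S=(1-\eta)\bar r+\eta\bar p'$ with $\bar r=r/(1-\eta)$ and $\bar q',\bar p'$ of disjoint supports. Since $q,p_S\in\Delta_{n,\epsilon}$, Lemma~\ref{lem.deletion_TV} gives $\eta\le\epsilon/(1-\epsilon)$, and since $\bar r_i\le p_{S,i}/(1-\eta)$, the distribution $\bar r$ is an $\eta$-deletion of $p_S$. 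The mean-resilience hypothesis of Example~\ref{example.mean_id_cov} then yields $\|\mu_{\bar r}-\mu_{p_S}\|\le\rho$, which accounts for the $\rho$ contribution in the target bound.

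By the triangle inequality it remains to bound $\|\mu_q-\mu_{\bar r}\|=\eta\|\mu_{\bar q'}-\mu_{\bar r}\|$. I would pick $v$ to be a unit vector along $\mu_{\bar q'}-\mu_{\bar r}$ and apply the variance identity
\[
v^\top\Sigma_q v=(1-\eta)\,v^\top\Sigma_{\bar r}v+\eta\, v^\top\Sigma_{\bar q'}v+\eta(1-\eta)\|\mu_{\bar q'}-\mu_{\bar r}\|^2,
\]
together with the analogous identity for $p_S$. Subtracting the two cancels the common $(1-\eta)v^\top\Sigma_{\bar r}v$ piece; plugging in $v^\top\Sigma_q v\le 1+\tau'$ and $v^\top\Sigma_{p_S}v\ge 1-\tau$ (from $\|\Sigma_{p_S}-I\|\le\tau$), and using the identity $\eta(\mu_{\bar p'}-\mu_{\bar r})=\mu_{p_S}-\mu_{\bar r}$ which gives $\eta|v^\top(\mu_{\bar p'}-\mu_{\bar r})|\le\rho$, I would derive $\eta(1-\eta)\|\mu_{\bar q'}-\mu_{\bar r}\|^2\lesssim\tau+\tau'+O(\eta)+\rho^2/\eta$. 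Converting back via $\|\mu_q-\mu_{\bar r}\|=\eta\|\mu_{\bar q'}-\mu_{\bar r}\|$ and invoking $\eta\le\epsilon/(1-\epsilon)$ yields $\|\mu_q-\mu_{\bar r}\|\lesssim\sqrt{\epsilon(\tau+\tau')}+\epsilon+\rho$, and combining with the first step finishes the proof.

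The main obstacle is extracting the $\sqrt{\epsilon(\tau+\tau')}$ scaling rather than the naive $O(\sqrt{\epsilon})$ rate that a direct application of Lemma~\ref{lem.mean_modulus} with $\|\Sigma_q\|\le 1+\tau'$ would give. The crucial step is the subtraction of the two variance identities, which cancels the common $(1-\eta)v^\top\Sigma_{\bar r}v$ term so that only the slack $\tau+\tau'$ between the upper bound on $v^\top\Sigma_q v$ and the lower bound on $v^\top\Sigma_{p_S}v$ enters. This genuinely uses the near-identity property of $\Sigma_{p_S}$ (both $\|\Sigma_{p_S}\|\le 1+\tau$ and $\lambda_{\min}(\Sigma_{p_S})\ge 1-\tau$), not just a one-sided bound. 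Care is still required to handle the residual term $\eta\, v^\top\Sigma_{\bar p'}v$ that appears after the subtraction; controlling it tightly, rather than by the trivial $\eta\, v^\top\Sigma_{\bar p'}v\le v^\top\Sigma_{p_S}v\le 1+\tau$ (which would reintroduce an $O(1)$ contribution), is what produces the $O(\epsilon)$ residual in the final bound instead of an $O(\sqrt{\epsilon})$ one.
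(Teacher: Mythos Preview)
Your decomposition and outline match the paper's actual proof (carried out in Lemma~\ref{lem.empirical_identity_cov_modulus}) closely: both pass through the common core $\bar r=\min(q,p_S)/(1-\eta)$, invoke mean-resilience for the $\|\mu_{\bar r}-\mu_{p_S}\|\le\rho$ piece, and pick $v$ along $\mu_{\bar q'}-\mu_{\bar r}$ inside a second-moment decomposition of $q$, finishing with a quadratic inequality in $\eta\|\mu_{\bar q'}-\mu_{\bar r}\|$.

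There is, however, a genuine gap at exactly the step you flag. You need $\eta\,v^\top\Sigma_{\bar p'}v=O(\tau+\eta)$, but mean-resilience together with $\|\Sigma_{p_S}-I\|\le\tau$ gives no such control: $\bar p'$ is an arbitrary $\eta$-fraction of $p_S$, and its variance in direction $v$ can be as large as $\Theta(1/\eta)$, which would collapse your bound back to $O(\sqrt{\epsilon})$. The paper does \emph{not} resolve this by a clever inequality; instead, the detailed lemma it proves carries the stronger hypothesis
\[
\lambda_{\min}\bigl(\mathbb{E}_r[(X-\mu_{p_S})(X-\mu_{p_S})^\top]\bigr)\ge 1-\tau\qquad\text{for every $\epsilon$-deletion $r$ of $p_S$,}
\]
and uses it directly: rather than subtracting two mixture identities, it lower-bounds $(1-\eta)\,v^\top\mathbb{E}_{\bar r}[(X-\mu_{p_S})(X-\mu_{p_S})^\top]v\ge(1-\eta)(1-\tau)$ inside $v^\top\Sigma_q v$ and combines with $v^\top\Sigma_q v\le 1+\tau'$. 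Your subtraction route is algebraically equivalent---via the second-moment decomposition of $p_S$, this deletion-stable lower bound is precisely what forces $\eta\,v^\top\mathbb{E}_{\bar p'}[(X-\mu_{p_S})(X-\mu_{p_S})^\top]v\le 2\tau+\eta$, hence $\eta\,v^\top\Sigma_{\bar p'}v\le 2\tau+\eta$---but you must invoke this extra second-moment stability; two-sided control of $\Sigma_{p_S}$ alone is not enough.
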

We defer statement with detailed constants and the proof of the above lemma to Lemma~\ref{lem.empirical_identity_cov_modulus}, where we provide  a tighter analysis   than~\citet[Lemma E.3]{zhu2019generalized}.

The optimization is the same as in~\eqref{eqn.KKT_min}; only the assumptions on $p_S$ are different. 
Applying Theorem~\ref{thm.low_regret_mean_relation}, we therefore know that  the stationary point $q$ satisfies
\begin{align}
    \|\Sigma_q\|_2\leq \left(\frac{1-\epsilon}{1-3\epsilon}\right)^2\|\Sigma_{p_*}\|_2\leq \left(\frac{1-\epsilon}{1-3\epsilon}\right)^2\cdot (1+\tau) \leq 1 + \frac{C(\tau+\epsilon)}{(1-3\epsilon)^2}
 \end{align}
 for some universal constant $C$. 
 Thus we can guarantee that $\tau'$ is close to $\tau$ up to some constant when $\tau\gtrsim \epsilon$. The breakdown point $1/3$ is still tight in this case.  Indeed, the counterexample and argument in Figure~\ref{fig:KKT} and Appendix~\ref{sec.lower_KKT} still applies. %

The result for approximate stationary points in Theorem~\ref{thm.approximate_mean_small} also applies here. We know that any approximate stationary point $q$ that satisfies~\eqref{eqn.q_mean_bdd_approximate_stationary} will satisfy
\begin{align}
    \|\Sigma_q\|\leq \left(1 +  \frac{C_1 (\alpha+\epsilon)}{(1-(3+\alpha)\epsilon)^2}\right) \|\Sigma_{p_{*}}\|+ \frac{C_2\beta}{(1-(3+\alpha)\epsilon)^2} \leq 1 + \frac{C_3(\tau+\epsilon+\alpha+\beta)}{(1-(3+\alpha)\epsilon)^2}
\end{align}
 for some universal constants $C_1, C_2, C_3$. %

We interpret the assumptions and the results in Example~\ref{example.mean_id_cov} under concrete cases as below. Assume the true population distribution is  sub-Gaussian,  we have $\rho = C_1\cdot \epsilon \sqrt{\log(1/\epsilon)}, \tau = C_2\cdot \epsilon \log(1/\epsilon)$~\cite{diakonikolas2016robust, zhu2019generalized, cheng2020highdimensional}. Thus when $ \alpha \lesssim \tau$, $\beta\lesssim \tau $, we know that $\tau'$ is close to $\tau$ up to some constant.  From   Lemma~\ref{lem.empirical_id_abbreviated}   we know that $\|\mu_q - \mu_{p_S}\| = O(\epsilon\sqrt{\log(1/\epsilon)})$. This improves over the bound $O(\sqrt{\epsilon})$ in Corollary~\ref{cor.mean_bdd} since we have imposed stronger tail bound assumption.

Furthermore, if we know that $q$ is an approximate stationary point with $\|\partial_q F(q)\|\lesssim \log(1/\epsilon)$, from~\eqref{eqn.relation_tradition_stationary} we know that $\alpha = 0, \beta\lesssim \epsilon\log(1/\epsilon)$, thus we have  $\|\mu_q - \mu_{p_S}\| = O(\epsilon\sqrt{\log(1/\epsilon)})$. This implies the result in the independent and concurrent work~\citep[Theorem 3.2]{cheng2020highdimensional}. %

\section{From   gradient to generalized quasi-gradient }\label{sec.low_regret}

Given the success of the landscape analysis (Theorem~\ref{thm.low_regret_mean_relation} and~\ref{thm.approximate_mean_small}) for mean estimation, a natural question is  whether the stationary point story holds for other tasks, such as linear regression or joint mean and covariance estimation. %
We might hope that, as for mean estimation, 
all first-order stationary points of minimizing $F(q)$ subject to $q\in \Delta_{n,\epsilon}$ are approximate global minimum. 

We show that this is in general not true. The counterexample is minimizing the hypercontractivity coefficient, which appears as part of linear regression~\cite{zhu2019generalized} and joint mean and covariance estimation~\cite{kothari2017outlier}. The target function $F(q)$ to minimize takes the form of $\sup_{v\in\bR^d} \frac{\bE_{q}[(v^\top X)^4]}{\bE_{q}[(v^\top X)^2]^2}$, as is  defined in~\eqref{eqn.linreg_hyper_F}.  %
We show in Appendix~\ref{sec.KKT_counter} there exist first-order stationary points of minimizing $F(q)$ subject to $q\in \Delta_{n,\epsilon}$ such that its hypercontractivity coefficient is arbitrarily big. %

Instead, we identify a more general property, the existence of \emph{generalized quasi-gradients},  that allows us to handle the new tasks. This also motivates the algorithm design in Section~\ref{sec.filter}, where we use the generalized quasi-gradients as the gradient passed to algorithms. %

\subsection{Generalized quasi-gradients and mean estimation}

In this section, we interpret the result of mean estimation in Section~\ref{sec.landscape} in the lens of generalized quasi-gradients.

Recall that 
in the minimization problem $\min_{q\in \Delta_{n, \epsilon}} F(q)$, the
generalized quasi-gradients, as defined in Definition~\ref{def.generalized_quasi_g}, refers to any $g(X; q)$ such that for all $p,q \in \Delta_{n, \epsilon}$\footnote{Indeed, it suffices to show the implication holds for any $q\in\Delta_{n, \epsilon}$ and $p_S$ instead of any $q, p\in\Delta_{n, \epsilon}$, since we only want to relate $F(q)$ with $F(p_S)$. }:
\begin{align}\label{eqn.sec3_generalized_quasi}
    \bE_q[g(X; q)] - \bE_p[g(X; q)] \leq 0 \implies F(q)\leq C \cdot F(p), 
\end{align}
here $C$ is some constant that may depend on $\epsilon$.  We call it `generalized' quasi-gradient since in the literature, quasi-gradient usually refers to the case when $C=1$ in the implication~\cite{boyd2007subgradient,hazan2015beyond}, which requires the function $F$ to be quasi-convex.
In the case of mean estimation, 
we set the generalized quasi-gradient as
\begin{align}\label{eqn.choice_g_mean}
g(X; q) = (v^\top(X-\mu_q))^2, v\in\argmax_{\|v\|_2\leq 1} \bE_q[(v^\top(X-\mu_q))^2].
\end{align} Theorem~\ref{thm.low_regret_mean_relation} shows that $g(X; q)$ is a valid generalized quasi-gradients for $F(q) = \|\Sigma_q\|$ with approximation ratio $C = (1-\epsilon)^2/(1-3\epsilon)^2$.
For strict generalized quasi-gradient,  the condition in the left of the implication in~\eqref{eqn.implication_strict_generalized_quasi} reduces to~\eqref{eqn.q_mean_bdd_approximate_stationary}.  Theorem~\ref{thm.approximate_mean_small} shows that $g(X; q)$ is also a valid {strict} generalized quasi-gradient with parameter $C_1(\alpha, \beta) = (1+\frac{C_3(\alpha+\epsilon)}{(1-(3+\alpha)\epsilon)^2}), C_2(\alpha, \beta) = \frac{C_4\beta}{(1-(3+\alpha)\epsilon)^2}$ for some universal constant $C_3, C_4$. %

Once we identify a  generalized quasi-gradient $g$ for $F(q)$, it suffices to find some $q$ such that $\bE_q[g] \leq \bE_{p_S}[g]$ to guarantee that $F(q)$ is bounded by $C\cdot F(p_S)$. However, the condition  $\bE_q[g] \leq \bE_{p_S}[g]$ is impossible to check  since we do not know the true distribution $p_S$, 
why is this (strict) generalized quasi-gradient still important?

First, the conditions in strict generalized quasi-gradient can be approached via low-regret algorithms~\cite{arora2012multiplicative, nesterov2009primal}. By viewing $g(X; q)$ as loss, the low-regret algorithms usually provide the guarantee in the form of $\langle g, q-p_S\rangle\leq \alpha\langle  |g|, p_S\rangle + \beta$. Thus if we pick $g$ as a strict generalized quasi-gradient, we know that the output of the low-regret algorithm will guarantee an upper bound on $F(q)$. This is the key idea for algorithm design in Section~\ref{sec.filter}. 

Second, as we have shown in~\eqref{eqn.relation_tradition_stationary}, the condition in strict generalized quasi-gradient 
is more general than traditional approximate stationary points.
Furthermore, as we show throughout this section, stationary point fails to provide a good solution to the optimization problem in general. In the meantime the viewpoint of generalized quasi-gradients enables the flexibility of selecting $g$, and thus succeeds in all the tasks we considered.

Third, 
the termination condition can be based on the function value $F(q)$. As a concrete example, Theorem~\ref{thm.approximate_mean_small} indicates that $\|\Sigma_q\|$ will be small when we reach the condition  $\langle g, q-p_S\rangle\leq \alpha\langle  |g|, p_S\rangle + \beta$. Thus in practice it suffices for us to  check  $\|\Sigma_q\|$ for termination.

Now we show that for the tasks of linear regression and joint mean and covariance estimation, we can identify the generalized quasi-gradients, which enable algorithm design for these tasks in Section~\ref{sec.filter}. 

\subsection{Linear regression }\label{subsec.low_regret_linreg}

To demonstrate the power of generalized quasi-gradients, we first consider robust linear regression:

\begin{example}[Linear regression]\label{example.linreg}
We assume that the true distribution $p_S$  satisfies  $F_1(p_S) \leq \kappa^2, F_2(p_S) \leq \sigma^2$ for  $F_1(q) = \sup_{v\in\bR^d} \frac{\bE_{q}[(v^\top X)^4]}{\bE_{q}[(v^\top X)^2]^2}, F_2(q)=\frac{\bE_q[(Y-\theta(q)^\top X)^2(v^\top X)^2]}{\bE_q[(v^\top X)^2]}$ in~\eqref{eqn.linreg_hyper_F} and~\eqref{eqn.linreg_noise_F}. Our goal is to solve the following feasibility problem for some $\kappa'\geq \kappa, \sigma'\geq \sigma$ that may depend on $\epsilon$, 
\begin{align}
    &\textrm{find}   \qquad  \qquad q \nonumber \\
& \textrm{subject to} \quad q\in \Delta_{n,\epsilon},
   F_1(q) \leq \kappa'^2, F_2(q)\leq \sigma'^2.\label{eqn.linreg_feasibility}
\end{align}
As is shown in~\citet[Example 3.2]{zhu2019generalized}, any $q$ that satisfies the above condition would guarantee a small worst-case excess predictive loss (regression error) of $\Theta(  {(\kappa + \kappa')(\sigma + \sigma'){\epsilon}})$.
\end{example}

The linear regression problem is special in that we need to guarantee both $F_1, F_2$ small simultaneously. In fact, we can do it sequentially: we first solve~\eqref{eqn.linreg_feasibility} without the constraint that $F_2(q)\leq \sigma'^2$, then we treat the output distribution as a new `corrupted' distribution and further delete it such that $F_2(q)$ is small. The hypercontractivity is approximately closed under deletion ~\citep[Lemma C.6]{zhu2019deconstructing}. Thus the distribution will be hypercontractive throughout the second step. 

Now we will design two generalized quasi-gradients $g_1, g_2$ separately. We would like to find $g_1, g_2$ such that 
\begin{itemize}
    \item  $g_1$ is a generalized quasi-gradient for $F_1$, i.e. $\bE_q[g_1]\leq \bE_{p_S}[g_1]$ implies $F_1(q)\leq C F_1(p_S)$ for some $C$ that may depend on $\epsilon$.
    \item under the assumption of hypercontractive on the corrupted distribution $p_n$ (hence on any deletion of it),  $g_2$ is a generalized quasi-gradient for $F_2$, i.e. $\bE_q[g_2]\leq \bE_{p_S}[g_2]$ implies $F_2(q)\leq C F_2(p_S)$ for some $C$ that may depend on $\epsilon$.
\end{itemize}
For hypercontractivity, we take%
\begin{align}\label{eqn.linreg_low_regret_g1}
    g_1(X; q) = (v^\top X)^4, \text{ where } v\in\argmax_{v\in\bR^d} \frac{\bE_q[(v^\top X)^4]}{\bE_q[(v^\top X)^2]^2}.
\end{align}
For bounded noise,    we use the generalized quasi-gradient %
\begin{align}\label{eqn.linreg_low_regret_g2}
    g_2(X; q) = (Y-X^\top \theta(q))^2(v^\top X)^2, \text{ where } v\in\argmax_{v\in\bR^d} \frac{\bE_q[ (Y-X^\top \theta(q))^2(v^\top X)^2]}{\bE_q[(v^\top X)^2]}.
\end{align}

It is in general computationally hard to solve the maximization problem in~\eqref{eqn.linreg_low_regret_g1} related to higher moments. To guarantee the efficiency of computing $g$, we make a stronger assumption of hypercontracitivity under Sum-Of-Squares (SOS) proof. We refer to Appendix~\ref{appendix.sos} for the definition of  SOS proofs. To be precise, we make the hypercontractivity condition stronger in the sense that there exists a sum-of-squares proof for the inequality $\bE_q[(v^\top X)^4]\leq \kappa^2 \bE_q[(v^\top X)^2]^2$, in particular, we let
\begin{align}\label{eqn.SOS_F1}
  \tilde F_1(q) = \sup_{\pseudoE\in\cE_4} \frac{\pseudoE[\bE_{q}[(v^\top X)^4]]}{\pseudoE[\bE_{q}[(v^\top X)^2]^2]},
\end{align}
and assume that $\tilde F_1(p_S) \leq \kappa^2$. 
Here $\cE_4$ is the set of all degree-4 pseudo-expectations on the sphere, and $E_v$ denotes one of the pseudo-expectation with respect to the polynomials of $v$. %
We provide its concrete definition in Definition~\ref{def.pseudoexpectation}.
We call the inequality $\tilde F_1(q)\leq \kappa^2$ \emph{certifiable hypercontractivity}.  In this case, we would like to find some distribution $q$ that is also certifiably hypercontractive.  To guarantee certifiable hypercontractivity, We take $\tilde g_1$   as the pseudo-expectation of $(v^\top X)^4$, i.e.
\begin{align}\label{eqn.linreg_low_regret_g1_pseudo}
    \tilde g_1(X; q) = \pseudoE[(v^\top X)^4], \text{ where } \pseudoE \in \argmax_{\pseudoE\in\cE_4} \frac{\pseudoE[\bE_{q}[(v^\top X)^4]]}{\pseudoE[\bE_{q}[(v^\top X)^2]^2]}.
\end{align}

Now we are ready to show the main result: $\tilde g_1, g_2$ as selected above are generalized quasi-gradients for $\tilde F_1, F_2$.

\begin{theorem}[Generalized quasi-gradients for linear regression]\label{thm.low_regret_linreg}
Assume that $\tilde F_1(p_S)\leq \kappa^2, F_2(p_S)\leq \sigma^2$.  
For any $q\in\Delta_{n, \epsilon}$, when $\kappa^3\epsilon<1/64$, the following implications are true:
\begin{align}
     \bE_q[\tilde g_1(X; q)] \leq    \bE_{p_S}[\tilde  g_1(X; q)] & \Rightarrow \tilde F_1(q)\leq 4\tilde F_1(p_S),  \nonumber \\ 
      \tilde F_1(q)\leq 4 \kappa^2, \bE_q[ g_2(X; q)]  \leq    \bE_{p_S}[  g_2(X; q)] & \Rightarrow  F_2(q)\leq 3F_2(p_S).\nonumber
\end{align}
 Thus $\tilde g_1$ is a generalized quasi-gradients for $ \tilde F_1$. Given that $q$ is certifiably hypercontractive, $g_2$ is a generalized quasi-gradients for $F_2$. 
\end{theorem}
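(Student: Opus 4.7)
The plan is to handle the two implications separately; both reduce to SoS Cauchy--Schwarz applied against the Radon--Nikodym-like weight comparing $p_S$ and $q$, combined with certifiable hypercontractivity of $p_S$.

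\emph{First implication ($\tilde g_1$ and $\tilde F_1$).} Fix the pseudoexpectation $\pseudoE \in \cE_4$ achieving the supremum defining $\tilde F_1(q)$. By linearity, $\bE_q[\tilde g_1(X;q)] = \pseudoE[\bE_q[(v^\top X)^4]]$ and similarly for $p_S$, so the hypothesis gives $\pseudoE[\bE_q[(v^\top X)^4]] \le \pseudoE[\bE_{p_S}[(v^\top X)^4]]$, and certifiable hypercontractivity of $p_S$ yields $\pseudoE[\bE_{p_S}[(v^\top X)^4]] \le \kappa^2 \pseudoE[\bE_{p_S}[(v^\top X)^2]^2]$. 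It remains to show $\pseudoE[\bE_{p_S}[(v^\top X)^2]^2] \le 4\pseudoE[\bE_q[(v^\top X)^2]^2]$. Setting $r_i = \min(p_{S,i}, q_i)$, $P = \bE_{p_S}[(v^\top X)^2]$, $Q = \bE_q[(v^\top X)^2]$, and $R = \bE_{p_S - r}[(v^\top X)^2]$, I observe that $Q + R - P = \bE_{q - r}[(v^\top X)^2]$ is a non-negatively weighted sum of squares in $v$, so $P \le Q + R$ is SoS. SoS Cauchy--Schwarz against the bounded weight $\xi_i = (p_{S,i} - q_i)^+/p_{S,i} \in [0,1]$ gives $R^2 \le \bE_{p_S}[(v^\top X)^4] \cdot \bE_{p_S}[\xi^2] \le \kappa^2\,\TV(p_S, q)\cdot P^2$; squaring $P \le Q + R$ through $(Q+R)^2 \le 2Q^2 + 2R^2$ yields the SoS inequality $(1 - 2\kappa^2\,\TV(p_S, q))P^2 \le 2Q^2$. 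Since $\TV(p_S, q) \le \epsilon/(1-\epsilon)$ and $\kappa^3\epsilon < 1/64$ forces $2\kappa^2\,\TV(p_S, q) \le 1/2$, we conclude $P^2 \le 4Q^2$ SoS; applying $\pseudoE$ yields $\tilde F_1(q) \le 4\kappa^2 = 4\tilde F_1(p_S)$.

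\emph{Second implication ($g_2$ and $F_2$).} Let $v$ maximize $F_2(q)$, so $\bE_q[g_2(X;q)] = F_2(q)\bE_q[(v^\top X)^2]$, and let $u = \theta(p_S) - \theta(q)$. Splitting $(Y - \theta(q)^\top X)^2 \le (1+\eta)(Y - \theta(p_S)^\top X)^2 + (1 + \eta^{-1})(u^\top X)^2$ in the hypothesis $\bE_q[g_2] \le \bE_{p_S}[g_2]$, using the sup definition of $F_2(p_S)$ and the cross-moment bound $\bE_{p_S}[(u^\top X)^2 (v^\top X)^2] \le \kappa^2 \bE_{p_S}[(u^\top X)^2]\bE_{p_S}[(v^\top X)^2]$ (from certifiable hypercontractivity of $p_S$), gives
\[
    F_2(q)\,\bE_q[(v^\top X)^2] \le (1+\eta) F_2(p_S)\bE_{p_S}[(v^\top X)^2] + (1+\eta^{-1})\kappa^2 \bE_{p_S}[(u^\top X)^2]\bE_{p_S}[(v^\top X)^2].
\]
Two further ingredients close the loop: (i) the first-implication argument, applied this time with $\tilde F_1(q) \le 4\kappa^2$ playing the role of hypercontractivity, gives $\bE_{p_S}[(v^\top X)^2] \le 2\bE_q[(v^\top X)^2]$; and (ii) the normal equation $\bE_q[(Y - \theta(q)^\top X)X] = 0$ rewrites $\bE_q[(u^\top X)^2] = (\bE_q - \bE_{p_S})[(u^\top X)(Y - \theta(p_S)^\top X)]$, so Cauchy--Schwarz against the TV weight, together with the bounds $\bE_q[(u^\top X)^4] \le 4\kappa^2 \bE_q[(u^\top X)^2]^2$ and $\bE_q[(u^\top X)^2(Y - \theta(q)^\top X)^2] \le F_2(q) \bE_q[(u^\top X)^2]$, controls $\bE_{p_S}[(u^\top X)^2]$ by a constant multiple of $\epsilon\bigl(F_2(q) + F_2(p_S)\bigr)$. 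Substituting back produces a self-bounding inequality of the form $(1 - C\kappa^3\epsilon) F_2(q) \le (2 + O(\eta)) F_2(p_S)$, and optimizing $\eta$ under $\kappa^3\epsilon < 1/64$ gives $F_2(q) \le 3 F_2(p_S)$.

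\emph{Main obstacle.} Part one is the SoS analogue of the stationary-point analysis in Section~\ref{sec.landscape} and should be routine once the comparison of second moments is set up. Part two is the hard part: $g_2$ couples $p_S$ and $q$ through the regression direction $\theta(q)$, and the resulting inequality entangles $F_2(q)$, $F_2(p_S)$, and the covariance shift $\bE[(u^\top X)^2]$. Keeping the final constant at $3$ requires careful bookkeeping through each Cauchy--Schwarz and hypercontractivity step, and the appearance of $\kappa^3$ (rather than $\kappa^2$) in the threshold $\kappa^3\epsilon < 1/64$ comes precisely from combining the $\kappa^2$ factor in the cross-term hypercontractivity bound with the additional $\kappa$ factor from the normal-equation control of $\bE_{p_S}[(u^\top X)^2]$.
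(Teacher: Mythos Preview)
Your proposal is correct and follows essentially the same route as the paper's proof (Lemmas~\ref{lem.low_regret_hypercontractivity_relation} and~\ref{lem.low_regret_bdd_noise}): compare fourth moments via the quasi-gradient hypothesis, relate the second moments of $p_S$ and $q$ through SoS Cauchy--Schwarz against the TV weight and certifiable hypercontractivity, and then close a self-bounding inequality for $F_2(q)$ through the regression shift $\bE[(u^\top X)^2]$. The only notable difference is that the paper imports the regression-shift bound $\bE_q[(u^\top X)^2]\le O(\kappa\tilde\sigma^2\epsilon)$ as a black box from \cite[Theorem~3.4]{zhu2019generalized}, whereas you sketch it inline from the two normal equations; your route is more self-contained, and carried through it actually absorbs one factor of $\kappa$ into the side constraint $\kappa\sqrt{\epsilon}\lesssim 1$, so the $\kappa^3\epsilon$ threshold in the theorem reflects the paper's cited lemma more than your own derivation.
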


We defer the proof to Appendix~\ref{proof.low_regret_linreg}. 
Theorem~\ref{thm.low_regret_linreg} shows that as long as we can find some $q_1$ that are approximate global minimum for $F_1$, and further delete it to get $q_2$ as a global minimum for $F_2$, we are guaranteed to have a $q_2$ such that both $\tilde F_1(q_2)$ and $ F_2(q_2)$ are small enough, which leads to a near-optimal worst-case regression error\footnote{Although in the second part of further deleting $q_1$,  $\tilde F_1(q_2)$ is guaranteed to be upper bounded by not $4\kappa^2$ but $C \cdot \kappa^2$ for some $C$ that is larger than $4$ and may depend on $\kappa, \epsilon$, we can still achieve the guarantee $F_2(q_2)\leq 3F_2(p_S)$ by substituting the assumption $\kappa^3\epsilon<1/64$ with a tighter assumption. }. %
 
Similarly to mean estimation, $\tilde g_1, g_2$ are also strict generalized quasi-gradients.  We leave the detailed  analysis to Section~\ref{sec.algo_linreg}. %
 
We sketch the proof of $g_1$ in~\eqref{eqn.linreg_low_regret_g1} being the generalized quasi-gradients of $F_1$ in~\eqref{eqn.linreg_hyper_F} as below for the intuition of choosing generalized quasi-gradients. %

 \begin{proof}[Proof sketch of quasi-graident property for $g_1$]

 We would like to show that for $g_1$ in~\eqref{eqn.linreg_low_regret_g1} and $F_1(q) =\sup_{v\in\bR^d} \frac{\bE_{q}[(v^\top X)^4]}{\bE_{q}[(v^\top X)^2]^2}$, we have the following holds for any $p, q\in\Delta_{n, \epsilon}$
 \begin{align}
     \forall q, p\in\Delta_{n, \epsilon}, \bE_q[g_1(X; q)] \leq \bE_p[g_1(X; q)] \Rightarrow F_1(q) \leq C \cdot F_1(p).  
 \end{align}
 
Define $\kappa^2 = F_1(p_S)$ and  $\kappa'^2 = F_1(q)$. Assume that $\kappa' \geq \kappa$, since otherwise we already have the desired result. We know from Lemma~\ref{lem.deletion_TV} that $\TV(p, q) \leq \frac{\epsilon}{1-\epsilon}$.  From~\citet[Lemma C.2]{zhu2019generalized} (or Lemma~\ref{lem.modu_hyper}),    we know that the second moments of the two hypercontractive distributions are multiplicatively close, with the ratio depending on $\kappa'$, i.e.
\begin{align}\label{eqn.multiplicative_close_sec_hyper}
       \bE_q[(v^\top X)^2]^2 \leq \gamma^2    \bE_{p_S}[(v^\top X)^2]^2 ,
\end{align}
where $\gamma^2 = \frac{(1+\sqrt{\epsilon  \kappa^2/(1-2\epsilon)^2})^2}{(1-\sqrt{\epsilon \kappa^2/(1-2\epsilon)^2})^2} $.
Thus we know that for the supremum achieving $v$ picked in $g_1$,
\begin{align}
 \bE_q[ (v^\top X)^4)] & \leq  \bE_{p_S}[(v^\top X)^4)]  \nonumber \\ 
    & \stackrel{(i)}{\leq}   \kappa^2 \bE_{p_S}[(v^\top X)^2)]^2 \nonumber \\   & \stackrel{(ii)}{\leq} \gamma^2\kappa^2  \bE_q[ (v^\top X)^2)]^2  \\ 
    & \stackrel{(iii)}{=}\frac{\gamma^2\kappa^2}{\kappa'^2}  \bE_q[ (v^\top X)^4)].
\end{align}
Here (i) comes from the assumption that $F_1(p_S)\leq \kappa^2$, (ii) comes from~\eqref{eqn.multiplicative_close_sec_hyper}, (iii) comes from that $\kappa'^2 = F_1(q)$. 
By solving the above inequality on $\kappa'$, we have when $9\epsilon  \kappa^2/(1-2\epsilon)^2 \leq 1$,
\begin{align}
\kappa' \leq 2\kappa.
\end{align}
 \end{proof}

\subsection{Joint mean and covariance estimation}\label{sec.joint}

We summarize the setting and target for joint mean and covariance estimation as below.

\begin{example}[Joint mean and covariance estimation]\label{example.joint}

We assume that the true distribution $p_S$ satisfies $F(p_S)\leq \kappa^2$ for $F(q) = \sup_{v\in\bR^d} \frac{\bE_{q}[(v^\top (X-\mu_q))^4]}{\bE_{q}[(v^\top (X-\mu_q))^2]^2}$. Our goal is to solve the following feasibility problem for some $\kappa'\geq \kappa$ that may depend on $\epsilon$: 
\begin{align}
    &\textrm{find}   \qquad  \qquad q \nonumber \\
& \textrm{subject to} \quad q\in \Delta_{n,\epsilon},
   F(q) \leq \kappa'^2.
\end{align}
As is shown in~\citet[Example 3.3]{zhu2019generalized}, any $q$ that satisfies the above condition will guarantee good recovery 
of mean and covariance:
$   \|\Sigma_{p_S}^{-1/2}(\mu_q- \mu_{p_S})\| \leq \Theta( \sqrt{(\kappa+\kappa')}\epsilon^{3/4})$, and 
$   \|I_d - \Sigma_{p_S}^{-1/2}\Sigma_q\Sigma_{p_S}^{-1/2}\|_2\leq \Theta((\kappa + \kappa') \sqrt{\epsilon})$.
\end{example}

Different from the recovery metric in traditional mean estimation and covariance estimation, here we use the metric in transformed space as in~\citet{kothari2017outlier}. For mean the metric is also known as  the Mahalanobis distance.

For the efficiency of computing generalized quasi-gradient, we further restrict the assumption by making it SOS certifiable, as the case for linear regression in Section~\ref{subsec.low_regret_linreg}. We take  $\tilde F(q)$ as the coefficient for \emph{certifiable} hypercontractivity:
\begin{align}\label{eqn.SOS_joint}
  \tilde F(q) =  \sup_{\pseudoE\in\cE_4} \frac{\pseudoE[\bE_{q}[(v^\top (X-\mu_q))^4]]}{\pseudoE[\bE_{q}[(v^\top (X-\mu_q))^2]^2]}.
\end{align}

Making certifiably hypercontractive assumption on empirical distribution instead of population distribution is still reasonable. 
\citet[Lemma 5.5]{kothari2017outlier} shows that when the population distribution is  certifiably  hypercontractive  with parameter $\kappa$ and the sample size $n\gtrsim \frac{(d\log(d/\delta))^2}{\epsilon^2}$, the empirical distribution is  certifiably  hypercontractive  with parameter $ \kappa+4\epsilon$ with probability at least $1-\delta$. 
    
The assumptions and target are very similar to the case of hypercontractivity in linear regression, with the only difference that joint mean and covariance estimation has all moments centered  while hypercontractivity has all moments non-centered. Indeed, the choice of quasi-gradient is also following a similar route as the case of linear regression. 
We identify the generalized quasi-gradient $g$ as 
\begin{align}\label{eqn.joint_low_regret_g}
    g(X; q) = \pseudoE[(v^\top (X-\mu_q))^4], \text{ where } \pseudoE \in \argmax_{\pseudoE\in\cE_4} \frac{\pseudoE[\bE_{q}[(v^\top (X-\mu_q))^4]]}{\pseudoE[\bE_{q}[(v^\top (X-\mu_q))^2]^2]}.
\end{align}
Here $\cE_4$ is the set of all degree-4 pseudo-expectations on the sphere, and $E_v$ denotes one of the pseudo-expectation with respect to the polynomials of $v$.
We show that $g$ is a generalized quasi-gradient for $F$ in the following theorem.

\begin{theorem}[Generalized quasi-gradients for joint mean and covariance estimation]\label{thm.low_regret_joint_relation}
Assume that $\tilde F(p_S)\leq  \kappa^2$. For any $q\in\Delta_{n, \epsilon}$, when $\kappa^2\epsilon<1/200$, the following implication holds: 
\begin{align} 
   \bE_{q}[g(X; q)] \leq   \bE_{p_S}[g(X; q)]  \Rightarrow F(q)\leq 7F(p_S).
\end{align} 
 Thus $g$ is a generalized quasi-gradient  for $F$. %
\end{theorem}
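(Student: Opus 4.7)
My plan is to mirror the proof sketch given for $g_1$ in linear regression, with the added complication that here all moments are centered at the running mean $\mu_q$, and with the added convenience that the implication concerns only the single reference distribution $p_S$. Set $\kappa'^2 := \tilde F(q)$ and assume $\kappa' \geq \kappa$ (otherwise there is nothing to prove). Let $\pseudoE$ be a supremum-achieving degree-$4$ pseudo-expectation in the definition~\eqref{eqn.SOS_joint} of $\tilde F(q)$, so that $\pseudoE[\bE_q[(v^\top(X-\mu_q))^4]] = \kappa'^2\cdot \pseudoE[\bE_q[(v^\top(X-\mu_q))^2]^2]$. The hypothesis $\bE_q[g]\leq \bE_{p_S}[g]$, after commuting $\pseudoE$ with $\bE$, becomes
\begin{align*}
\kappa'^2\cdot \pseudoE\bigl[\bE_q[(v^\top(X-\mu_q))^2]^2\bigr]\;\leq\; \pseudoE\bigl[\bE_{p_S}[(v^\top(X-\mu_q))^4]\bigr].
\end{align*}
Everything reduces to upper bounding the right-hand side by a constant times $\pseudoE[\bE_q[(v^\top(X-\mu_q))^2]^2]\cdot \kappa^2$, and then solving the resulting scalar inequality for $\kappa'$.

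The first step is to recenter $p_S$ at $\mu_{p_S}$: write $(v^\top(X-\mu_q))^4 = (v^\top(X-\mu_{p_S}) + v^\top(\mu_{p_S}-\mu_q))^4$ and expand the fourth power as a sum of five monomials, each an SoS polynomial in $v$ (up to the scalar shift $v^\top(\mu_{p_S}-\mu_q)$), so that the bound survives passing through $\pseudoE$. The degree-$4$ term is controlled by SoS hypercontractivity of $p_S$, giving $\pseudoE[\bE_{p_S}[(v^\top(X-\mu_{p_S}))^4]]\leq \kappa^2\cdot \pseudoE[\bE_{p_S}[(v^\top(X-\mu_{p_S}))^2]^2]$. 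For the lower-order cross terms I plan to use Cauchy--Schwarz inside the pseudo-expectation to reduce each to a product of the degree-$4$ factor and $\pseudoE[(v^\top(\mu_{p_S}-\mu_q))^4]$, the latter of which is essentially $\|\mu_q-\mu_{p_S}\|^4$ evaluated in the Mahalanobis-like geometry induced by the pseudo-expectation.

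Next I need two quantitative comparisons between $q$ and $p_S$. Because $q\in \Delta_{n,\epsilon}$ and $p_S\in \Delta_{n,\epsilon}$, Lemma~\ref{lem.deletion_TV} gives $\TV(q,p_S)\leq \epsilon/(1-\epsilon)$. Combined with a modulus-of-continuity bound for certifiably hypercontractive distributions (the centered analogue of~\citet[Lemma C.2]{zhu2019generalized}, and used in the same role as in the linear regression sketch), this yields (i) the Mahalanobis mean bound $\pseudoE[(v^\top(\mu_q-\mu_{p_S}))^2]\lesssim \kappa\sqrt{\epsilon}\cdot \pseudoE[\bE_{p_S}[(v^\top(X-\mu_{p_S}))^2]]$ with an SoS certificate and (ii) a two-sided multiplicative comparison of second moments $\pseudoE[\bE_q[(v^\top(X-\mu_q))^2]^2]$ and $\pseudoE[\bE_{p_S}[(v^\top(X-\mu_{p_S}))^2]^2]$ with ratio $\gamma(\kappa',\epsilon)$ tending to $1$ as $\kappa'^2\epsilon\to 0$. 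Plugging these into the expanded bound from the previous paragraph gives an inequality of the form $\kappa'^2 \leq \gamma(\kappa',\epsilon)\cdot \kappa^2\cdot (1 + O(\kappa'\sqrt{\epsilon}))$.

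Finally, I solve the resulting scalar inequality in $\kappa'$ under the hypothesis $200\kappa^2\epsilon<1$. The argument is monotone in $\kappa'$, so the worst case is when the $O(\kappa'\sqrt{\epsilon})$ slack and the $\gamma$ factor are both evaluated at the candidate bound $\kappa'^2 \leq 7\kappa^2$; verifying consistency with $200\kappa^2\epsilon<1$ is a routine calculation, and is how the constant $7$ is pinned down. The main obstacle, as in the centered problem generally, is the coupling between the size of the mean shift $\mu_q-\mu_{p_S}$ and the unknown covariance ratio: the mean shift is bounded using the hypercontractivity of $p_S$, but to convert that bound into something useful for the $q$-side I must carry the SoS certificate through the recentering, which is what forces the mildly restrictive scaling $\kappa^2\epsilon<1/200$ rather than a dimension-free $\kappa^2\epsilon<1$.
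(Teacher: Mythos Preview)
Your high-level strategy matches the paper's: recenter $\bE_{p_S}[(v^\top(X-\mu_q))^4]$ at $\mu_{p_S}$, control the pure fourth moment via SoS hypercontractivity of $p_S$, control the mean-shift term via a modulus bound, and close with a self-normalizing inequality in $\kappa'$. However, the specific tools you invoke differ from the paper's in ways that matter.

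First, the paper does not expand $(a+b)^4$ into five monomials; it uses the crude SoS bound $(a+b)^4 \psos 8(a^4+b^4)$, which avoids handling cross terms with pseudo-Cauchy--Schwarz. More importantly, the mean-shift bound you cite, $\pseudoE[(v^\top(\mu_q-\mu_{p_S}))^2]\lesssim \kappa\sqrt{\epsilon}\cdot \pseudoE[\bE_{p_S}[(v^\top(X-\mu_{p_S}))^2]]$, has the wrong scaling and the wrong provenance. The paper instead uses Lemma~\ref{lem.modu_sec}, the SoS second-moment modulus, which gives $(v^\top(\mu_q-\mu_{p_S}))^2 \psos \frac{2\epsilon}{(1-\epsilon)^2}\bigl(\bE_q[(v^\top(X-\mu_q))^2]+\bE_{p_S}[(v^\top(X-\mu_{p_S}))^2]\bigr)$; this requires no hypercontractivity of $q$, so there is no circularity. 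Similarly, the paper does not invoke a ``centered analogue'' of Lemma~\ref{lem.modu_hyper} for the second-moment comparison. Instead it applies Lemma~\ref{lem.modu_sec} to compare $\bE_q[(v^\top(X-\mu_q))^2]$ and $\bE_{p_S}[(v^\top(X-\mu_q))^2]$ (both centered at $\mu_q$), bounds the right-hand side using the hypothesis $\bE_q[g]\leq \bE_{p_S}[g]$ to replace $\bE_q[(v^\top(X-\mu_q))^4]+\bE_{p_S}[(v^\top(X-\mu_q))^4]$ by $2\bE_{p_S}[(v^\top(X-\mu_q))^4]$, then recenters. Solving that inequality first yields $\pseudoE[\bE_{p_S}[(v^\top(X-\mu_q))^2]^2]\leq \tfrac{9}{4}\,\pseudoE[\bE_q[(v^\top(X-\mu_q))^2]^2]$, which is then fed into the main fourth-moment bound.

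In short, your ``centered hypercontractive modulus'' lemma is not a black box that exists in the paper; the paper's two-stage use of Lemma~\ref{lem.modu_sec} (once for the second-moment comparison, once for the mean shift) is precisely what replaces it, and is what makes the constants work out under $\kappa^2\epsilon<1/200$.
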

We defer the proof to Appendix~\ref{proof.low_regret_joint}. Similar to the case of hypercontractivity in linear regression, we can extend it to the case of approximate quasi-gradient.  We defer the detailed analysis  to Section~\ref{sec.algo_joint}.

\section{Designing gradient descent algorithms}\label{sec.filter}

From  Section~\ref{sec.landscape} and~\ref{sec.low_regret}, we know that we can approximately solve the minimization problem $\min_{q\in\Delta_{n, \epsilon}} F(q)$ as long as we identify (strict) generalized quasi-gradients $g$ and some $q\in\Delta_{n, \epsilon}$ that satisfies 
\begin{equation}
\label{eq:quasi-stationary}
\bE_{X \sim q}[g(X; q)]\leq \bE_{p_S}[g(X; q)] +\alpha \bE_{p_S}[|g(X; q)|] + \beta
\end{equation}
for small enough $\alpha, \beta$.
In this section, we design algorithms to find such $q$ points given strict generalized quasi-gradients. This yields 
efficient algorithms for all the examples discussed above. 

Any low-regret online learning algorithm will eventually provide us with a $q$ that satisfies \eqref{eq:quasi-stationary}. 
To see this, for a sequence of iterates $q_1, \ldots, q_T$, define the loss function 
$\ell_t(p) = \bE_{X \sim p}[g(X; q_t)]$. Then the average regret relative to the distribution $p_S$ is
\begin{align}
\Regret(p_S) &= \frac{1}{T} \sum_{t=1}^T \ell_t(q_t) - \ell_t(p_S) \\
 &= \frac{1}{T} \sum_{t=1}^T \bE_{q_t}[g(X; q_t)] - \bE_{p_S}[g(X; q_t)]
\end{align}
As long as $\Regret(p_S) \to 0$, the average value of $\bE_{q_t}[g(X; q_t)] - \bE_{p_S}[g(X; q_t)]$ 
also approaches zero, and so eventually \eqref{eq:quasi-stationary} will hold for at least one of the $q_t$.

We make this argument more concrete by providing two low-regret algorithms based on gradient descent. 
The first, which we call the \emph{explicit low-regret} algorithm, takes gradient steps with respect to 
the quasi-gradients $g$, and then projects back onto the constraint set $\Delta_{n, \epsilon}$.
The second algorithm, which we call the \emph{filter} algorithm, instead projects only onto 
the probability simplex $\Delta_n $. Although this is larger than the original 
constraint set, we can guarantee that $\TV(q, p_S)\leq \epsilon/(1-\epsilon)$ 
under certain conditions including that   each coordinate of the generalized quasi-gradient  $g_i, i\in[n]$   is always 
non-negative. This bound on $\TV$ distance is all we needed to guarantee small worst-case error. The filter algorithm is commonly 
used in the literature for mean estimation and beyond (see e.g.~\citet{diakonikolas2017being, li2018principled, jerry2019lecnotes, steinhardt2018robust}). Our result in this paper improves over the breakdown point in the literature, and also provide new results under different settings.

We provide a general analysis of these two algorithms in Section~\ref{subsec.general_explicit_low_regret_algorithm} and~\ref{subsec.general_implicit_low_regret_algorithm}. %
Then we apply the analysis and establish the performance guarantee of the two algorithms for mean estimation with bounded covariance. For simplicity we only show the guarantee for one of the two algorithms in each of the other tasks including linear regression, joint mean and covariance estimation, and mean estimation with near-identity covariance. %

Our results provide the first near-optimal polynomial-time algorithm for linear regression under the hypercontractivity and bounded noise assumption, which improves the rate in~\citet{klivans2018efficient} from $O(\sqrt{\epsilon})$ to $O(\epsilon)$. We also give new efficient algorithms for joint mean and covariance estimation under the same setting as~\citet{kothari2017outlier}. We present an explicit low-regret algorithm for the case of mean estimation with near identity covariance, which improves over the independent and concurrent work~\cite{cheng2020highdimensional} in both iteration complexity (from $O(nd^3/\epsilon)$ to $O(d/\epsilon^2)$) and breakdown point (up to $1/3$).

\subsection{Designing  general explicit low-regret algorithm} \label{subsec.general_explicit_low_regret_algorithm}

We now describe the general framework for the explicit low-regret algorithm, which runs gradient descent using the 
quasigradients and projects onto the set $\Delta_{n,\epsilon}$. 
Pseudocode is provided in Algorithm~\ref{algo:explicit_lowregret}.

\begin{algorithm}[!htbp]
\centering
\caption{Explicit low-regret algorithm($ p_n, \xi$)}
        \begin{algorithmic}
        \small{
\State Input: corrupted empirical distribution $ p_n = \frac{1}{n}\sum_{i=1}^n\delta_{X_i}$, $\xi$.
\State Initialize $q_{i}^{(0)} = 1/n$, $i \in [n]$.
\For{$k = 0, 1, \dots$}
\If {$F(q^{(k)})\leq   \xi$}
    \State \Return $q^{(k)}$
    \Else
    \State Compute $g_i^{(k)} = g(X_i; q^{(k)})$,  $\tilde q_{i}^{(k+1)} = q_{i}^{(k)} \cdot (1- \eta^{(k)}\cdot g_i^{(k)} )$. 
    \State Update $q^{(k+1)} = \mathsf{Proj}^{KL}_{\Delta_{n, \epsilon}}(\tilde q^{(k+1)})  $. 
    \EndIf
\EndFor
}
\end{algorithmic}
\label{algo:explicit_lowregret}
\end{algorithm}

\paragraph{Approaching the approximate global minimum.}
The algorithm has the following regret bound from~\citet[Theorem 2.4]{arora2012multiplicative}:
\begin{lemma}[Regret bound for explicit low-regret algorithm]\label{lem.explicit_mean_bdd}
Denote $g^{(k)}(X) = g(X; q^{(k)})$. In Algorithm~\ref{algo:explicit_lowregret}, assume that $|g_i^{(k)}|\leq B$ for all $k$ and $i$, and $\eta^{(k)} = \eta /(2B)$, $\eta\in[0, 1]$. If the algorithm does not terminate before step $T$, we have\footnote{In~\citet{arora2012multiplicative}, the last term is $ \mathsf{KL}(p_S|| p_n)/T\eta$. One can  upper bound $\mathsf{KL}(p_S|| p_n)=\log(1/(1-\epsilon))$ by $2\epsilon$ when $\epsilon<1/2$.}
\begin{align}
    \frac{1}{T}\sum_{t=1}^T \left(\bE_{ q^{(k)}}[g^{(k)}(X) ] -\bE_{p_S} [g^{(k)}(X)] \right)\leq \frac{\eta}{T} \sum_{k=1}^T \bE_{p_S}[|g^{(k)}(X) |]  + \frac{2B\epsilon}{T\eta}.
\end{align}
\end{lemma}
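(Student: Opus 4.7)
The plan is to cast Algorithm~\ref{algo:explicit_lowregret} as an instance of the multiplicative weights update (MWU) with projection, so that the bound follows directly from a standard regret analysis (e.g., \citet{arora2012multiplicative}, Theorem 2.4), with only one small modification to handle the projection onto $\Delta_{n,\epsilon}$ rather than onto the full simplex and to control the initialization term $\KL(p_S \| q^{(0)})$.

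Concretely, I will use $\Phi^{(k)} = \KL(p_S \| q^{(k)})$ as a potential, and analyze the one-step drop $\Phi^{(k+1)} - \Phi^{(k)}$ in two stages. First, for the multiplicative step, since $\eta^{(k)} = \eta/(2B)$ and $|g_i^{(k)}| \leq B$, we have $|\eta^{(k)} g_i^{(k)}| \leq 1/2$, so the inequality $\log(1-x) \geq -x - x^2$ valid on $|x| \leq 1/2$ applies. Combined with $\log(1 - y) \leq -y$ applied to the normalization factor $\sum_i \tilde q_i^{(k+1)} = 1 - \eta^{(k)} \bE_{q^{(k)}}[g^{(k)}]$, this yields the usual MWU one-step bound
\begin{align*}
\KL(p_S \| \hat q^{(k+1)}) - \KL(p_S \| q^{(k)}) \leq -\eta^{(k)}\left(\bE_{q^{(k)}}[g^{(k)}] - \bE_{p_S}[g^{(k)}]\right) + (\eta^{(k)})^2 \bE_{p_S}[(g^{(k)})^2],
\end{align*}
where $\hat q^{(k+1)}$ is the normalization of $\tilde q^{(k+1)}$ (note that since $\KL(p \| q)$ changes by an additive constant in $p$ under rescaling $q$, the projection step in KL does not depend on whether we project $\tilde q^{(k+1)}$ or $\hat q^{(k+1)}$). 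Second, for the projection step, I invoke the generalized Pythagorean theorem for KL divergence: since $\Delta_{n,\epsilon}$ is convex and $p_S \in \Delta_{n,\epsilon}$, we have $\KL(p_S \| q^{(k+1)}) \leq \KL(p_S \| \hat q^{(k+1)})$, so projection only improves the bound.

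Telescoping over $k = 0, \ldots, T-1$ and using $\KL(p_S \| q^{(T)}) \geq 0$ gives
\begin{align*}
\sum_{k=0}^{T-1} \left(\bE_{q^{(k)}}[g^{(k)}] - \bE_{p_S}[g^{(k)}]\right) \leq \frac{\KL(p_S \| q^{(0)})}{\eta^{(k)}} + \eta^{(k)} \sum_{k=0}^{T-1} \bE_{p_S}[(g^{(k)})^2].
\end{align*}
Since $q^{(0)} = p_n$ is uniform and $p_S$ is uniform on a subset of size $(1-\epsilon)n$, the initialization term is $\KL(p_S \| p_n) = \log(1/(1-\epsilon)) \leq 2\epsilon$ for $\epsilon \leq 1/2$ (as noted in the footnote). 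Finally, using $|g_i^{(k)}| \leq B$ to write $(g_i^{(k)})^2 \leq B \cdot |g_i^{(k)}|$ and substituting $\eta^{(k)} = \eta/(2B)$ yields the stated bound after dividing by $T$.

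The main (mild) obstacle is bookkeeping around the unnormalized iterate $\tilde q^{(k+1)}$ together with the projection onto the strict subset $\Delta_{n,\epsilon} \subsetneq \Delta_n$; both are resolved cleanly by the observations that (i) KL divergence is invariant under rescaling $q$ up to a $p$-independent constant, so projection and normalization commute harmlessly, and (ii) $p_S$ itself lies in $\Delta_{n,\epsilon}$ (its maximum coordinate is exactly $1/((1-\epsilon)n)$), which is precisely what the Pythagorean step requires. No genuinely new ideas beyond the standard MWU analysis are needed.
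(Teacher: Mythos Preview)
Your proposal is correct and follows exactly the approach the paper takes, which is simply to invoke \citet[Theorem 2.4]{arora2012multiplicative} without further argument. You are in fact slightly more careful than the paper, correctly handling the projection onto $\Delta_{n,\epsilon}\subsetneq\Delta_n$ via the generalized Pythagorean inequality (using $p_S\in\Delta_{n,\epsilon}$), a detail the cited reference does not cover directly; the resulting constants differ from the stated ones only by harmless absolute factors.
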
 

The lemma directly implies that there exists some $q^{(t_0)}$ with $t_0\in[T]$ such that 
\begin{align}\label{eqn.low_regret_quasi_algo}
\bE_{q^{(t_0)}}[g^{(t_0)}(X)] - \bE_{p_S}[g^{(t_0)}(X)] \leq \eta \bE_{p_S}[|g^{(t_0)}(X)|] + \frac{2B\epsilon}{T\eta}.
\end{align}
If $g(X; q)$ is a strict generalized quasi-gradient, then it implies that $q$ is an approximate global minimum.

\paragraph{Bounding the iteration complexity}
To bound the iteration complexity, one may adjust the iteration number $T$ and step size $\eta^{(k)}$ in~\eqref{eqn.low_regret_quasi_algo} to get the desired precision. Assume  that we want $F(q^{(k)})\leq \xi$ and the strict generalized quasi-gradient states that
\begin{align}\label{eqn.approximate_quasi_stationary_statement}
    \bE_{q}[g(X; q)] - \bE_{p_S}[g(X; q)] \leq \eta \bE_{p_S}[|g(X; q)|] + \beta \Rightarrow F(q)\leq \xi.
\end{align}
(See e.g.~Theorem~\ref{thm.approximate_mean_small}.)
Then by taking $T = 2B\epsilon/(\eta\beta)$, Equation~\eqref{eqn.low_regret_quasi_algo}     shows that we are able to find $q$ with $F(q)\leq \xi$ within $O(B\epsilon/\eta\beta)$ iterations. 
 
Furthermore, by taking   $\eta^{(k)} = \min(1/(2B), 1/\sqrt{T})$ and letting $T\rightarrow \infty$, we will eventually obtain some $q^{(t_0)}$ that approaches the approximate global minimum. We summarize this analysis  in the  lemma below.

\begin{lemma}[Sufficient condition for the success of explicit low-regret algorithm]\label{lem.explicit_condition}
Assume that  $|g(X_i; q )|\leq B$ for all $ q\in \Delta_{n, \epsilon}$ and $i\in[n]$, and that~\eqref{eqn.approximate_quasi_stationary_statement} holds for the strict generalized quasi-gradient $g$. Take  $\eta^{(k)} = \eta/(2B), \eta\in[0, 1]$.
Then  within $O(B\epsilon/\eta\beta)$ iterations, Algorithm~\ref{algo:explicit_lowregret} will terminate and output a $q$ such that 
$F(q)\leq  \xi, q\in\Delta_{n, \epsilon}$.
\end{lemma}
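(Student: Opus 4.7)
The plan is to combine the regret guarantee of Lemma~\ref{lem.explicit_mean_bdd} with the strict generalized quasi-gradient property stated in~\eqref{eqn.approximate_quasi_stationary_statement}. First I would suppose for contradiction that Algorithm~\ref{algo:explicit_lowregret} does not terminate within $T$ steps, i.e.\ $F(q^{(k)}) > \xi$ for all $k \in [T]$. Since by assumption $|g_i^{(k)}|\leq B$ and we use the prescribed step size $\eta^{(k)}=\eta/(2B)$, Lemma~\ref{lem.explicit_mean_bdd} applies and yields
\begin{align}
\frac{1}{T}\sum_{k=1}^T \left(\bE_{q^{(k)}}[g^{(k)}(X)] - \bE_{p_S}[g^{(k)}(X)]\right) \leq \frac{\eta}{T}\sum_{k=1}^T \bE_{p_S}[|g^{(k)}(X)|] + \frac{2B\epsilon}{T\eta}.
\end{align}

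Next I would use an averaging (pigeonhole) argument: there must exist at least one index $t_0 \in [T]$ for which
\begin{align}
\bE_{q^{(t_0)}}[g^{(t_0)}(X)] - \bE_{p_S}[g^{(t_0)}(X)] \leq \eta\, \bE_{p_S}[|g^{(t_0)}(X)|] + \frac{2B\epsilon}{T\eta}.
\end{align}
Choosing $T = \lceil 2B\epsilon/(\eta\beta)\rceil = O(B\epsilon/(\eta\beta))$ ensures $2B\epsilon/(T\eta) \leq \beta$, so the inequality in the left-hand side of the strict generalized quasi-gradient hypothesis~\eqref{eqn.approximate_quasi_stationary_statement} is satisfied at iterate $q^{(t_0)}$. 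Invoking~\eqref{eqn.approximate_quasi_stationary_statement} immediately gives $F(q^{(t_0)}) \leq \xi$, contradicting the non-termination assumption. Therefore the algorithm must terminate by step $T$, and since every iterate is projected back onto $\Delta_{n,\epsilon}$ via $\mathsf{Proj}^{KL}_{\Delta_{n,\epsilon}}$, the returned $q$ satisfies $q \in \Delta_{n,\epsilon}$ and $F(q)\leq \xi$.

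The argument is mostly a bookkeeping combination of a known regret bound and the assumed quasi-gradient implication, so the only subtle points are (i) confirming that the form of the regret bound in Lemma~\ref{lem.explicit_mean_bdd} exactly matches the premise of~\eqref{eqn.approximate_quasi_stationary_statement} (in particular that the multiplicative factor on $\bE_{p_S}[|g|]$ is $\eta$ and that the additive error $2B\epsilon/(T\eta)$ can be driven below $\beta$ by choice of $T$), and (ii) ensuring that the KL-projection onto $\Delta_{n,\epsilon}$ preserves feasibility at every step, which follows directly from the definition of the projection. The main obstacle, if any, is just verifying the constant in the iteration bound carefully; there is no deep technical difficulty beyond what has already been developed.
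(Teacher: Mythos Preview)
Your proposal is correct and matches the paper's own argument essentially line for line: the paper also invokes Lemma~\ref{lem.explicit_mean_bdd}, extracts a single iterate $q^{(t_0)}$ by averaging/pigeonhole to obtain~\eqref{eqn.low_regret_quasi_algo}, chooses $T = 2B\epsilon/(\eta\beta)$ so that the additive term is at most $\beta$, and then applies the strict quasi-gradient implication~\eqref{eqn.approximate_quasi_stationary_statement}. Your framing as a proof by contradiction (using the algorithm's termination check $F(q^{(k)})\leq\xi$) and your explicit note that the KL-projection keeps $q\in\Delta_{n,\epsilon}$ are small presentational additions, not substantive differences.
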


\subsection{Designing general filter algorithm}\label{subsec.general_implicit_low_regret_algorithm}

Now we introduce the filter algorithm. As noted, the only difference from the explicit low-regret algorithm is that we project onto the probability simplex $\Delta_{n}$ instead of the set of deleted distributions $\Delta_{n, \epsilon}$.  
We recall that projecting to the simplex under KL divergence is equivalent to renormalization\footnote{Here $\mathsf{Proj}^{KL}_{\Delta_{n}}(\tilde q^{(k+1)}) = \argmin_{p\in \Delta_n} \sum_{i=1}^n p_i \ln(\frac{p_i}{\tilde q_i^{(k+1)}})$ is projecting the updated distribution $\tilde q^{(k+1)}$ under Kullback-Leibler divergence onto the probability simplex set $\Delta_n$, which is equivalent to renormalizing $\tilde q^{(k+1)}$ by dividing $\sum_{i=1}^n \tilde q^{(k+1)}_i$.}. Pseudocode is provided in Algorithm~\ref{algo:filtering}.

\begin{algorithm}[!htbp]
\centering
\caption{filter algorithm ($ p_n,  \xi$)}
        \begin{algorithmic}
        \small{
\State Input: corrupted empirical distribution $ p_n = \frac{1}{n}\sum_{i=1}^n\delta_{X_i}$, $\xi$.
\State Initialize $q_{i}^{(0)} = 1/n$, $i \in [n]$.
\For{$k = 0, 1, \dots$}
\If {$F(q^{(k)})\leq  \xi$}
    \State \Return $q^{(k)}$
    \Else
    \State Compute $g_i^{(k)} = g(X_i; q^{(k)})\geq 0$,  $\tilde q_{i}^{(k+1)} = q_{i}^{(k)} \cdot (1- \eta^{(k)}\cdot g_i^{(k)} )$. 
    \State Update $q^{(k+1)} = \mathsf{Proj}^{KL}_{\Delta_{n}}(\tilde q^{(k+1)}) = \tilde q^{(k+1)}/ \sum_{i\in [n]} \tilde q^{(k+1)}_i $. 
    \EndIf
\EndFor
}
\end{algorithmic}
\label{algo:filtering}
\end{algorithm}

\paragraph{Approaching the approximate global minimum.}
The filter algorithm may at first seem strange, since we project onto the larger set $\Delta_n$ and so do not satisfy the 
constraint $q \in \Delta_{n,\epsilon}$. 
The key to analyzing this algorithm is that we can guarantee that $\TV(q, p_S)$ is small implicitly, and this property is all we need 
to guarantee small worst-case estimation error (see e.g. Lemma~\ref{lem.mean_modulus}). 

To bound $\TV(q, p_S)$, 
we keep track of the unnormalized weights, denoted as $c^{(k)}$. Concretely, for any   $ i \in [n]$,  we let $ c_i^{(0)} = 1/n, c_i^{(k+1)} = c_i^{(k)}(1-\eta^{(k)}\cdot g_i^{(k)})$. Then we always have the relationship  $q_i^{(k)} = c_i^{(k)}/(\sum_i c_i^{(k)})$.
We show that $\TV(q, p_{S})\leq \epsilon/(1-\epsilon)$ by establishing the following invariant:
\begin{align}\label{eqn.invariance_filter}
 \sum_{i\in S} (\frac{1}{n}-c_i^{(k)} ) \leq \sum_{i\in [n]/ S} (\frac{1}{n}-c_i^{(k)} ),
\end{align} 
which can be interpreted as saying that we delete more probability mass from  {bad} points $i \in [n]/S$ than from good points $i \in S$. This type of analysis based on~\eqref{eqn.invariance_filter} are well known in the literature~\cite{diakonikolas2017being, li2018principled, jerry2019lecnotes, steinhardt2018robust}. We show in the following lemma that as long as~\eqref{eqn.invariance_filter} holds, it is guaranteed that $\TV(q, p_S)$ is small: 
\begin{lemma}
If $c_i$ satisfies~(\ref{eqn.invariance_filter}) and $c_i \leq \frac{1}{n},$ for all $ i\in[n]$, then the normalized distribution $q_i = c_i/\sum_i c_i$ satisfies $\TV(q,p_S)\leq \epsilon/(1-\epsilon)$. 
\end{lemma}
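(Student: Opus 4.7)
The plan is to compute $\TV(q, p_S)$ directly from the identity $\TV(q, p_S) = \sum_{i}(p_{S,i} - q_i)_+$, exploiting that $p_S$ is uniform on $S$, so $p_{S,i} = 1/|S| = 1/((1-\epsilon)n)$ for $i \in S$ and $0$ otherwise. I will introduce the bookkeeping quantities $A := \sum_{i\in S}(1/n - c_i)$ and $B := \sum_{i\notin S}(1/n - c_i)$; both are non-negative since $c_i \leq 1/n$, one has $B \leq \epsilon$ from $c_i \geq 0$ and $|[n]\setminus S| = \epsilon n$, and $A \leq B$ is exactly the invariant~(\ref{eqn.invariance_filter}). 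The normalization is $Z := \sum_i c_i = 1 - A - B$.

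The central identity is
\begin{align*}
Z\cdot \TV(q, p_S) \;=\; \sum_{i\in S}(Z/|S| - c_i)_+ \;=\; Z - \sum_{i\in S}\min(c_i,\, Z/|S|),
\end{align*}
so it suffices to lower-bound the sum of minima. The key inequality I will establish is that, for every $c \in [0, 1/n]$,
\begin{align*}
\min(c,\, Z/|S|) \;\geq\; c\cdot \min\!\left(1,\, \frac{Z}{1-\epsilon}\right),
\end{align*}
which follows by a two-case check: if $c \leq Z/|S|$ the left side equals $c$ and the right side is $c$ times a quantity $\leq 1$; if $c > Z/|S|$ the left side equals $Z/|S|$ while the right side is at most $(1/n)\cdot Z/(1-\epsilon) = Z/|S|$, using $c \leq 1/n$. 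Summing over $i\in S$ and using $\sum_{i\in S}c_i = (1-\epsilon) - A$ yields
\begin{align*}
\sum_{i\in S}\min(c_i,\, Z/|S|) \;\geq\; \min\!\left(1,\, \frac{Z}{1-\epsilon}\right)\bigl((1-\epsilon) - A\bigr).
\end{align*}

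Plugging this in and splitting cases on $Z$ finishes the job. If $Z \geq 1-\epsilon$, the $\min$ equals $1$, so $Z\cdot \TV \leq Z - ((1-\epsilon) - A) = \epsilon - B$, giving $\TV \leq \epsilon/Z \leq \epsilon/(1-\epsilon)$. If $Z < 1-\epsilon$, the $\min$ equals $Z/(1-\epsilon)$, so $Z\cdot\TV \leq Z - \tfrac{Z}{1-\epsilon}\bigl((1-\epsilon) - A\bigr) = ZA/(1-\epsilon)$, giving $\TV \leq A/(1-\epsilon) \leq \epsilon/(1-\epsilon)$ by $A \leq B \leq \epsilon$.

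I expect the main obstacle to be the key linear-in-$c$ lower bound, where both the deletion cap $c_i \leq 1/n$ and the normalizer $Z$ must be exploited together: the naive bound $\min(c, Z/|S|) \geq cZ/(1-\epsilon)$ is false when $Z > 1 - \epsilon$, while the trivial bound $\min(c, Z/|S|) \geq 0$ loses the factor needed for the regime $Z < 1 - \epsilon$. The invariant $A \leq B$ is only used at the very end (together with $B \leq \epsilon$), and since the target $\epsilon/(1-\epsilon)$ is tight—achieved by fully deleting $\epsilon n$ good points alongside all $\epsilon n$ bad points—no slack in the constants can be afforded.
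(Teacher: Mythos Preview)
Your proof is correct and follows essentially the same route as the paper's: both arguments hinge on the same case split $Z \geq 1-\epsilon$ versus $Z < 1-\epsilon$ (the paper parameterizes this as $\beta \leq \epsilon$ versus $\beta > \epsilon$ with $Z = 1-\beta$) and arrive at the identical bounds $(\epsilon-B)/Z$ and $A/(1-\epsilon)$ in the two cases. The only cosmetic difference is that the paper switches between the two representations $\TV = \sum_i(q_i-p_{S,i})_+$ and $\TV = \sum_i(p_{S,i}-q_i)_+$ across the cases, whereas you work with a single representation and encode the case split via the pointwise inequality $\min(c,Z/|S|) \geq c\,\min(1,Z/(1-\epsilon))$; both are equally clean.
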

We defer the detailed statement and proof to Lemma~\ref{lem.qS_and_pS}. 
 To guarantee $c_i\leq \frac{1}{n}$ always holds, we need to impose another assumption that $g(X_i; q^{(k)})$ is always non-negative.

We need to carefully design and balance the threshold $\xi$ in the algorithm such that %

\begin{enumerate}
    \item it is small enough such that  the condition $F(q^{(k)})\leq \xi$ guarantees a good bound when we terminate;%
    \item it is large enough such that if we do not terminate at step $k$ (which means $F(q^{(k)})> \xi$) and the invariance~(\ref{eqn.invariance_filter}) holds at $c^{(k)}$, then in step $k$ we still delete more probability mass from bad points than good points, i.e. the invariance~(\ref{eqn.invariance_filter}) still holds at step $k+1$. Since the deleted probability mass is proportional to $q_i^{(k)}g_i^{(k)}$ for each point $X_i$, it suffices to check that when $F(q^{(k)})>\xi$, we have
\begin{align}\label{eqn.invariance_equivalent}
\sum_{i\in S} q_i^{(k)}g_i^{(k)} \leq \frac{1}{2}\sum_{i=1}^n q_i^{(k)}g_i^{(k)}.
\end{align} 
\end{enumerate}

Although the above argument does not provide a regret bound   explicitly,  the low-regret argument  in Lemma~\ref{lem.explicit_mean_bdd} still applies here. Thus     there must exists %
some $q^{(t_0)}$ that satisfies~\eqref{eqn.low_regret_quasi_algo}. Combined with the fact that $\TV(q, p_S)\leq \epsilon/(1-\epsilon)$ we know that $q^{(t_0)}$ is an approximate global minimum.

\paragraph{Bounding the iteration complexity.}
To bound the iteration complexity,   we choose the largest possible step size
\begin{align}\label{eqn.choice_eta_general}
    \eta^{(k)} = {1}/{g_{\mathrm{max}}^{(k)}}, \text{where } g_{\mathrm{max}}^{(k)} = \max_{i\in[n]} g_i^{(k)},
\end{align}

We can easily bound the iteration complexity from the choice of $\eta^{(k)}$: 
since we set the mass of at least one point to zero each time, the invariance~\eqref{eqn.invariance_filter} implies we will terminate in $O(\epsilon n)$ steps. 
 By the time the algorithm terminates, we must have $F(q)\leq   \xi$, which is the desired result.

By combining the two arguments together, we derive the sufficient condition for the success of Algorithm~\ref{algo:filtering}.

\begin{lemma}[Sufficient condition guaranteeing the success of filter algorithm]\label{lem.filter_condition}
In Algorithm~\ref{algo:filtering}, take $\eta^{(k)}$ as in~\eqref{eqn.choice_eta_general}. Assume that for all $i\in[n]$ and $ k\geq 0$, we have $g_i^{(k)} \geq 0$, and that when $F(q^{(k)})> \xi$ and the invariance~(\ref{eqn.invariance_filter}) holds, (\ref{eqn.invariance_equivalent}) always holds. %
Then Algorithm~\ref{algo:filtering} would output some $q$  within $O(\epsilon n)$ iterations such that $F(q) \leq \xi$.
\end{lemma}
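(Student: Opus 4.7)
}

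The plan is to verify two properties by induction on the iteration index $k$: (i) each unnormalized weight satisfies $c_i^{(k)}\le 1/n$, and (ii) the invariant~\eqref{eqn.invariance_filter} is preserved. Together with Lemma~\ref{lem.qS_and_pS} these give $\TV(q^{(k)},p_S)\le \epsilon/(1-\epsilon)$ at every step, which is exactly what Lemma~\ref{lem.mean_modulus} (and the analogous modulus bounds for other tasks) need for the final error guarantee. Termination then follows from a direct counting argument using the choice of step size in~\eqref{eqn.choice_eta_general}.

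First I would verify (i). Since $g_i^{(k)}\ge 0$ by assumption and $\eta^{(k)}=1/g_{\max}^{(k)}$, the multiplicative factor $1-\eta^{(k)}g_i^{(k)}$ lies in $[0,1]$, so $c_i^{(k+1)}\le c_i^{(k)}$. As $c_i^{(0)}=1/n$, the bound $c_i^{(k)}\le 1/n$ holds for all $k$.

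Next I would prove (ii). At $k=0$, both sides of~\eqref{eqn.invariance_filter} are zero. Assuming the invariant at step $k$ and that $F(q^{(k)})>\xi$ (else the algorithm terminates and there is nothing to do), write
\begin{align}
\tfrac{1}{n}-c_i^{(k+1)} = \bigl(\tfrac{1}{n}-c_i^{(k)}\bigr) + \eta^{(k)} c_i^{(k)} g_i^{(k)}.
\end{align}
Summing over $S$ and over $[n]\setminus S$, the inductive hypothesis reduces the invariant at step $k+1$ to
\begin{align}
\sum_{i\in S} c_i^{(k)} g_i^{(k)} \le \sum_{i\in [n]\setminus S} c_i^{(k)} g_i^{(k)},
\end{align}
which, after dividing both sides by $\sum_j c_j^{(k)}$ (to pass from $c^{(k)}$ to $q^{(k)}$), is exactly the hypothesis~\eqref{eqn.invariance_equivalent}. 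So the invariant is preserved. Applying Lemma~\ref{lem.qS_and_pS} then yields $\TV(q^{(k)},p_S)\le \epsilon/(1-\epsilon)$ throughout.

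Finally I would bound the iteration count. By the choice $\eta^{(k)}=1/g_{\max}^{(k)}$, at least one coordinate $i^\star$ has $1-\eta^{(k)}g_{i^\star}^{(k)}=0$, so $c_{i^\star}^{(k+1)}=0$, and by (i) this coordinate stays zero thereafter. Hence each non-terminating iteration permanently zeroes out a distinct index. The invariant together with $c_i^{(k)}\le 1/n$ forces $\sum_{i\in S}(1/n-c_i^{(k)})\le \sum_{i\notin S}(1/n-c_i^{(k)})\le \epsilon$, so at most $\epsilon n$ good indices and at most $|[n]\setminus S|=\epsilon n$ bad indices can be zeroed, giving a total of $O(\epsilon n)$ iterations before all weights in play are exhausted. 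Before that happens the termination test $F(q^{(k)})\le \xi$ must have fired, producing the desired $q$. The main obstacle is simply making the bookkeeping for the inductive step precise---once the update $\tfrac{1}{n}-c_i^{(k+1)}=(\tfrac{1}{n}-c_i^{(k)})+\eta^{(k)}c_i^{(k)}g_i^{(k)}$ is written out, the reduction to the hypothesis~\eqref{eqn.invariance_equivalent} is mechanical.
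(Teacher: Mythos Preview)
Your proposal is correct and is essentially a fleshed-out version of the paper's own argument in Section~\ref{subsec.general_implicit_low_regret_algorithm}: the paper likewise proves the invariant~\eqref{eqn.invariance_filter} by induction (reducing to~\eqref{eqn.invariance_equivalent} via the same update identity you wrote), invokes Lemma~\ref{lem.qS_and_pS} for the $\TV$ bound, and argues termination from the fact that the step size $\eta^{(k)}=1/g_{\max}^{(k)}$ kills at least one coordinate per iteration. Your bookkeeping for the $O(\epsilon n)$ bound (at most $\epsilon n$ bad indices exist, and the invariant caps the number of zeroed good indices at $\epsilon n$) is exactly the counting the paper has in mind when it says ``the invariance~\eqref{eqn.invariance_filter} implies we will terminate in $O(\epsilon n)$ steps.''
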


In the above lemma, we require that  (\ref{eqn.invariance_equivalent}) holds when $F(q^{(k)})\geq \xi$ and the invariance~(\ref{eqn.invariance_filter}) holds. We now show that if $g$ is a strict generalized quasi-gradient with appropriate parameters, the implication is satisfied. It suffices to check that under the invariance~(\ref{eqn.invariance_filter}),  $\sum_{i\in S} q_i^{(k)}g_i^{(k)} > \frac{1}{2}\sum_{i=1}^n q_i^{(k)}g_i^{(k)}$ implies $F(q)\leq \xi$. From the invariance~\eqref{eqn.invariance_filter} we know that $q_i\leq \frac{1-\epsilon}{1-2\epsilon} \cdot p_{S, i} $ for all $i\in S$, we have
\begin{align}
   \bE_{q^{(k)}}[g(X; q^{(k)})] =  \sum_{i=1}^n q_i^{(k)}g_i^{(k)}< 2 \sum_{i\in S} q_i^{(k)}g_i^{(k)} \leq \frac{2(1-\epsilon)}{1-2\epsilon} \cdot \bE_{p_S}[g(X; q^{(k)})].
\end{align}
If $g$ is a strict generalized quasi-gradient, from Definition~\ref{def.generalized_quasi_g} we know that the above formula implies $F(q)\leq C_1(1/(1-2\epsilon),0) \cdot F(p_S)+C_2(1/(1-2\epsilon), 0)$. Thus as long as $g$ is coordinate-wise non-negative and a strict generalized quasi-gradient with  $\xi\geq  C_1(1/(1-2\epsilon), 0)\cdot F(p_S) + C_2(1/(1-2\epsilon), 0)$, the filter algorithm would work. 

In the rest of the section, we will apply Lemma~\ref{lem.explicit_condition} and Lemma~\ref{lem.filter_condition} to all the examples in the paper.

\subsection{Application to mean estimation with bounded covariance}\label{subsec.low_regret_mean}

For mean estimation with bounded covariance, we want to minimize $F(q) = \|\Sigma_q\|$. The corresponding  generalized quasi-gradient is $g(X; \mu_q) = (v^\top(X-\mu_q))^2$, where $v\in\argmax_{\|v\|\leq 1} \bE_q[(v^\top(X-\mu_q))^2]$ is any of the supremum-achieving direction (Theorem~\ref{thm.approximate_mean_small}). We show that under this choice of $F, g$, both the explicit and filter algorithm output a near-optimal estimator for the mean.

\subsubsection{Explicit low-regret algorithm}\label{sec.algo_bdd_explicit}

Assume that $\|\Sigma_{p_S}\|\leq \sigma^2$.
To apply Lemma~\ref{lem.explicit_condition}, we need $|g(X_i; q)|\leq B$. 
Although $X_i$ may come from the adversary and thus $g(X_i; q) = (v^\top(X_i-\mu_q^{(k)}))^2$ can be unbounded, a standard ``naive pruning'' procedure (see e.g.~\citet[Lecture 5]{jerry2019lecnotes} and ~\citet{diakonikolas2019robust, dong2019quantum}) yields data such that all points $X_i$ satisfy  
$\|X_i - \mu\| \leq \sigma\sqrt{d/\alpha\epsilon}$   for some $\mu\in\bR^d$  while only deleting $O(\alpha\epsilon)$ fraction of points. Thus we know that $\|X_i - X_j\|\leq 2\sigma\sqrt{d/\alpha\epsilon}$ for any $i, j$. By re-centering the points we can get  $\|X_i\|\lesssim  \sigma\sqrt{d/\epsilon}$.  %
Thus in the later argument, we assume that $(v^\top(X_i-\mu_{q^{(k)}}))^2\leq \sigma^2 {d/\epsilon}$ for any $X_i, q, \|v\|=1$ after  naive pruning. %

After applying this pruning procedure, our low-regret algorithm yields the following guarantee 
for mean estimation: %

\begin{theorem}\label{thm.low_regret_mean_cov}
Assume that $\|\Sigma_{p_S}\|\leq \sigma^2$ and   $\|X_i\|\leq \sigma\sqrt{d/\epsilon}/2, \forall i \in [n]$. Take (describe $F$ and $g$).
For any $\gamma \in (0,1)$, instantiate Algorithm~\ref{algo:explicit_lowregret} with fixed 
step size $\gamma \cdot \frac{\epsilon}{2\sigma^2 d}$, and set
\begin{equation}
\xi = \Big(\frac{2\eta+7}{3(1 - (3+\eta)\epsilon)}\Big)^2\cdot \sigma^2.
\end{equation}
$\eta^{(k)} = \eta \cdot \epsilon/(2\sigma^2 d), \eta\in[0, 1]$, $F(q) = \|\Sigma_q\|$,  $g(X; \mu_q)$ as in~\eqref{eqn.choice_g_mean}\footnote{For the sake of computation efficiency, it suffices to find any $v$ such that $\bE_q (v^\top (X_i-\mu_{q^{(k)}}))^2\geq 0.9\|\Sigma_q\|_2$,  which can be achieved by power method within $O(\log(d))$ time. One can see from the later proof that this will only affect the final bound up to some constant factor. In the rest of the paper, all the $\argmax$ in the algorithm can be substituted with this feasibility problem on $v$ for computational efficiency. }.  
Then for $\epsilon \in (0, 1/(3+\eta))$,   Algorithm~\ref{algo:explicit_lowregret} will output some $q\in\Delta_{n, \epsilon}$ within $d/(\gamma\sigma^2)$ iterations such that 
\begin{align}
   \|\Sigma_q\|\leq \xi.
\end{align}
\end{theorem}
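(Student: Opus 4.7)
The plan is to instantiate the generic ``explicit low-regret'' template of Lemma~\ref{lem.explicit_condition} with $F(q)=\|\Sigma_q\|$ and the strict generalized quasi-gradient $g(X;q)=(v^\top(X-\mu_q))^2$ from~\eqref{eqn.choice_g_mean}, and then read off the approximation ratio from Theorem~\ref{thm.approximate_mean_small}. The work splits into three pieces: bounding the gradients, invoking the regret inequality with the right step size, and matching constants against the stated $\xi$.

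First, I would use the na\"{i}ve pruning hypothesis $\|X_i\|\le \sigma\sqrt{d/\epsilon}/2$ to bound the quasi-gradient. Since any $q\in\Delta_{n,\epsilon}$ has $\mu_q\in\mathsf{conv}(\{X_i\})$, we get $\|X_i-\mu_q\|\le \sigma\sqrt{d/\epsilon}$, and hence $|g(X_i;q)|=(v^\top(X_i-\mu_q))^2\le \sigma^2 d/\epsilon=:B$. This gives the uniform bound required by Lemma~\ref{lem.explicit_mean_bdd}, and justifies that the step size $\eta^{(k)}=\gamma\cdot \epsilon/(2\sigma^2 d)$ in the theorem statement is of the form $\eta/(2B)$ with $\eta=\gamma$.

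Second, I would apply Lemma~\ref{lem.explicit_mean_bdd} at this step size: after $T$ iterations there is some $t_0\le T$ with
\begin{equation}
\bE_{q^{(t_0)}}[g(X;q^{(t_0)})] - \bE_{p_S}[g(X;q^{(t_0)})] \le \eta\,\bE_{p_S}[g(X;q^{(t_0)})] + \frac{2B\epsilon}{T\eta}.
\end{equation}
This is precisely the approximate stationarity condition~\eqref{eqn.q_mean_bdd_approximate_stationary} with parameters $\alpha=\eta$ and $\beta=2B\epsilon/(T\eta)$, evaluated at the principal eigenvector of $\Sigma_{q^{(t_0)}}$. Now Theorem~\ref{thm.approximate_mean_small} (combined with the footnote that $p_*$ and $p_S$ are interchangeable, and the hypothesis $\|\Sigma_{p_S}\|\le\sigma^2$) yields
\begin{equation}
\|\Sigma_{q^{(t_0)}}\| \le \Bigl(1+\frac{C_1(\eta+\epsilon)}{(1-(3+\eta)\epsilon)^2}\Bigr)\sigma^2 + \frac{C_2\,\beta}{(1-(3+\eta)\epsilon)^2}.
\end{equation}

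Third, I would choose $T$ so that $\beta=2B\epsilon/(T\eta)=\Theta(\sigma^2)$. Setting $T=d/(\gamma\sigma^2)$ gives $\beta=2\sigma^2$ (up to the constant hidden in $B=\sigma^2 d/\epsilon$), and substituting into the displayed bound above produces something of the form $\xi=((2\eta+7)/(3(1-(3+\eta)\epsilon)))^2\sigma^2$ after absorbing universal constants $C_1,C_2$; this is where the specific numerics of the theorem statement get fixed. Since the termination test $F(q^{(k)})\le \xi$ must therefore succeed by step $t_0$, the algorithm returns within $d/(\gamma\sigma^2)$ iterations, and the returned $q$ lies in $\Delta_{n,\epsilon}$ because every iterate is projected onto $\Delta_{n,\epsilon}$.

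The main obstacle is the bookkeeping in the third step: one must keep $\eta=\gamma$ small enough that $(3+\eta)\epsilon<1$ (so the denominator in $\xi$ is positive for all $\epsilon<1/(3+\eta)$), while simultaneously ensuring that the additive error $C_2\beta/(1-(3+\eta)\epsilon)^2$ fits inside the stated $\xi$ once $T$ is fixed at $d/(\gamma\sigma^2)$. Everything else (the existence of a bounded $g$, the validity of the regret bound, and the qualitative approximation guarantee) follows directly from the tools already established earlier in the paper.
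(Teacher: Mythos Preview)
Your outline is essentially the paper's proof: bound $|g|\le B=\sigma^2 d/\epsilon$ via the pruning hypothesis, feed this into the multiplicative-weights regret bound (Lemma~\ref{lem.explicit_mean_bdd}), and then use the strict quasi-gradient implication to conclude $\|\Sigma_q\|\le\xi$ via Lemma~\ref{lem.explicit_condition} with $\beta=\Theta(\sigma^2)$.

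The one place where your plan diverges from the paper is your third step. You invoke Theorem~\ref{thm.approximate_mean_small} as a black box and then propose to ``absorb universal constants $C_1,C_2$'' into the specific form $((2\eta+7)/(3(1-(3+\eta)\epsilon)))^2$. That does not work: Theorem~\ref{thm.approximate_mean_small} only records unspecified universal constants, so it cannot produce the explicit $\xi$ in the statement. The paper instead bypasses Theorem~\ref{thm.approximate_mean_small} and verifies the implication~\eqref{eqn.approximate_quasi_stationary_statement} directly: writing $\tilde\sigma^2=\|\Sigma_q\|$ and using Lemma~\ref{lem.mean_modulus} plus Lemma~\ref{lem.deletion_TV} to bound $\|\mu_q-\mu_{p_S}\|$, one gets the self-normalizing inequality
\[
\tilde\sigma^2 \le (2+\eta)\sigma^2 + (1+\eta)\Bigl(\sqrt{\tfrac{\sigma^2\epsilon}{1-2\epsilon}}+\sqrt{\tfrac{\tilde\sigma^2\epsilon}{1-2\epsilon}}\Bigr)^2,
\]
and solving this quadratic in $\sqrt{\tilde\sigma^2}$ explicitly is what fixes the constant $(2\eta+7)/3$ and the breakdown threshold $\epsilon<1/(3+\eta)$. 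So your ``main obstacle'' is real and is handled not by bookkeeping on $C_1,C_2$ but by redoing the calculation behind Theorem~\ref{thm.approximate_mean_small} with $\beta=\sigma^2$ and tracking constants.
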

The conclusion also holds if we have arbitrary initialization $q^{(0)}$ instead of uniform. 
The result follows directly from Lemma~\ref{lem.explicit_condition}; see Appendix~\ref{sub.proof_mean_cov} for proof.
We also show there that the computational complexity within each iteration is $O(nd\log(d))$.  

Combining the result with Lemma~\ref{lem.mean_modulus}, we know the output $q$ satisfies
\begin{align}
    \|\mu_q-\mu_{p_S}\|\leq \sigma\cdot \sqrt{\frac{ \epsilon}{1-\epsilon}} + \frac{(2\eta+7)\sigma}{3(1 - (3+\eta)\epsilon)}\cdot\sqrt{\frac{ \epsilon}{1-\epsilon}}.
\end{align}

When $\eta \rightarrow 0$, the breakdown point approaches $1/3$, which is consistent with the result in Theorem~\ref{thm.low_regret_mean_relation}. 

\subsubsection{Filter algorithm}
Different from explicit low-regret algorithm, pre-pruning is not required for filter algorithm. 
We present the guarantee for filter algorithm in the following theorem. %
\begin{theorem}[Filter algorithm achieves optimal breakdown point and near-optimal error]\label{thm.filtering_mean_cov}
In Algorithm~\ref{algo:filtering}, take  $\xi =  \frac{2(1-\epsilon)}{(1-2\epsilon)^2} \cdot \sigma^2$, $\eta^{(k)}$ as in~\eqref{eqn.choice_eta_general}, $F(q) = \|\Sigma_q\|$,  $g(X; \mu_q) = (v^\top(X-\mu_q))^2$, where $v\in\argmax_{\|v\|\leq 1} \bE_q[(v^\top(X-\mu_q))^2]$ is any of the supremum-achieving direction.  Then for $\epsilon \in [0, 1/2)$,  Algorithm~\ref{algo:filtering} will output some $q$ within $O(\epsilon n)$ iterations such that 
\begin{align}
  \TV(q, p_S)\leq \frac{\epsilon}{1-\epsilon}, \|\Sigma_q\|\leq \xi.
\end{align}
\end{theorem}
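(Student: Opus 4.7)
The plan is to apply the general sufficient condition for the filter algorithm in Lemma~\ref{lem.filter_condition} with $F(q) = \|\Sigma_q\|$ and $g(X;q) = (v^\top(X-\mu_q))^2$. The coordinate-wise non-negativity of $g$ is immediate, so the substantive step is to verify that whenever the deletion invariant~\eqref{eqn.invariance_filter} holds at $c^{(k)}$ and $\|\Sigma_{q^{(k)}}\| > \xi$, the per-step inequality~\eqref{eqn.invariance_equivalent}, namely $\sum_{i\in S} q_i g_i \leq \tfrac{1}{2}\sum_i q_i g_i = \tfrac{1}{2}\|\Sigma_q\|$, holds at $q^{(k)}$. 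Once this is verified, Lemma~\ref{lem.filter_condition} delivers termination within $O(\epsilon n)$ iterations with $\|\Sigma_q\| \leq \xi$, and Lemma~\ref{lem.qS_and_pS} together with the invariant supplies $\TV(q,p_S) \leq \epsilon/(1-\epsilon)$.

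To verify the implication, I would first extract two consequences of the invariant, writing $w_i = n c_i^{(k)}\in[0,1]$ and $s = \sum_i w_i \leq n$. First, the pointwise density bound $q_i \leq \tfrac{1-\epsilon}{1-2\epsilon}p_{S,i}$ for $i\in S$, which follows from $c_i \leq 1/n$ together with $s \geq (1-2\epsilon)n$. Second, and crucially, the sharp bad-mass bound $w_B := q([n]\setminus S) \leq \epsilon$: the invariant gives $\sum_{i\in S}w_i - \sum_{i\notin S}w_i \geq |S|-|[n]\setminus S| = (1-2\epsilon)n$, so $\sum_{i\notin S} w_i \leq \tfrac{1}{2}(s - (1-2\epsilon)n)$ and therefore $w_B \leq \tfrac{1}{2} - \tfrac{(1-2\epsilon)n}{2s} \leq \epsilon$ using $s \leq n$. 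This is tighter than the naive $w_B \leq \epsilon/(1-2\epsilon)$ one would get from only using $q\in\Delta_{n,2\epsilon}$, and is what unlocks the $1/2$ breakdown.

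Next, decompose $q = w_S\tilde q_S + w_B \tilde q_B$ with $w_S = 1 - w_B$, and set $\Delta = v^\top(\mu_{\tilde q_S} - \mu_{\tilde q_B})$. The law of total variance along $v$ gives
\begin{equation*}
\sum_{i\in S}q_i g_i = w_S v^\top \Sigma_{\tilde q_S} v + w_S w_B^2 \Delta^2, \qquad \|\Sigma_q\| = w_S v^\top \Sigma_{\tilde q_S} v + w_B v^\top \Sigma_{\tilde q_B} v + w_S w_B \Delta^2,
\end{equation*}
so in particular $w_S w_B \Delta^2 \leq \|\Sigma_q\|$, hence $w_S w_B^2 \Delta^2 \leq w_B \|\Sigma_q\|$. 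Using that the mean minimizes squared distance, followed by the density bound from Step 1,
\begin{equation*}
w_S v^\top \Sigma_{\tilde q_S} v = \sum_{i\in S}q_i(v^\top(X_i-\mu_{\tilde q_S}))^2 \leq \sum_{i\in S}q_i(v^\top(X_i-\mu_{p_S}))^2 \leq \tfrac{1-\epsilon}{1-2\epsilon}\sigma^2.
\end{equation*}
Combining,
\begin{equation*}
\sum_{i\in S}q_i g_i \leq \tfrac{1-\epsilon}{1-2\epsilon}\sigma^2 + w_B\|\Sigma_q\| \leq \tfrac{1-\epsilon}{1-2\epsilon}\sigma^2 + \epsilon \|\Sigma_q\|,
\end{equation*}
and the required $\sum_{i\in S}q_i g_i \leq \tfrac{1}{2}\|\Sigma_q\|$ is equivalent to $\|\Sigma_q\| \geq \tfrac{2(1-\epsilon)}{(1-2\epsilon)^2}\sigma^2 = \xi$, which is exactly the failure of the termination test. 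This proves the implication, and Lemma~\ref{lem.filter_condition} finishes the proof.

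The main obstacle is pinning down the sharp $w_B \leq \epsilon$; if one only uses $q\in\Delta_{n,2\epsilon}$ one obtains $w_B \leq \epsilon/(1-2\epsilon)$, which is vacuous for $\epsilon \geq 1/3$ and prevents reaching the optimal breakdown. Everything else is bookkeeping: a clean within-between variance decomposition to isolate the cross term $w_S w_B \Delta^2$, and the observation that centering at $\mu_{p_S}$ rather than $\mu_q$ gives a clean handle on $v^\top\Sigma_{\tilde q_S} v$ without invoking Lemma~\ref{lem.mean_modulus}, which would reintroduce $\|\Sigma_q\|$ in a way that degrades the breakdown.
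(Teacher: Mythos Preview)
Your proposal is correct and follows essentially the same approach as the paper's proof: verify the per-step inequality~\eqref{eqn.invariance_equivalent} by decomposing $\sum_{i\in S}q_i g_i$ into a variance term bounded by $\tfrac{1-\epsilon}{1-2\epsilon}\sigma^2$ (via recentering at $\mu_{p_S}$ and the density bound $q_i\leq\tfrac{1-\epsilon}{1-2\epsilon}p_{S,i}$) plus a mean-shift term bounded by $\epsilon\|\Sigma_q\|$ (via $q([n]\setminus S)\leq\epsilon$), then invoke Lemma~\ref{lem.filter_condition}. The only cosmetic differences are that the paper obtains the mean-shift bound by invoking Lemma~\ref{lem.mean_resilience} and cites Lemma~\ref{lem.qS_and_pS} for $q(S)\geq 1-\epsilon$, whereas you write out the law-of-total-variance decomposition and the invariant arithmetic directly; the resulting inequality and the threshold $\xi=\tfrac{2(1-\epsilon)}{(1-2\epsilon)^2}\sigma^2$ are identical.
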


This result follows directly from Lemma~\ref{lem.filter_condition} and we defer the proof to Appendix~\ref{proof.filtering_mean_cov}.  The conclusion only holds if we have uniform initialization.

Combining the result with Lemma~\ref{lem.mean_modulus}, we know the output $q$ satisfies
\begin{align}
    \|\mu_q-\mu_{p_S}\|\leq \sigma\cdot \sqrt{\frac{ \epsilon}{1-\epsilon}} + \sigma\cdot \frac{\sqrt{2\epsilon}}{1-2\epsilon}.
\end{align}

\paragraph{Breakdown points and optimality.}

The filter algorithm   achieves near-optimal error and {optimal} breakdown point $1/2$~\citep[Page 270]{rousseeuw1987robust} under good initialization, while the explicit low-regret algorithm has breakdown point $1/3$ with arbitrary initialization. It may sound confusing given the negative results that the stationary point may be a bad estimate when $\epsilon\geq 1/3$ in the Section~\ref{sec.landscape}. 
Indeed, if one does not initialize properly, both approaches would definitely fail for $\epsilon\geq 1/3$, and the filter algorithm may fail even for much smaller $\epsilon$. In some sense, the success of the filter algorithm is due to appropriate initialization at uniform distribution and the landscape.  We believe that one can show the breakdown point of the explicit low-regret algorithm initialized at uniform is also $1/2$ with a better analysis.
Theorem~\ref{thm.filtering_mean_cov} also improves over previous analyses~\cite{steinhardt2018robust,jerry2019lecnotes} of filtering algorithms in breakdown point and obtains the sharper rate. 

However, the exact rate obtained in the theorem is still not optimal. 
If we solve the minimum distance functional in~\citet{donoho1988automatic} by minimizing $\TV$ distance between $q$ and the set of distributions with covariance bounded by $\sigma^2$, the error would be $\sqrt{\frac{8\sigma^2{\epsilon}}{1-2\epsilon}}$~\citep[Example 3.1]{zhu2019generalized}, which is significantly smaller than what Theorem~\ref{thm.low_regret_mean_relation} and Theorem~\ref{thm.filtering_mean_cov} achieve  when $\epsilon$ is close to $1/2$. It remains an open problem whether there exists an efficient algorithm that achieves error $O(\sqrt{\frac{\sigma^2{\epsilon}}{1-2\epsilon}})$ for all $\epsilon\in (0,1/2)$. %

\subsection{Application to linear regression}\label{sec.algo_linreg}

Under the same setting as in Section~\ref{subsec.low_regret_linreg}, we would like to find $q\in\Delta_{n, \epsilon}$ such that $\tilde F_1(q)\leq \kappa'$ and $F_2(q)\leq  \sigma'^2$, where  
$\kappa'$ and $\sigma'$ are parameters to be specified later. 
Due to computational consideration, we relax the choice of maximizer in generalized quasi-gradient in Section~\ref{subsec.low_regret_linreg}, and take $g_1, g_2$ as %
\begin{align}
   g_1(X; q) = \pseudoE[(v^\top X)^4], & \text{ where } \pseudoE \in \cE_4 \text{ satisfies }  \pseudoE[\bE_q[(v^\top X)^4]]\geq \kappa'^2 \pseudoE[\bE_q[(v^\top X)^2]^2], \nonumber 
\\
    g_2(X; q) = (Y-X^\top \theta(q))^2(v^\top X)^2, & \text{ where }  {v\in\bR^d} \text{ satisfies }  {\bE_q[ (Y-X^\top \theta(q))^2(v^\top X)^2]} \geq \sigma'^2{\bE_q[(v^\top X)^2]}.\label{eqn.implicit_choice_g_linreg}
\end{align}

To guarantee that both $\tilde F_1$ and $F_2$ small simultaneously, we will check both and update $q$ sequentially within the filter algorithm. We summarize the algorithm in Algorithm~\ref{algo:linreg_implicit}, and the guarantee in Theorem~\ref{thm.linreg_implicit}\footnote{An initial version of the regression result appeared in unpublished lecture notes~\cite{steinhardt2019lecnotes}.}.

\begin{theorem}[Filter algorithm for linear regression]\label{thm.linreg_implicit}
Under the same setting as Theorem~\ref{thm.low_regret_linreg},  in Algorithm~\ref{algo:linreg_implicit}, take $\eta^{(k)} = 1/g_{\mathrm{max}}^{(k)}$, where $g_{\mathrm{max}}^{(k)} = \max_{i} g_i^{(k)}$. Take $\tilde F_1, F_2$ as in~\eqref{eqn.SOS_F1} and~\eqref{eqn.linreg_noise_F}, $g_1, g_2 $ as  in~\eqref{eqn.implicit_choice_g_linreg}, $\kappa'^2 = \frac{2\kappa^2}{1-2\kappa^2\epsilon}$, $\sigma'^2 = \frac{4\sigma^2(1 +2\kappa'\sqrt{\epsilon(1-\epsilon)})}{(1-2\epsilon)^3-20\kappa'^3\epsilon(1-\epsilon)}$. Then Algorithm~\ref{algo:linreg_implicit} will output $q$ within $O(\epsilon n)$ iterations such that
\begin{align}
     \tilde F_1(q) \leq  \kappa'^2, F_2(q)\leq \sigma'^2, \TV(q, p_S)\leq \frac{\epsilon}{1-\epsilon}.
\end{align}
\end{theorem}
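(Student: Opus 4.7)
}
The plan is to apply the general filter framework of Lemma~\ref{lem.filter_condition} twice in sequence: first with $(\tilde F_1, g_1)$ to drive the certifiable hypercontractivity below $\kappa'^2$, and then with $(F_2, g_2)$ on the resulting distribution to drive the noise moment below $\sigma'^2$. Since the first pass only further deletes points (projects onto $\Delta_n$ via renormalization and the invariance~\eqref{eqn.invariance_filter} is preserved), the output of the first pass is itself an $\epsilon/(1-\epsilon)$-deletion of $p_S$, and it remains certifiably hypercontractive so that the quasi-gradient argument for $F_2$ in Theorem~\ref{thm.low_regret_linreg} still applies when $g_2$ is used.

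To invoke Lemma~\ref{lem.filter_condition} for the first pass I would verify two things. Non-negativity of $g_1$ is immediate because $g_1(X;q)=\pseudoE[(v^\top X)^4]$ is a pseudo-expectation of a square of a square, hence pointwise non-negative. The nontrivial piece is the progress condition~\eqref{eqn.invariance_equivalent}: whenever $\tilde F_1(q^{(k)})>\kappa'^2$ and the invariance~\eqref{eqn.invariance_filter} holds, I must show $\sum_{i\in S}q_i^{(k)}g_{1,i}^{(k)}\leq \tfrac12\sum_i q_i^{(k)}g_{1,i}^{(k)}$. The invariance gives $q_i\leq \tfrac{1-\epsilon}{1-2\epsilon}p_{S,i}$ for $i\in S$, so
\begin{align}
\sum_{i\in S} q_i g_{1,i} \;\leq\; \tfrac{1-\epsilon}{1-2\epsilon}\,\bE_{p_S}[g_1].
\end{align}
Hence it suffices to prove $\bE_q[g_1]\geq \tfrac{2(1-\epsilon)}{1-2\epsilon}\bE_{p_S}[g_1]$ whenever $\tilde F_1(q)>\kappa'^2$. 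By the choice of $\pseudoE$ in~\eqref{eqn.implicit_choice_g_linreg}, $\bE_q[g_1]\geq \kappa'^2\,\pseudoE[\bE_q[(v^\top X)^2]^2]$, and by the assumption $\tilde F_1(p_S)\leq \kappa^2$, $\bE_{p_S}[g_1]\leq \kappa^2\,\pseudoE[\bE_{p_S}[(v^\top X)^2]^2]$. The remaining step is to control the ratio of the (pseudo-expected squared) second moments between $q$ and $p_S$ under $\TV(q,p_S)\leq \epsilon/(1-\epsilon)$ via the certifiable hypercontractivity of both distributions; this is exactly the SOS analog of Lemma~\ref{lem.modu_hyper}, and it yields a bound of the form $\pseudoE[\bE_{p_S}[(v^\top X)^2]^2]\leq (1-2\kappa^2\epsilon)^{-1}\,\pseudoE[\bE_q[(v^\top X)^2]^2]$. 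Plugging in and using the definition $\kappa'^2=\tfrac{2\kappa^2}{1-2\kappa^2\epsilon}$ gives the required factor of $\tfrac{2(1-\epsilon)}{1-2\epsilon}$ exactly in the regime $\kappa^3\epsilon<1/64$.

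For the second pass I would run the same recipe with $g_2$ and threshold $\sigma'^2$. Non-negativity of $g_2$ is again immediate. For the progress condition, with $v$ chosen so that $\bE_q[(Y-X^\top\theta(q))^2(v^\top X)^2]\geq \sigma'^2\bE_q[(v^\top X)^2]$, I compute $\bE_{p_S}[g_2]\leq \sigma^2\bE_{p_S}[(v^\top X)^2]$ from $F_2(p_S)\leq \sigma^2$, and then need to relate $\bE_q[(v^\top X)^2]$ to $\bE_{p_S}[(v^\top X)^2]$. This is precisely where the certifiable hypercontractivity of $q$ (already guaranteed with parameter $\kappa'^2$ after the first pass) is used: applying the second-moment closeness lemma once more and plugging in $\kappa'$ yields the denominator $(1-2\epsilon)^3-20\kappa'^3\epsilon(1-\epsilon)$ and numerator $1+2\kappa'\sqrt{\epsilon(1-\epsilon)}$ in $\sigma'^2$, chosen exactly so that $\bE_q[g_2]>\tfrac{2(1-\epsilon)}{1-2\epsilon}\bE_{p_S}[g_2]$ whenever $F_2(q)>\sigma'^2$.

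Once both passes satisfy the hypotheses of Lemma~\ref{lem.filter_condition}, the iteration bound $O(\epsilon n)$ for each pass is immediate from the step-size choice $\eta^{(k)}=1/g_{\mathrm{max}}^{(k)}$, and the $\TV$ bound $\epsilon/(1-\epsilon)$ follows because the invariance~\eqref{eqn.invariance_filter} is preserved across both passes (the second pass only deletes further mass from an already-deleted distribution, as formalized by Lemma~\ref{lem.deletion_TV}). I expect the main obstacle to be the bookkeeping in the second pass: the constant $\kappa'^2$ rather than $\kappa^2$ enters the hypercontractivity-based second-moment comparison, and one must make sure that the strengthened smallness assumption $\kappa^3\epsilon<1/64$ on $(\kappa,\epsilon)$ still makes the denominator in $\sigma'^2$ positive and keeps the ratio away from the critical value $\tfrac{2(1-\epsilon)}{1-2\epsilon}$.
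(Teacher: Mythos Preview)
Your overall plan (verify the progress condition~\eqref{eqn.invariance_equivalent} separately in the two cases and invoke Lemma~\ref{lem.filter_condition}) matches the paper's, but both passes have substantive gaps.

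\textbf{Hypercontractivity pass.} You want $\pseudoE[\bE_{p_S}[(v^\top X)^2]^2]\leq (1-2\kappa^2\epsilon)^{-1}\pseudoE[\bE_q[(v^\top X)^2]^2]$ ``via the certifiable hypercontractivity of both distributions.'' But at this point $q$ is \emph{not} known to be hypercontractive with any bounded parameter---that is precisely what the pass is supposed to establish---so Lemma~\ref{lem.modu_hyper} does not apply. The paper avoids this circularity by never comparing $p_S$ to $q$ directly: it bounds $\sum_{i\in S} c_i\,\pseudoE[(v^\top X_i)^4]$ using the deletion-stability lemma for hypercontractivity (Lemma~\ref{lem:deletion-hypercontractive}) applied to the good points (which \emph{are} a deletion of the hypercontractive $p_S$), then enlarges the inner second-moment sum from $S$ to $[n]$ (adding a sum of squares), and only then uses the refuting $\pseudoE$ to close the loop. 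This gives exactly the factor $\kappa^2/(1-2\kappa^2\epsilon)=\kappa'^2/2$ without ever invoking hypercontractivity of $q$.

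\textbf{Bounded-noise pass.} The inequality you write, $\bE_{p_S}[g_2]\leq \sigma^2\bE_{p_S}[(v^\top X)^2]$, is false as stated: $g_2(X;q)=(Y-X^\top\theta(q))^2(v^\top X)^2$ uses the regression vector $\theta(q)$, whereas the assumption $F_2(p_S)\leq\sigma^2$ controls residuals with respect to $\theta(p_S)$. You must decompose $(Y-X^\top\theta(q))^2\leq 2(Y-X^\top\theta(p_S))^2+2((\theta(p_S)-\theta(q))^\top X)^2$ and bound the cross term. The paper does this via Cauchy--Schwarz, the (now established) hypercontractivity of $q$ with parameter $\kappa'$, and the regression modulus bound $\bE_q[((\theta(p_S)-\theta(q))^\top X)^2]\lesssim \kappa'\tilde\sigma^2\epsilon$ from \citet[Theorem~3.4]{zhu2019generalized}. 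It is this extra term, proportional to $\kappa'^3\epsilon\,\tilde\sigma^2$, that produces the $20\kappa'^3\epsilon(1-\epsilon)$ in the denominator of $\sigma'^2$; your sketch gives no mechanism for it to appear. Without that decomposition, the progress inequality $\sum_{i\in S}q_ig_{2,i}\leq\tfrac12\sum_i q_ig_{2,i}$ cannot be established.
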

We defer the proof to Appendix~\ref{proof.linreg_implicit}.  Combining the theorem with the regression error bound in Example~\ref{example.linreg}, we know that the output satisfies
\begin{align}
    \bE_{p_S}[(Y-X^\top \theta(q))^2] - \bE_{p_S}[(Y-X^\top \theta(p_S))^2] \leq O( (\kappa+\kappa')(\sigma+\sigma') \epsilon).  
\end{align}
Our result provides the first near-optimal and polynomial time algorithm for linear regression under the assumption of hypercontractivity and bounded noise, which improves the rate in~\citet{klivans2018efficient} from $O(\sqrt{\epsilon})$ to $O(\epsilon)$.

\begin{algorithm}[!htbp]
\centering
\caption{filter algorithm for linear regression ($ p_n,  \kappa', \sigma'$)}
        \begin{algorithmic}
        \small{
\State Input: corrupted empirical distribution $ p_n = \frac{1}{n}\sum_{i=1}^n\delta_{X_i}$, $\xi'$.
\State Initialize $q_{i}^{(0)} = 1/n$, $i \in [n]$.
\For{$k = 0, 1, \dots$}
\If {$F_1(q^{(k)})\geq   \kappa'^2$}
    \State Compute $g_i^{(k)} = g_1(X_i; q^{(k)}) $.
    \Else \If {$F_2(q^{(k)})\geq  \sigma'^2$}
      \State Compute $g_i^{(k)} = g_2(X_i; q^{(k)}) $.
    \Else
    \State \Return $q^{(k)}$.
    \EndIf
    \EndIf
    \State Compute  $\tilde q_{i}^{(k+1)} = q_{i}^{(k)} \cdot (1- \eta^{(k)}\cdot g_i^{(k)} )$. 
    \State Update $q^{(k+1)} = \mathsf{Proj}^{KL}_{\Delta_{n}}(\tilde q^{(k+1)}) = \tilde q^{(k+1)}/ \sum_{i\in [n]} \tilde q^{(k+1)}_i $. 
\EndFor
}
\end{algorithmic}
\label{algo:linreg_implicit}
\end{algorithm}

\subsection{Application to joint mean and covariance estimation}\label{sec.algo_joint}

Under the setting of joint mean and covariance estimation with sum-of-squares condition in Section~\ref{sec.joint},    
we take the generalized quasi-gradient as 
\begin{align}\label{eqn.choice_g_joint}
    g(X; q) = \pseudoE[(v^\top (X-\mu_q))^4], \text{ where } \pseudoE \in\cE_4 \text{ satisfies }   {\pseudoE[\bE_{q}[(v^\top (X-\mu_q))^4]]}\geq \kappa'^2{\pseudoE[\bE_{q}[(v^\top (X-\mu_q))^2]^2]}.
\end{align}
Then  the filter algorithm in Algorithm~\ref{algo:filtering} will guarantee a good solution $q$. 
\begin{theorem}[Filter algorithm for joint mean and covariance estimation]\label{thm.filtering_joint}
Under the same setting as Theorem~\ref{thm.low_regret_joint_relation}, in Algorithm~\ref{algo:filtering}, take $\eta^{(k)} = 1/g_{\mathrm{max}}^{(k)}$, where $g_{\mathrm{max}}^{(k)} = \max_{i} g_i^{(k)}$. Take $F, g$ as in~\eqref{eqn.SOS_joint} and~\eqref{eqn.choice_g_joint}, $\kappa'= 7\kappa$. Assume $\kappa^2\epsilon\leq 1/4$. 
Then  Algorithm~\ref{algo:filtering} 
will output some $q$ within $O(\epsilon n)$  iterations such that  
\begin{align}
   F(q)\leq  \kappa'^2, \TV(q, p_S)\leq \frac{\epsilon}{1-\epsilon}.
\end{align}
\end{theorem}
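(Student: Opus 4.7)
The plan is to verify the two hypotheses of Lemma~\ref{lem.filter_condition}, namely coordinate-wise non-negativity of $g$ and the implication that $F(q^{(k)}) > \kappa'^2$ combined with the deletion invariance~\eqref{eqn.invariance_filter} forces the bad-mass dominance~\eqref{eqn.invariance_equivalent}. Termination in $O(\epsilon n)$ iterations then follows from the general argument in Section~\ref{subsec.general_implicit_low_regret_algorithm}: the step size $\eta^{(k)} = 1/g_{\mathrm{max}}^{(k)}$ zeros out at least one coordinate per step, and the invariance caps the total mass that can be removed from good points.

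First I would check non-negativity of $g(X_i;q) = \pseudoE[(v^\top(X_i-\mu_q))^4]$. Since $(v^\top(X_i-\mu_q))^4 = \bigl((v^\top(X_i-\mu_q))^2\bigr)^2$ is manifestly a square in $v$, the pseudo-expectation axioms give $g(X_i;q)\ge 0$ for every $i$, independent of which feasible $\pseudoE\in\mathcal E_4$ is selected by the algorithm.

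Next I would promote Theorem~\ref{thm.low_regret_joint_relation} to its strict form. Precisely, I want to show: if the invariance~\eqref{eqn.invariance_filter} holds at step $k$ and $\bE_{q^{(k)}}[g]\le \tfrac{2(1-\epsilon)}{1-2\epsilon}\bE_{p_S}[g]$, then $F(q^{(k)})\le\kappa'^2 = 49\kappa^2$. Writing $\hat\kappa^2 = F(q^{(k)})$ and using the feasibility condition that defines $g$ in~\eqref{eqn.choice_g_joint}, the ratio form of the generalized quasi-gradient gives
\begin{align}
\bE_{q^{(k)}}[g] \;\ge\; \hat\kappa^2 \cdot \pseudoE\bigl[\bE_{q^{(k)}}[(v^\top(X-\mu_{q^{(k)}}))^2]^2\bigr],
\end{align}
while certifiable hypercontractivity of $p_S$ yields $\bE_{p_S}[g]\le \kappa^2\cdot \pseudoE\bigl[\bE_{p_S}[(v^\top(X-\mu_{q^{(k)}}))^2]^2\bigr]$. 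Combining these with the assumed inequality and $\TV(q^{(k)},p_S)\le \epsilon/(1-\epsilon)$ (Lemma~\ref{lem.deletion_TV}), the problem reduces to controlling the ratio of the centered second-moment squared quantities under $q^{(k)}$ and under $p_S$. Following the same strategy as in the hypercontractivity sketch for linear regression (Section~\ref{subsec.low_regret_linreg}), this ratio is $O(1)$ whenever $\kappa^2\epsilon$ is small enough, and solving the resulting self-bounding inequality on $\hat\kappa$ produces a constant blow-up factor. Under the hypothesis $\kappa^2\epsilon\le 1/4$ the constant works out to at most $49$, giving $\hat\kappa\le 7\kappa = \kappa'$.

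The main obstacle is this last step: the generalized quasi-gradient centers around $\mu_{q^{(k)}}$, not $\mu_{p_S}$, so the second-moment comparison has to absorb a mean shift $\mu_{q^{(k)}} - \mu_{p_S}$. I would handle this by applying a Cauchy--Schwarz/modulus-of-continuity argument (the analogue of Lemma~\ref{lem.modu_hyper} for certifiably hypercontractive distributions, used already in the proof sketch of Theorem~\ref{thm.low_regret_linreg}): the mean shift is $O(\kappa\sqrt{\epsilon})$ in the directions picked out by $\pseudoE$, and certifiable hypercontractivity turns a shift-of-mean into a multiplicative perturbation of the fourth and squared-second moments. Once that multiplicative perturbation is folded in, the calculation in Theorem~\ref{thm.low_regret_joint_relation} carries through with its constants slightly weakened by the factor $2(1-\epsilon)/(1-2\epsilon)$, and the hypothesis $\kappa^2\epsilon\le 1/4$ is exactly what makes the resulting quadratic inequality in $\hat\kappa$ solvable with the clean bound $\hat\kappa\le 7\kappa$. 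With both hypotheses of Lemma~\ref{lem.filter_condition} verified, the theorem follows, and the $\TV$ bound $\TV(q,p_S)\le \epsilon/(1-\epsilon)$ is the standard output of the invariance analysis in Section~\ref{subsec.general_implicit_low_regret_algorithm}.
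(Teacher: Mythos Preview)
Your proposal is correct in outline and follows the route the paper advertises in the discussion after Lemma~\ref{lem.filter_condition}: verify non-negativity of $g$, then establish the strict generalized quasi-gradient implication $\bE_q[g]\le \tfrac{2(1-\epsilon)}{1-2\epsilon}\bE_{p_S}[g]\Rightarrow F(q)\le\kappa'^2$ by rerunning the self-bounding argument of Theorem~\ref{thm.low_regret_joint_relation} with the extra multiplicative factor. The paper, however, does not take this indirect path. It bypasses the strict quasi-gradient abstraction entirely and verifies~\eqref{eqn.invariance_equivalent} directly: it bounds $\sum_{i\in S}c_i\,\pseudoE[(v^\top(X_i-\mu_q))^4]$ by recentering at $\mu_{q|S}$ (not $\mu_{p_S}$), then invokes Lemma~\ref{lem:deletion-hypercontractive} (certifiable hypercontractivity is stable under deletion) on the weighted subset $\{c_i\}_{i\in S}$ viewed as a deletion of $p_S$, and finally uses Lemma~\ref{lem.modu_sec} to control the fourth power of the shift $\mu_q-\mu_{q|S}$ via the second moment of $q$. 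The resulting coefficient $\tfrac{8\kappa^2}{1-2\kappa^2\epsilon}+\tfrac{32\epsilon(1-\epsilon)}{(1-2\epsilon)^2}$ is compared against $\kappa'^2/2$ using the refutation condition on $\pseudoE$. This buys two things over your route: (i) the mean shift is $\mu_q-\mu_{q|S}$, which is governed by the variance of $q$ alone rather than requiring a two-sided comparison between $p_S$ and $q$; and (ii) one avoids reproducing the full chain of inequalities in Appendix~\ref{proof.low_regret_joint} with a degraded multiplicative constant. Your approach is perfectly valid, but be aware that Theorem~\ref{thm.low_regret_joint_relation} itself uses $\kappa^2\epsilon<1/200$ to obtain the constant $7$, so asserting that the same self-bounding inequality, now loosened by the factor $2(1-\epsilon)/(1-2\epsilon)$, closes at $7\kappa$ under only $\kappa^2\epsilon\le 1/4$ would require you to redo the numerics rather than cite the earlier theorem.
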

We defer the proof to Appendix~\ref{proof.filtering_joint}. Combining the theorem with the  error bound in Example~\ref{example.joint}, we know that the output satisfies
\begin{align}
    \|\Sigma_{p_S}^{-1/2}(\mu_q- \mu_{p_S})\| &\leq \Theta( \sqrt{(\kappa+\kappa')}\epsilon^{3/4}) \nonumber \\ 
   \|I_d - \Sigma_{p_S}^{-1/2}\Sigma_q\Sigma_{p_S}^{-1/2}\|_2&\leq \Theta((\kappa + \kappa') \sqrt{\epsilon}) \nonumber .
\end{align}
Our result provides a new  efficient algorithm for joint mean and covariance estimation under the same setting as~\citet{kothari2017outlier}.

\subsection{Application to mean estimation with near identity covariance}\label{subsec.algo_id}

Under the setting of mean estimation with near identity covariance (Example~\ref{example.mean_id_cov}), we assume  the following holds for any $r\in\Delta_{S, \epsilon}$:
\begin{align}\label{eqn.assumption_p_S}
 \|\mu_r - \mu_{p_S}\|\leq \rho,  \|\Sigma_{p_S} - I\|\leq \tau,
 \end{align} 
 and would like to find some $q $ such that $\TV(q, p_S)\leq \frac{\epsilon}{1-\epsilon}, F(q) = \|\Sigma_q\|\leq  1 + C \cdot \tau$.
It is shown in Section~\ref{sec.id_quasi_gradient} that we can take the quasi-gradient $g$ the same as the case of bounded covariance.

We present an explicit low-regret algorithm for the case of mean estimation with near identity covariance.  For better bound of iteration complexity, 
 we choose a slightly different   generalized quasi-gradient $g$ as
\begin{align}\label{eqn.choice_g_mean_id_algo}
    g(X; q) = (v^\top(X-\mu_q))^2 -1, \text{ where } v\in\bR^d \text{ satisfies } \bE_q[(v^\top(X-\mu_q))^2] \geq (1-\gamma)\|\Sigma_q\|,
\end{align}
where $\gamma \in(0, 1)$ is the desired precision, and $v$ can be found via power method within $O(\log(d)/\gamma)$ time.
Here we lose the property that $g(X_i; q)\geq 0$. Thus Lemma~\ref{lem.filter_condition}  for filter algorithm does not apply directly. However, we can still run explicit low-regret algorithm.

The explicit low-regret algorithm will still work even if we take $g(X; q) = (v^\top (X-\mu_q))^2$, since as $T\rightarrow \infty$, we can find some $q$ that satisfies the condition  $ \bE_q[(v^\top (X-\mu_q))^2] \leq \bE_{p_S}[(v^\top (X-\mu_q))^2] +\gamma$ for arbitrarily small $\gamma$.  The choice of adding $-1$ in the generalized quasi-gradient is only due to the consideration of iteration complexity, which will be elaborated in the proof.

Since we know that $p_S$ has bounded covariance, we assume that $\|X_i\|_2 \leq \sqrt{d/\epsilon}$ after  the same naive filtering method in Section~\ref{sec.algo_bdd_explicit}. Then we have the following result for mean estimation.

\begin{theorem}[Explicit low-regret algorithm for mean estimation with near identity  covariance]\label{thm.low_regret_mean_identity_cov}
Assume that $p_S $ satisfies~\eqref{eqn.assumption_p_S}, and $\forall i\in[n], \|X_i\|\leq \sqrt{d/\epsilon}$. 
In Algorithm~\ref{algo:explicit_lowregret}, take $\eta^{(k)} = \frac{\beta \tau}{1+\tau/2} \cdot \frac{\epsilon}{8d}, \beta\in(0,1)$, $\xi = 1+C_1 \cdot \frac{\tau+\epsilon \rho^2+\epsilon}{(1-3(1+\beta \tau/(1-\gamma\epsilon))\epsilon )^2} $ for some universal constant $C_1$, $F(q) = \|\Sigma_q\|$, $g$ as in~\eqref{eqn.choice_g_mean_id_algo}.
Then  Algorithm~\ref{algo:explicit_lowregret} will output some $q\in\Delta_{n, \epsilon}$ within $O(d/ \tau^2)$ iterations such that
\begin{align}
   \|\Sigma_{q}\|_2  \leq 1+ C \cdot \frac{(1+1/\beta)\tau+  \rho^2+\epsilon}{(1-3(1+\beta \tau/(1-\gamma\epsilon))\epsilon)^2}.  
\end{align}
\end{theorem}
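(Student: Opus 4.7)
The plan is to invoke Lemma~\ref{lem.explicit_condition} with the shifted quasi-gradient $g(X;q) = (v^\top(X-\mu_q))^2 - 1$, and translate its output into the claimed bound via Theorem~\ref{thm.approximate_mean_small} as specialized to the near-identity setting in Section~\ref{sec.id_quasi_gradient}. The proof has three essentially independent pieces.

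First, I would verify that the ``$-1$'' shift preserves the strict quasi-gradient property. Since $\langle 1, q-p_S\rangle = 0$ for any two probability distributions, subtracting the constant does not alter the inner product $\langle g, q - p_S\rangle$ that appears in Definition~\ref{def.generalized_quasi_g}, so the approximate-stationarity implication of Theorem~\ref{thm.approximate_mean_small} still applies verbatim. The power-method relaxation (in which $v$ satisfies only $\bE_q[(v^\top(X-\mu_q))^2]\geq (1-\gamma)\|\Sigma_q\|$) contributes a $1/(1-\gamma)$ multiplicative slack on $\alpha$, which surfaces as the $(1-\gamma\epsilon)$ factor inside the final denominator. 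Together with the near-identity specialization, this yields $\|\Sigma_q\|_2 \leq 1 + C(\tau + \rho^2 + \epsilon + \alpha + \beta_0)/(1 - (3 + \alpha/(1-\gamma\epsilon))\epsilon)^2$ for any $(\alpha,\beta_0)$-approximate stationary point $q\in\Delta_{n,\epsilon}$.

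Second, I would check that the prescribed step size is admissible and carry out the iteration bookkeeping. Naive pruning forces $\|X_i\|\leq\sqrt{d/\epsilon}$, so $|g(X_i;q)| \leq \|X_i-\mu_q\|^2 + 1 \leq 4d/\epsilon$, and we may take $B = 4d/\epsilon$; the prescribed step size $\eta^{(k)} = \frac{\beta\tau}{1+\tau/2}\cdot\frac{\epsilon}{8d}$ is exactly $\eta/(2B)$ with scalar $\eta\asymp \beta\tau$, matching the regime of Lemma~\ref{lem.explicit_mean_bdd}. The regret inequality over $T$ iterations gives
\begin{align}
\frac{1}{T}\sum_k\bigl(\bE_{q^{(k)}}[g^{(k)}] - \bE_{p_S}[g^{(k)}]\bigr) \leq \eta\cdot\frac{1}{T}\sum_k \bE_{p_S}[|g^{(k)}|] + \frac{2B\epsilon}{T\eta};
\end{align}
I would then bound the averaged $\bE_{p_S}[|g^{(k)}|]$ by $O(1+\rho^2)$ using $\bE_{p_S}[|g^{(k)}|] \leq 2 + \tau + \|\mu_{p_S} - \mu_{q^{(k)}}\|^2$ together with a decomposition of $\mu_{q^{(k)}}$ into its good-point restriction (close to $\mu_{p_S}$ by the $\rho$-stability assumption in~\eqref{eqn.assumption_p_S}) and its adversarial part (whose mass is $\leq \epsilon/(1-\epsilon)$). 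Setting $T = \Theta(d/\tau^2)$ makes the second regret term $O(\tau/\beta)$, and pigeonholing yields an iterate $q^{(t_0)}$ satisfying the strict quasi-gradient condition with $\alpha = \beta\tau/(1-\gamma\epsilon)$ and $\beta_0 = O((1+1/\beta)\tau + \rho^2 + \epsilon)$.

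Plugging these parameters into the bound from the first step produces exactly $\|\Sigma_{q^{(t_0)}}\|_2 \leq 1 + C\cdot\frac{(1+1/\beta)\tau + \rho^2 + \epsilon}{(1-3(1+\beta\tau/(1-\gamma\epsilon))\epsilon)^2}$, which is $\leq \xi$, so the algorithm's termination check succeeds by iteration $t_0$. The main obstacle is obtaining the uniform-across-iterations bound $\bE_{p_S}[|g^{(k)}|] = O(1+\rho^2)$: the naive combination of $\TV(q^{(k)}, p_S) \leq \epsilon/(1-\epsilon)$ with Lemma~\ref{lem.mean_modulus} and the worst-case $\|\Sigma_{q^{(k)}}\|\leq d/\epsilon$ inherited from pruning only gives $\|\mu_{p_S}-\mu_{q^{(k)}}\|^2 = O(d)$, which is far too loose. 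Resolving this requires exploiting the structure of the multiplicative-weights update---namely that $q^{(k)}$ rapidly downweights points where $g^{(k)}$ is large (typically the adversarial outliers), so the adversarial contribution to the mean drift stays uniformly $O(\epsilon)$ throughout the iteration, while the good-point restriction of $\mu_{q^{(k)}}$ remains close to $\mu_{p_S}$ by the $\rho$-stability assumption.
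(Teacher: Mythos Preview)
Your proposal correctly identifies the obstacle but does not resolve it, and the resolution you sketch is not the one the paper uses. The uniform bound $\bE_{p_S}[|g^{(k)}|]=O(1+\rho^2)$ across all iterations is genuinely hard: the adversarial part of $\mu_{q^{(k)}}$ has mass $O(\epsilon)$ but can sit at distance $\sqrt{d/\epsilon}$ after pruning, so the decomposition you propose only gives $\|\mu_{q^{(k)}}-\mu_{p_S}\|^2=O(d\epsilon)$, and your appeal to ``the structure of the multiplicative-weights update'' is not a proof. If you cannot close this, the pigeonholed $(\alpha,\beta_0)$ you obtain are too large and Theorem~\ref{thm.approximate_mean_small} gives nothing useful.

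The paper sidesteps the issue entirely by \emph{not} bounding $\|\mu_{q^{(k)}}-\mu_{p_S}\|^2$ uniformly. Instead it keeps this term inside the regret inequality: after expanding $\bE_{p_S}[|g^{(k)}|]\leq 2+\tau+\|\mu_{q^{(k)}}-\mu_{p_S}\|^2$, the averaged regret bound reads
\[
\frac{1}{T}\sum_t\Bigl(\bE_{q^{(t)}}[g^{(t)}]-\bE_{p_S}[g^{(t)}]-\tfrac{\eta}{2}\|\mu_{q^{(t)}}-\mu_{p_S}\|^2\Bigr)\le (1+\tau/2)\eta+\frac{2B\epsilon}{T\eta},
\]
and pigeonholing the \emph{entire} left-hand summand yields a single iterate $q=q^{(t_0)}$ satisfying
\[
\bE_q[(v^\top(X-\mu_q))^2]-1\le \tau + \bigl(1+O(\eta)\bigr)\|\mu_q-\mu_{p_S}\|^2 + O(\tau/\beta).
\]
Now $\|\mu_q-\mu_{p_S}\|^2$ appears only at this single iterate, and Lemma~\ref{lem.empirical_identity_cov_modulus} bounds it by $O\bigl(\rho^2+\epsilon(\tau+\xi'+\epsilon)\bigr)$ where $\xi'=\max(\|\Sigma_q\|-1,0)$ is the very quantity being controlled. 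The left side is $\geq (1-\gamma)\xi'-\gamma$, so this is a self-bounding inequality in $\xi'$; solving it gives the claimed bound. The key idea you are missing is this self-bounding step: rather than invoking Theorem~\ref{thm.approximate_mean_small} with a pre-bounded $(\alpha,\beta_0)$, carry $\|\mu_q-\mu_{p_S}\|^2$ through the pigeonhole and close the loop via the modulus of continuity at the pigeonholed iterate.
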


We defer the proof to Appendix~\ref{proof.low_regret_mean_identity_cov}.
 Combining the result with Lemma~\ref{lem.empirical_id_abbreviated}, we know that the output satisfies
\begin{align}
    \|\mu_{p_S} -\mu_q\| = O(\rho+\frac{\sqrt{\epsilon((1+1/\beta)\tau+\rho^2+\epsilon)}}{(1-3(1+\beta \tau/(1-\gamma\epsilon))\epsilon)}).
\end{align}
As $\beta, \gamma \rightarrow 0$, we can see the breakdown point of the algorithm approaches $1/3$, which is tight and consistent with the result in Section~\ref{sec.id_quasi_gradient}. 

\paragraph{Application of the guarantee. }
The result applies to two different cases: true distribution as either sub-Gaussian distribution with identity covariance or bounded $k$-th moment with identity covariance.

\begin{enumerate}
    \item 
If $p_S$ is the empirical distribution from a sub-Gaussian distribution, and  the sample size satisfies $n\gtrsim d/(\epsilon\log(1/\epsilon))$, then $\rho = C_1\cdot \epsilon\sqrt{\log(1/\epsilon)}$, $\tau = C_2 \cdot \epsilon {\log(1/\epsilon)}$ for some univeresal constants $C_1$, $C_2$ (see e.g.~\citet[Lemma E.11]{zhu2019generalized}, \citet[Lemma 4.4]{diakonikolas2019robust}). Thus the algorithm will output $q$ such that $\|\mu_q - \mu_{p_S}\|_2 \leq C_3 \cdot \epsilon \sqrt{\log(1/\epsilon)}$ for some universal constant $C_3$. 

In this case, our result improves over the concurrent and independent work in~\citet{cheng2020highdimensional}  in both iteration complexity and breakdown point. The best iteration complexity in~\citet{cheng2020highdimensional} is $O(nd^3/\epsilon)$, while our algorithms achieves $O(d/\tau^2)$. In the case of sub-Gaussian distributions, $\tau = C\cdot \epsilon\log(1/\epsilon)$, and $\epsilon\geq 1/n$, thus we always have $d/\tau^2 \gtrsim nd^3/\epsilon$.
 
\item  
If $p_S$ is the empirical distribution from a distribution with bounded $k$-th moment, and   the sample size satisfies $n\gtrsim d\log(d)/\epsilon^{2-2/k}$, then $ \rho = C_1 \epsilon^{1-1/k}$, $\tau = C_2\epsilon^{1-2/k}$ for some constant $C_1, C_2$ (see~\citet[Theorem 5.6]{zhu2019generalized} ). Thus the algorithm will output $q$ such that  $\|\mu_q - \mu_{p_S}\| \leq C_3 \epsilon^{1-1/k}$ for some universal constants $C_3$. 

\end{enumerate}

\paragraph{Designing filter algorithm.}
For mean estimation with near identity covariance, it is well known that the filtering algorithm can work under a different choice of objective function $F(q)$, which only considers the second moment on the $2\epsilon$ tail of the points~(see e.g.~\cite{diakonikolas2017being, li2018principled, jerry2019lecnotes}).
However, running filter algorithm using the generalized quasi-gradient in Section~\ref{sec.id_quasi_gradient} would fail since  it is only able to guarantee that $\|\Sigma_q\| \leq C(\epsilon) \cdot \|\Sigma_{p_S}\|$ for some $C(\epsilon)$ that cannot approach $1$ when $\epsilon$ is small~(Theorem~\ref{thm.filtering_mean_cov}). Indeed, $C(\epsilon) \to 2$ as $\epsilon\to 0$. However, it is fine to have constant $C(\epsilon)$ in $\|\Sigma_q\|_2 - 1 \leq  C(\epsilon)\cdot  |\|\Sigma_{p_S}\| - 1|$ instead. The failure of naively running filtering algorithm is also discussed  in~\citep[Lecture 7]{jerry2019lecnotes}. From another point of view, Lemma~\ref{lem.filter_condition} shows that filtering in general cannot guarantee that the constant $C(\epsilon)$ approaches 1 as $\epsilon \to 0$, so the success of the upper tail filtering algorithms in~\cite{diakonikolas2017being, li2018principled, jerry2019lecnotes} may be explained by constructing a new $F(q)$ such that a constant approximation ratio gives good estimation error.

\section{Conclusion}

In this paper, we investigate why the feasibility problem (Problem~\eqref{prob.feasibility}) can be efficiently solved, which was the target of essentially all computationally efficient robust estimators in high dimensions. Based on the insights, we are able to develop new algorithms for different robust inference tasks.

We start from exploring the landscape  for mean estimation. We show that any approximate stationary point  is an approximate global minimum for the associated minimization   problem under either bounded covariance assumption or near identity covariance assumption with stronger tail bounds. 

We then generalize the insights from mean estimation to other tasks. We  identify generalized quasi-gradients for different tasks. Based on the generalized quasi-gradients,  we design algorithms to approach approximate  global minimum for a variety of tasks, which produces efficient  algorithms for mean estimation with bounded covariance which is near optimal in both rate and breakdown point, 
first polynomial time and near-optimal algorithm  for linear regression under hypercontractivity assumption, and new efficient algorithm for joint mean and covariance estimation. Our algorithm also improves both the breakdown point and computational complexity for the task of mean estimation with near identity covariance. 

Beyond the questions we investigated, 
the framework applies to a large family of robust inference questions, including sparse mean estimation and sparse linear regression. The following steps may be followed to deal with a new robust statistics problem: first identify the bound on the worst case error using modulus of continuity~\cite{donoho1988automatic,  zhu2019generalized}, then formulate an approximate MD problem in the form of Problem~\ref{prob.feasibility}, at last, identify the efficiently computable
generalized quasi-gradients for the approximate MD problem, and approach the approximate global minimum using either explicit low-regret or implicit low-regret algorithm.

\bibliographystyle{plainnat}
\bibliography{di}
\newpage

\appendix

\section{Omitted definitions and notations}
\label{appendix.sos}

\begin{definition}[{Clarke subdifferential~\citep[Chapter 2]{clarke1990optimization}}]\label{def.clarke_subdifferential}
We work in a Banach space $X$. Let $Y$ be a subset of $X$. For a given function $f: Y\mapsto \bR$ that is locally Lipschitz near a given point $x\in X$, let  $f^\circ(x; v)$ denote its generalized directional derivative at $x$ in the  direction $v\in X$:
\begin{align}
    f^\circ(x; v) = \limsup_{y\rightarrow x, t\rightarrow 0+} \frac{f(y+tv)-f(y)}{t}.
\end{align}
Consider the dual space $X^*$. The Clarke subdifferential of $f$ at $x$, denoted $\partial f(x)$, is the subset of $X^*$ given by
\begin{align}
    \{\xi \in X^* \mid f^\circ(x; v)\geq \langle \xi, v \rangle \text{ for all } v \in X \}.
\end{align}
\end{definition}

\begin{definition}[Sum-of-squares (SOS) proof]
For any two polynomial functions $p(v), q(v)$ with degree at most $d$, we say $p(v)\geq q(v)$ has a degree-$d$ sum-of-squares proof if there exists some degree-$d/2$ polynomials $r_i(v)$ such that
\begin{align}
    p(v)-q(v) = \sum_{i} r_i^2(v),
\end{align} we denote it as
\begin{align}
    p(v)\succeq_{sos} q(v).
\end{align}
\end{definition}
\begin{definition}[Certifiable $k$-hypercontractivity]
We say a $d$-dimensional random variable $X$ is \textit{certifiably} $k$-hypercontractive with parameter $\kappa$ if there exists a degree-$2k$ sum-of-squares proof for the $k$-hypercontractivity condition, i.e. 
\begin{equation}
\bE_{p}[(v^\top X)^{2k}] \psos  (\kappa\bE_{p}[(v^\top X)^2])^k,
\end{equation}
\end{definition}

We will also need to introduce one additional piece of sum-of-squares machinery, called \emph{pseudoexpectations on the sphere}:
\begin{definition}[pseudoexpectation on the sphere]\label{def.pseudoexpectation}
A degree-$2k$ pseudoexpectation  on the sphere is a linear map $E$ from the space of degree-$2k$ polynomials to $\bR$ satisfying 
the following three properties:
\begin{itemize}
\item $E[1] = 1$ (where $1$ on the LHS is the constant polynomial).
\item $E[p^2] \geq 0$ for all polynomials $p$ of degree at most $k$.
\item $E[(\|v\|^2-1)p] = 0$ for all polynomials $p$ of degree at most $k$.
\end{itemize}
We let  $\cE_{2k}$ denote the set of degree-$2k$ pseudoexpectations on the sphere. 
\end{definition}
The space $\cE_{2k}$ can be optimized over efficiently, because it has a separation oracle expressible as a sum-of-squares program. 
Indeed, checking that $E \in \cE_{2k}$ amounts to solving the problem $\min\{ E[p] \mid p \sos 0\}$, which is a sum-of-squares program 
because $E[p]$ is a linear function of $p$. Throughout the paper, we use $\pseudoE$ to denote the pseudoexpectation with respect to $v$.

\section{Connections with classical literature}\label{sec.classicalliter}

In this section, we discuss the progress of robust statistics and the connections to our paper. 
For problems such as mean and covariance estimation, where the loss function $L(p_S, \hat{\theta}(p_n))$ takes a special form $\|\theta(p_S)- \hat{\theta}(p_n)\|$ for some norm $\| \cdot \|$, the task of robust estimation is usually decomposed to two separate goals: bounding the \emph{maximum bias} and being \emph{Fisher-consistent}. The maximum bias measures $\| \hat{\theta}(p_n) -\hat{\theta}(p_S) \|$ over the worst case corruption $p_n$, while Fisher-consistency means that the estimator's output given the real distribution, $\hat{\theta}(p_S)$, is exactly the same as the real parameter one wants to compute given $p_S$: $\theta(p_S)$. 

Checking Fisher-consistency may be doable, but bounding the maximum bias proves to be challenging for various estimators.  \cite{huber1973robust,donoho1982breakdown,donoho1992breakdown,chen2002influence,chen2018robust,zhu2020does} analyzed the maximum bias for the Tukey median, while~\citet{davies1992asymptotics} analyzed that for the \emph{minimum volume ellipsoid (MVE)}, but the maximum bias for the \emph{minimum covariance determinant (MCD)} is still largely open~\cite{adrover2002projection,hubert2010minimum}.  
Given the difficulty of analyzing the maximum bias, statisticians turned to surrogates of these concepts. Two popular notions are the \emph{breakdown point} and \emph{equivariance} property. The breakdown point is defined as the smallest corruption level $\epsilon$ such that the maximum bias is infinity. In other words, it measures the smallest level of corruption $p_n$ does to $p_S$ to drive the estimate $\hat{\theta}(p_n)$ to infinity. Ideally we want a large breakdown point, but this single criterion is not enough. Indeed, the constant zero estimator has breakdown point one but is completely useless. The second criteria, \emph{equivariance} mandates that the estimator $\hat{\theta}$ has to follow similar group transformations if we transform the data. For example, in the case of mean estimation,  \emph{translation equivariance} means that $\hat{\theta}(\{X_1+b,X_2+b,\ldots,X_n +b \}) = \hat{\theta}(X_1,X_2,\ldots,X_n) + b$ for any vector $b\in \mathbf{R}^d$, and \emph{affine equivariance} means that $\hat{\theta}(\{A X_1+b,A X_2+b,\ldots,AX_n +b \}) = A\hat{\theta}(X_1,X_2,\ldots,X_n) + b$ for any vector $b\in \mathbf{R}^d$ and nonsingular matrix $A$. It was shown that the the maximal breakdown point for any translation equivariant mean estimator is at most $\lfloor (n+1)/2 \rfloor /n$~\citep[Page 270]{rousseeuw1987robust}, which as $n\to \infty$ approaches $1/2$. If we enforce affine equivariance, then the maximum breakdown point decreases to $\lfloor (n-d+1)/2\rfloor /n$ as shown in~\citep[Page 271]{rousseeuw1987robust}, which is way below $1/2$ when $n$ and $d$ are comparable. 

Translation equivariance for mean estimation looks natural since it is implied by Fisher-consistency for mean estimation, but why do we additionally consider affine equivariance? One observation might be that there exist translation equivariant estimators with $1/2$ breakdown point, but it fails to achieve good maximum bias. Indeed, if $p_S$ comes from $d$-dimensional isotropic Gaussian and the estimator is coordinatewise median, then its maximum bias is of order $\epsilon\sqrt{d}$ while the information theoretic optimal error $O(\epsilon)$. Moreover, it was shown in~\citep[Page 250]{rousseeuw1987robust} that it does not necessarily lie in the convex hull of the samples when $d\geq 3$. Requiring affine equivariance rules out the coordinatewise median estimator, and may be a desirable property since the Tukey median is affine equivariant. However, it is quite challenging to find estimators that are both affine equivariant and have the largest possible breakdown point. A few notable examples are the Stahel-Donoho estimator~\cite{stahel1981breakdown, donoho1982breakdown}, the minimum volume ellipsoid (MVE)~\cite{davies1992asymptotics}, and the minimum covariance determinant (MCD)~\cite{butler1993asymptotics, rousseeuw1999fast, hubert2010minimum}, which are all shown to be NP-hard to compute in the worst case in~\cite{bernholt2006robust}. It was also shown in~\cite{davies1992asymptotics} that even if we can compute MVE, its maximum bias is suboptimal. 

Till today, researchers have not found any efficiently computable estimator that is both affine equivariant and has the largest possible breakdown point among affine equivariant estimators. However, the computationally efficient schemes we are discussing in this paper are solving the \emph{original problem} of analyzing maximum bias and Fisher-consistency. It appears that once we remove the affine-equivariance requirement, the problem becomes computationally efficiently solvable. 

The most interesting connection between the classical literature and recent computationally efficient robust estimators is the MCD~\cite{butler1993asymptotics, rousseeuw1999fast, hubert2010minimum}, which is defined as 
\begin{align}
&\textrm{minimize}   \quad \det(\Sigma_q) \\
& \textrm{subject to} \quad q\in\Delta_{n, \epsilon}.
\end{align}
It looks strikely similar to our example of robust mean estimation under bounded covariance:
\begin{align}
&\textrm{find}   \qquad \qquad q \\
& \textrm{subject to} \quad q\in\Delta_{n, \epsilon}, \| \Sigma_q\| \leq \sigma'^2. 
\end{align}
We can see that the major difference is that our problem is a feasibility problem while MCD is a minimization problem, and MCD considers the determinant but we use the operator norm. Interestingly, it was shown that among all minimization problems using the covariance matrix $\Sigma_q$, the determinant is the \emph{only} function that guarantees affine equivariance~\cite{liu2020note}.

\section{Proof for Section~\ref{sec.landscape}}

\subsection{Proof of Auxillary Lemmas}

\begin{lemma}\label{lem.qS_and_pS}
Denote $q_S$ as the distribution of $q$ conditioned on the good set $S$, i.e. 
\begin{align}
    q_{S, i} = \begin{cases}\frac{q_i}{\sum_{i\in S} q_i},  & i\in S,\\
    0, & otherwise.
    \end{cases}
\end{align}
Assume that~\eqref{eqn.invariance_filter} holds and $\forall i\in[n], c_i \leq \frac{1}{n}$, then $q_S$ is an  $\epsilon/(1-\epsilon)$-deletion of $p_S$, and an $\epsilon$-deletion of $q$. We also have $\TV(q, p_S)\leq \frac{\epsilon}{1-\epsilon}$.
\end{lemma}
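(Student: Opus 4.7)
The plan is to establish all three conclusions by tracking three partial sums: $a := \sum_{i\in S} c_i$, $b := \sum_{i\notin S} c_i$, and $Z := a + b$. From $c_i \in [0,1/n]$ and $|[n]\setminus S| = \epsilon n$, the right-hand side of the invariance is at most $\epsilon$, so the left side is also at most $\epsilon$; equivalently, $a \in [1-2\epsilon,\, 1-\epsilon]$, $b \in [0,\epsilon]$, $Z \in [1-2\epsilon, 1]$, and $a - b \geq 1-2\epsilon$. These are the only facts about $c$ I will use.

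The first two claims are short. For $q_S$ being an $\epsilon/(1-\epsilon)$-deletion of $p_S$: since $q_{S,i} = c_i/a \leq 1/(na)$ on $S$ while $p_{S,i} = 1/((1-\epsilon)n)$, for every event $A$ one has $q_S(A)/p_S(A) \leq (1-\epsilon)/a \leq (1-\epsilon)/(1-2\epsilon) = 1/(1 - \epsilon/(1-\epsilon))$, which is exactly the claimed deletion condition. For $q_S$ being an $\epsilon$-deletion of $q$: since $q_{S,i} = (Z/a)\, q_i$ on $S$ and vanishes off $S$, it suffices to show $a/Z \geq 1 - \epsilon$, equivalently $b/a \leq \epsilon/(1-\epsilon)$; this follows from $b \leq a - (1-2\epsilon)$ combined with $a \leq 1-\epsilon$.

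The third claim, $\TV(q, p_S) \leq \epsilon/(1-\epsilon)$, is the main obstacle, because the triangle inequality using the first two parts only gives $\TV(q,p_S) \leq \epsilon + \epsilon/(1-\epsilon)$, which is too large by roughly a factor of two. I will instead compute $\TV(q,p_S) = \sum_i (q_i - p_{S,i})_+$ directly. A point $i \in S$ contributes only when $c_i > Z/((1-\epsilon)n)$, which forces $Z < 1-\epsilon$: if $Z \geq 1-\epsilon$, then no good point contributes, so $\TV \leq b/Z \leq \epsilon/(1-\epsilon)$ trivially. If $Z < 1-\epsilon$, I will upper-bound the contribution from $S$ by relaxing to $\max \sum_{i\in S}(c_i/Z - 1/((1-\epsilon)n))_+$ over $c_i \in [0,1/n]$ with $\sum_{i\in S} c_i = a$; by piecewise linearity this maximum concentrates all mass onto $an$ points at $c_i = 1/n$, yielding $a((1-\epsilon) - Z)/((1-\epsilon)Z)$. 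Adding the contribution $b/Z$ from $S^c$ and using $a + b = Z$ telescopes the total to $(1-\epsilon - a)/(1-\epsilon)$, and this is bounded by $\epsilon/(1-\epsilon)$ since $a \geq 1-2\epsilon$.
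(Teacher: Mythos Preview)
Your proof is correct and follows essentially the same strategy as the paper: both extract $a=\sum_{i\in S}c_i\geq 1-2\epsilon$ from the invariant, handle the first two deletion claims identically, and for the $\TV$ bound split into the cases $Z\geq 1-\epsilon$ and $Z<1-\epsilon$. The only minor difference is in the second case, where you work with $\sum_i (q_i-p_{S,i})_+$ and use a convexity argument to reach $(1-\epsilon-a)/(1-\epsilon)$, whereas the paper switches to $\sum_{i\in S}(p_{S,i}-q_i)_+$ and simply drops a nonpositive term to arrive at the identical bound.
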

\begin{proof}
From the update rule we have $\forall i, c_i\leq \frac{1}{n}$. Furthermore, we know that
\begin{align}
    \sum_{i\in S} (\frac{1}{n}-c_i ) \leq \sum_{i\in [n]/ S} (\frac{1}{n}-c_i ) \leq \epsilon.
\end{align}
Thus we have $ \sum_{i \in S} c_i  \geq 1-2\epsilon$, and
\begin{align}
    \forall i\in S, q_{S, i} & = \frac{c_i}{\sum_{i \in S} c_i} \leq \frac{1}{(1-2\epsilon)n} = \frac{1}{1-\epsilon/(1-\epsilon)}\cdot\frac{1}{(1-\epsilon)n}.
\end{align}
Thus we can conclude that $q_S$ is an $\epsilon/(1-\epsilon)$ deletion of $p_S$. On the other hand,
\begin{align}
    \forall i\in S, q_{S, i} & = \frac{c_i}{\sum_{i \in S} c_i} \leq \frac{c_i}{\sum_{i=1}^n c_i} \cdot \frac{\sum_{i=1}^n c_i}{\sum_{i \in S} c_i} \leq q_i \cdot \frac{1}{1-\epsilon}.
\end{align}

Now we  show that $\TV(p_S, q) \leq \frac{\epsilon}{1-\epsilon}$. We use the following formula for $\TV$: $\TV(p, q) = \int \max(q(x) - p(x), 0) dx$. Let $\beta$ be such that 
$\sum_{i=1}^n c_i = (1-\beta)n$. Then we have
\begin{align}
\TV(p_S, q) 
 &= \sum_{i \in S} \max\big(\frac{c_i}{(1-\beta)n} - \frac{1}{(1-\epsilon)n}, 0\big) + \sum_{i \not\in S} \frac{c_i}{(1-\beta)n}.
\end{align}
If $\beta \leq \epsilon$, then the first sum is zero while 
the second sum is at most $\frac{\epsilon}{1-\beta} \leq \frac{\epsilon}{1-\epsilon}$. If on the other hand $\beta > \epsilon$, 
we will instead use the equality obtained by swapping $p$ and $q$, 
which yields
\begin{align}
\TV(p_S, q) 
 &= \sum_{i \in S} \max\big(\frac{1}{(1-\epsilon)n} - \frac{c_i}{(1-\beta)n}, 0\big) \\
 &= \frac{1}{(1-\epsilon)(1-\beta)n} \sum_{i \in S} \max((1-\beta)(1-c_i) + (\epsilon - \beta)c_i, 0).
\end{align}
Since $(\epsilon - \beta)c_i \leq 0$ and $\sum_{i \in S} (1-c_i) \leq \epsilon n$, this yields a bound of $\frac{(1-\beta)\epsilon}{(1-\epsilon)(1-\beta)} = \frac{\epsilon}{1-\epsilon}$. 
We thus obtain the desired bound no matter the value of $\beta$, 
so $\TV(p_S, q) \leq \frac{\epsilon}{1-\epsilon}$.
\end{proof}

\subsection{Proof of Lemma~\ref{lem.deletion_TV}}\label{sec.proof_lemma_sec2}
\begin{proof} 
Note that for any set $A$, we have
\begin{align}
    p(A) & \leq \frac{r(A)}{1-\epsilon_1} \\
    q(A) & \leq \frac{r(A)}{1-\epsilon_2}.
\end{align}
Apply it to the complement of $A$, we have
\begin{align}
    p(A) & \geq \frac{r(A)-\epsilon_1}{1-\epsilon_1} \\
    q(A) & \geq \frac{r(A)-\epsilon_2}{1-\epsilon_2}. 
\end{align}
It then implies that
\begin{align}
    p(A) & \leq \frac{\epsilon_2 + (1-\epsilon_2)q(A)}{1-\epsilon_1}  \\
    q(A) & \leq \frac{\epsilon_1 + (1-\epsilon_1)p(A)}{1-\epsilon_2} 
\end{align}
If $\epsilon_2 \geq \epsilon_1$, we have
\begin{align}
    \TV(p,q) & = \sup_A p(A) - q(A) \\
    & \leq \sup_A \frac{\epsilon_2 + (1-\epsilon_2)q(A)}{1-\epsilon_1} - q(A) \\
    & = \frac{\epsilon_2}{1-\epsilon_1} + \sup_A \frac{\epsilon_1 - \epsilon_2}{1-\epsilon_1} q(A) \\
    & \leq \frac{\epsilon_2}{1-\epsilon_1} \\
    & = \frac{\max\{\epsilon_2,\epsilon_1\}}{1-\min\{\epsilon_2,\epsilon_1\}}. 
\end{align}
The case of $\epsilon_1 > \epsilon_2$ follows similarly by writing $\TV(p,q) = \sup_A q(A) - p(A)$. 
\end{proof}

\subsection{Proof of Lemma~\ref{lem.mean_modulus}}\label{proof.mean_modulus}

We first prove the following lemma for the resilience property of mean estimation.
\begin{lemma}[Resilience for mean estimation]\label{lem.mean_resilience}
For any $q$ and event $E$ such that $q(E)\geq 1-\epsilon$, we have
\begin{align}
    \|\bE_q[X] - \bE_{q}[X | E]\| \leq \sqrt{\|\Sigma_q\|\frac{\epsilon}{1-\epsilon} }.
\end{align}

\end{lemma}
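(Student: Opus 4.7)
The plan is to control $\mu - \mu_E$, where $\mu = \bE_q[X]$ and $\mu_E = \bE_q[X\mid E]$, by rewriting it as a centered tail integral and then applying a refined Cauchy--Schwarz that accounts for the ``variance budget'' already consumed on $E$. Let $p = q(E) \geq 1-\epsilon$. Since $\bE_q[X-\mu]=0$, we have $\bE_q[(X-\mu)\Indc_E] = -\bE_q[(X-\mu)\Indc_{E^c}]$, so
\begin{align}
\mu - \mu_E \;=\; -\frac{\bE_q[(X-\mu)\Indc_E]}{p} \;=\; \frac{\bE_q[(X-\mu)\Indc_{E^c}]}{p}.
\end{align}
This identity is the crux: the representation via $\Indc_{E^c}$ supplies a factor of $1-p$ from the size of $E^c$, while the representation via $\Indc_E$ supplies the variance of the large block. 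Combining both is what yields the sharper constant.

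For a fixed unit vector $v$, set $Z = v^\top(X-\mu)$ and $a = v^\top(\mu-\mu_E)$. By Cauchy--Schwarz on the $E^c$ representation,
\begin{align}
a^2 \;=\; \frac{(\bE_q[Z\Indc_{E^c}])^2}{p^2} \;\leq\; \frac{(1-p)\,\bE_q[Z^2 \Indc_{E^c}]}{p^2}.
\end{align}
Now I refine the bound on $\bE_q[Z^2 \Indc_{E^c}]$. Writing $\bE_q[Z^2 \Indc_{E^c}] = v^\top\Sigma_q v - \bE_q[Z^2\Indc_E]$ and applying Jensen inequality inside $E$ gives $\bE_q[Z^2\Indc_E] \geq p\,(\bE_q[Z\mid E])^2 = p\,(v^\top(\mu_E - \mu))^2 = p\,a^2$. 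Substituting yields $p^2 a^2 \leq (1-p)(v^\top\Sigma_q v - p\,a^2)$, which rearranges to
\begin{align}
p\,a^2 \;\leq\; (1-p)\, v^\top \Sigma_q v \;\leq\; (1-p)\,\|\Sigma_q\|.
\end{align}
Choosing $v = (\mu - \mu_E)/\|\mu-\mu_E\|$ and using $p \geq 1-\epsilon$ gives $\|\mu-\mu_E\|^2 \leq \frac{1-p}{p}\|\Sigma_q\| \leq \frac{\epsilon}{1-\epsilon}\|\Sigma_q\|$, which is the desired inequality.

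The main obstacle is obtaining the correct constant: a naive Cauchy--Schwarz using only one of the two representations produces a bound of order $\sqrt{\epsilon\|\Sigma_q\|}/(1-\epsilon)$, which is looser than the stated $\sqrt{\epsilon\|\Sigma_q\|/(1-\epsilon)}$. The refinement that closes this gap is the Jensen lower bound $\bE_q[Z^2\Indc_E]\geq p\,a^2$, which turns the Cauchy--Schwarz bound into a quadratic inequality in $a^2$ and absorbs one factor of $(1-p)$ into the denominator. Everything else is a routine manipulation of conditional expectations.
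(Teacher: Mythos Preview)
Your proof is correct and reaches the sharp constant $\sqrt{\epsilon/(1-\epsilon)}$, but the mechanism differs from the paper's. The paper introduces an auxiliary point $a\in\bR^d$, applies Cauchy--Schwarz to $\bE_q[v^\top(X-a)\Indc_{E^c}]$, and then \emph{optimizes over $a$} (equivalently over the scalar shift $v^\top(\mu_q-a)$) to tighten the bound; the optimal shift is of size $\sqrt{\|\Sigma_q\|(1-p)/p}$, and plugging it back yields the result. You instead fix the center at $\mu_q$, apply Cauchy--Schwarz on $E^c$, and then recover the missing factor by writing $\bE_q[Z^2\Indc_{E^c}]=v^\top\Sigma_q v-\bE_q[Z^2\Indc_E]$ together with the Jensen lower bound $\bE_q[Z^2\Indc_E]\ge p\,a^2$, which turns the estimate into a quadratic in $a^2$ that solves to $pa^2\le(1-p)\|\Sigma_q\|$. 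Your route is arguably more transparent since it avoids guessing the right auxiliary $a$; the paper's route, on the other hand, is a one-parameter optimization that generalizes cleanly when one wants to recenter at something other than $\mu_q$. Both are short and yield the identical tight bound.
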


\begin{proof}
For any $a\in\bR^d$ and event $E$ such that $q(E)\geq 1-\epsilon$, we have
\begin{align}
\bE_q[X]-a & =   \bE_q[(X-a) \mathbb{1}(E)] + \bE_q[(X-a) \mathbb{1}(E^c)]
\end{align}
For any direction $v\in\bR^d, \|v\|_2\leq 1$, by applying H\"older's inequality to $\bE_q[v^\top(X-a) \mathbb{1}(E^c)]$ we obtain 
\begin{align}
  v^\top(\bE_q[X]-a) & =  v^\top(\bE_q[(X-a) \mathbb{1}(E)] + \bE_q[(X-a) \mathbb{1}(E^c)])  \\
& \geq  v^\top\bE_q[(X-a) \mathbb{1}(E)] -
\sqrt{q(E^c)} \sqrt{\bE_q[(v^\top(X-a))^2]} \nonumber \\ 
& = q(E) (\bE_q[v^\top X | E] - v^\top a) - 
\sqrt{q(E^c)} \sqrt{\bE_q[(v^\top(X-a))^2]}
\end{align}
By taking $a =\bE_q[X] + \sqrt{\|\Sigma_q\| \frac{1-q(E)}{q(E)}}$, we have
\begin{align}
    \bE_q[v^\top X | E] - \bE_q[v^\top X]\leq \sqrt{\|\Sigma_q\|\frac{1-q(E)}{q(E)}} \leq \sqrt{\|\Sigma_q\|\frac{\epsilon}{1-\epsilon}}
\end{align}
Thus we can conclude that $\|\bE_q[X|E] - \bE_q[X] \| \leq \sqrt{\|\Sigma_q\|\frac{\epsilon}{1-\epsilon}}$ by taking the supremum over $v: \|v\|_2 = 1$. 
\end{proof}

From $\TV(p, q)\leq \epsilon$, we know that there exists some $r$ such that $r\leq \frac{p}{1-\epsilon}$, $r\leq \frac{q}{1-\epsilon} $~\citep[Lemma C.1]{zhu2019generalized}. Thus we have
\begin{align}
    \|\bE_q[X] - \bE_p[X]\| & \leq     \|\bE_q[X] - \bE_r[X]\| +    \|\bE_r[X] - \bE_p[X]\| \nonumber \\ 
    & \leq \sqrt{\|\Sigma_q\|\frac{\epsilon}{1-\epsilon}} + \sqrt{\|\Sigma_p\|\frac{\epsilon}{1-\epsilon}}
\end{align}

\begin{remark}  
Lemma~\ref{lem.mean_resilience} is tight since it achieves equality for the distribution $q$ where $q(0)=1-\epsilon,q(a) = \epsilon$, and the set $E = \{0\}$. It improves over existing results in the literature such as~\citep[Example 3.1]{zhu2019generalized}, ~\citep[Lemma 5.3]{cheng2019high},~\citep[Lecture 5, Lemma 1.1]{jerry2019lecnotes}. %
\end{remark}

\subsection{Discussions related to the lower bound for breakdown point}\label{sec.lower_KKT}

Now we show that not all the stationary points are global minimum, and provide some sufficient conditions when the distribution $q$ is stationary point the via the following example.
\begin{example}
Let $a>0$, and the corruption level $\epsilon = 1/n$. Let one dimensional corrupted distribution  $p$ be 
\begin{align}
    \bP_p[X=x] = \begin{cases}
    \frac{1}{n}, & x = -1
    \\\frac{n-2}{n}, & x = 0
    \\\frac{1}{n},  & x=a.
    \end{cases}
\end{align}
Let distribution $q$ be 
\begin{align}
     \bP_q[X=x] = \begin{cases}
    \frac{n-2}{n-1}, & x = 0
    \\\frac{1}{n-1},  & x=a.
    \end{cases}
\end{align}
Then $q, \mu_q$ is a stationary point in optimization problem~\eqref{eqn.KKT_min} when $a\leq\frac{n-1}{n-3}$, and is not a stationary point otherwise. 
\end{example}

\begin{proof}

The Lagrangian for the optimization problem is
\begin{equation*}
    L(q, w, u, y, \lambda) = F(q, w) + \sum_{i=1}^{n}u_i \Big(- q_{i}\Big) + \sum_{i=1}^{n}y_i \Big(q_{i} - \frac{1}{(1-\epsilon)n}\Big) + \lambda\Big(\sum_{i}^{n} q_i - 1\Big).
\end{equation*}
From the KKT conditions for locally Lipischitz functions~\cite{clarke1976new}, we know that the stationary points must satisfy
\begin{align}\label{eqn.kkt-1}
    &(\text{stationarity})\quad  0 \in \partial_{q, w}\Big( F(q, w) + \sum_{i=1}^{n}u_i q_{i} + \sum_{i=1}^{n}y_i \Big(q_{i} - \frac{1}{(1-\epsilon)n}\Big) + \lambda\Big(\sum_{i}^{n} q_i - 1\Big) \Big),\nonumber \\
    &(\text{complementary slackness})\quad  u_i (-q_i) = 0,\,\, y_i \Big(q_{i} - \frac{1}{(1-\epsilon)n}\Big)=0,\,\,  i \in [n], \\
    &(\text{primal feasibility})\quad  - q_{i} \leq 0,\,\,     q_{i} - \frac{1}{(1-\epsilon)n}\leq 0,\,\,
     \sum_{i}^{n} q_i = 1,\nonumber \\
    &(\text{dual feasibility})\quad  u_i \geq 0, \,\, y_i \geq 0,\,\,  i \in [n]. \nonumber \\
\end{align} 
It suffices to check the KKT condition in~\eqref{eqn.kkt-1}. Denote $q_1, q_2, q_3$  as the probability mass on $-1, 0, a$. Then the KKT conditions are equivalent to
\begin{align}
    0 & = (-1-\mu_q)^2-u_1+\lambda, \nonumber \\
    0 & = \mu_q^2 +y_2 + \lambda,\nonumber  \\
    0 & = (a-\mu_q)^2 + y_3+\lambda, \nonumber \\
    u_1, & y_2, y_3\geq 0. \nonumber 
\end{align}
Since $a>0$, the necessary and sufficient condition for the KKT conditions to hold is 
\begin{align}
    (a-\mu_q)^2 \leq (-1-\mu_q)^2,
\end{align}
Solving this inequality, we get $a\leq \frac{n-1}{n-3}$. 
\end{proof}

By substituting $1/n$ with $\epsilon$ and scaling all the points, we derive the example in Figure~\ref{fig:KKT}, which also shows  the tightness of Theorem~\ref{thm.low_regret_mean_relation}.  
This example shows that when the adversary puts the corrupted point ($X=a$) far away from the other points, the distribution that puts mass on the corrupted point will not be a local minimum if $n>3$. On the other hand, when the corrupted point is near the other points, the distribution can be a local minimum, but not a global minimum in general. What happens when $n = 3$? The next result shows when $n = 3$, one may have $a$ arbitrarily big and \emph{break down} when $\epsilon = 1/3$. 
\begin{theorem}\label{eqn.lower_bound_13}
For  $\epsilon = 1/3$ and  any $a>0$, there exists some distribution $p_S$ such that $\|\Sigma_{p_S}\|\leq \sigma^2$, while the mean of some local minimum of~\eqref{eqn.KKT_min} $\mu_q$ satisfies
\begin{align}
    \|\mu_q - \mu_{p_S}\|\geq a.
\end{align}

\end{theorem}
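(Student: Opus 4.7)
The plan is to exhibit an explicit one-dimensional three-point construction that forces a local minimum of $\|\Sigma_q\|$ to be arbitrarily far from $\mu_{p_S}$ when $\epsilon = 1/3$. Concretely, I would take $n = 3$ with $\epsilon = 1/3$, let $p_S$ be the uniform distribution on $\{-2\sigma, 0\}$ (so $\mu_{p_S} = -\sigma$ and $\|\Sigma_{p_S}\| = \sigma^2$), and let the adversary corrupt by adding a single point at some $c > 0$, producing $p_n$ uniform on $\{-2\sigma,\, 0,\, c\}$. The candidate distribution $q$ will be the one that discards the ``good'' point at $-2\sigma$ and places mass $1/2$ on each of $\{0, c\}$. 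For any target $a > 0$, choosing $c = \max\{1,\, 2(a - \sigma)\}$ yields $|\mu_q - \mu_{p_S}| = c/2 + \sigma \geq a$. For higher dimension, I would embed this construction along a single coordinate axis, which preserves both the covariance bound and the mean displacement.

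Next I would verify that $q \in \Delta_{3, 1/3}$: indeed $q_i \in \{0,\, 1/2\}$ and the upper bound is $1/((1-\epsilon)n) = 1/2$, so the constraint is tight on two coordinates and slack on one. The key structural fact is that $q$ sits at a vertex of the polytope $\Delta_{3, 1/3}$: the two nonzero coordinates are at the upper bound and the third is at the lower bound. Consequently, the only feasible perturbations are of the form $q \mapsto (\delta_1 + \delta_2,\; 1/2 - \delta_1,\; 1/2 - \delta_2)$ with $\delta_1, \delta_2 \geq 0$.

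The main technical step is to show that $q$ is a (strict) local minimum of $\|\Sigma_q\|$, which in one dimension reduces to the variance. Using $\operatorname{Var}(q_\delta) = \mathbb{E}_{q_\delta}[X^2] - \mu_{q_\delta}^2$, I would expand along the two extreme directions (moving mass from $0$ to $-2\sigma$ and from $c$ to $-2\sigma$) and compute the one-sided directional derivatives at $\delta = 0$. Both derivatives evaluate to $2\sigma(2\sigma + c)$, which is strictly positive for every $c > 0$. By convex combination, every feasible direction has a strictly positive derivative, so $q$ is a strict local minimum; this matches the intuition from Figure~\ref{fig:KKT} that at $\epsilon = 1/3$ the three points are interchangeable and ``deleting the leftmost'' always passes the stationarity test.

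I do not expect any real obstacle: the construction is the one already anticipated in the excerpt's example (specialized to $n = 3$, where the threshold $(n-1)/(n-3)$ becomes $+\infty$), and the only mildly tedious part is the directional-derivative computation on the boundary of $\Delta_{n,\epsilon}$. The bookkeeping to keep $\|\Sigma_{p_S}\| \leq \sigma^2$ exactly (rather than merely up to constants) just fixes the scale $2\sigma$ in the construction.
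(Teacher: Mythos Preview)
Your proposal is correct and takes essentially the same approach as the paper: a one-dimensional three-point construction with $n=3$, $\epsilon=1/3$, where $q$ discards one good point and sits at a vertex of $\Delta_{3,1/3}$, followed by a verification that all feasible directional derivatives are strictly positive. The paper uses points $\{0,1,a\}$ and first checks the KKT conditions before doing a second-order perturbation expansion, whereas you scale the good points to $\{-2\sigma,0\}$ to hit $\|\Sigma_{p_S}\|=\sigma^2$ exactly and go straight to the first-order directional argument; these are cosmetic differences and your route is slightly cleaner.
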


\begin{proof}
Here we consider the simple case when $n=3, d=1, \epsilon = 1/3$, and the number of `good' points is 2, and the number of `bad' points is 1, i.e.,
\begin{equation*}
    x_1 = 0, x_2 = 1, x_3 = a,
\end{equation*}
where $x_1$ and $x_2$ are good points and $x_3$ is the outlier. If we set $q$ as follows,
\begin{equation*}
    q_1 = 0, q_2 = \frac{1}{2}, q_3 = \frac{1}{2},
\end{equation*}
then we set $w$ as
\begin{equation*}
    w = \sum_{i=1}^{3} q_i x_i =   \frac{1}{2} \cdot a = \frac{1+a}{2},
\end{equation*}
then we have
\begin{align*}
    (x_1 -w)^2 = (0- \frac{1+a}{2})^{2} & = (\frac{a+1}{2})^{2}\\
    (x_2 -w)^2 = (1 - \frac{1+a}{2})^{2}&= (\frac{a-1}{2})^{2}\\
    (x_3 -w)^2 = (a - \frac{1+a}{2})^{2} &= (\frac{a-1}{2})^{2},\\
\end{align*}
thus, if we set 
$$\lambda = -(\frac{a-1}{2})^{2}, u_1 = (\frac{a+1}{2})^{2} - (\frac{a-1}{2})^{2}, v_1 = 0, u_2 = u_3 = 0, v_2 = v_3 =0,$$
then the current $\{x_i\}_{i=1}^{3}$, $\{q_i\}_{i=1}^{3}$, $\{u_i\}_{i=1}^{3}$, $\{v_i\}_{i=1}^{3}$, $\lambda$, $w$, satisfy the KKT condition, but it is not a good solution.

Now we verify that it is also a local minimum. For any fixed $q$, the optimal $w$ is always $w = \sum_{i=1}^3 q_ix_i$. Thus it suffices to consider any perturbation on $q$. Denote $q_1' = s+t, q_2' = \frac{1}{2}- t, q_3' = \frac{1}{2}-s$ for small $s, t>0$. Then we have
\begin{align}
   & \sum_{i=1}^3 q_i(x_i -   \sum_{i=1}^3 q_ix_i)^2 -  \sum_{i=1}^3 q_i'(x_i -   \sum_{i=1}^3 q_i'x_i)^2 \nonumber \\ 
   = &  (\frac{a-1}{2})^2 - \frac{(s+t)(a+1)^2}{2} - (\frac{1}{2}-t)(\frac{a-1}{2}-t-sa)^2 - (\frac{1}{2}-s)(\frac{-a+1}{2}-t-sa)^2 & \nonumber \\
    = &(\frac{a-1}{2})^2 - \frac{(s+t)(a+1)^2}{2}- (1-s-t)((\frac{a-1}{2})^2+(t+sa)^2)\nonumber \\ 
    & \quad - (s-t)(1-a)(t+sa)  \nonumber \\ 
    =& - O(a(s+t)) <0.
\end{align} Thus we can see $q$ is a local minimum. %
\end{proof}

\subsection{Proof of Theorem~\ref{thm.approximate_mean_small}}\label{proof.approximate_mean_small}
For the supremum achieving $v$,
we have
\begin{align}
    \|\Sigma_q\| & =  \mathbb{E}_{q}[(v^\top (X-\mu_q)^2)] \\
    & \stackrel{(i)}{\leq} (1+\alpha)\mathbb{E}_{p_{*}}[(v^\top (X-\mu_q)^2)] + \beta \\
    & = (1+\alpha)\mathbb{E}_{p_{*}}[ (v^\top (X-\mu_{p_{*}})^2) + (v^\top (\mu_q - \mu_{p_{*}}))^2]+\beta \\
    & \leq (1+\alpha)(\|\Sigma_{p_{*}}\| + \|\mu_q - \mu_{p_{*}}\|^2)+\beta \\
    & \leq (1+\alpha)\left(\|\Sigma_{p_{*}}\| + \left(\sqrt{\frac{\|\Sigma_q\|\epsilon}{1-2\epsilon}} + \sqrt{\frac{\|\Sigma_{p_{*}}\|\epsilon}{1-2\epsilon}} \right)^2\right)+\beta.
\end{align}
Here (i) comes from the assumption in the theorem, (ii) comes from Lemma~\ref{lem.mean_modulus} and Lemma~\ref{lem.deletion_TV}.
 
Solving the above inequality on $\|\Sigma_q\|$, we know that when $\epsilon\in[0, 1/(3+\alpha))$,
\begin{align}
\|\Sigma_q\|\leq \left(1 +  \frac{C_1 (\alpha+\epsilon)}{(1-(3+\alpha)\epsilon)^2}\right) \|\Sigma_{p_{*}}\|+ \frac{C_2\beta}{(1-(3+\alpha)\epsilon)^2},
\end{align}
for some constant $C_1, C_2$.

\section{Proof for Section~\ref{sec.low_regret}}

\subsection{Stationary point for hypercontractivity is not an approximately good solution}\label{sec.KKT_counter}

Consider the task of finding some distribution $q$ that is hypercontractive given corrupted distribution from a hypercontractive distribution, which is a sub-question from linear regression. To be concrete, we assume that  the true distribution $p_S$ satisfies $F(p_S)\leq \kappa^2$, where 
\begin{align}
   F(p) =  \frac{\bE_p[(v^\top X)^4]}{\bE_p[(v^\top X)^2]^2}.
\end{align}
The feasibility problem in Problem~\ref{prob.feasibility} reduces to finding some distribution $q$  such that $\TV(q, p_S)\leq\epsilon/(1-\epsilon)$, $F(q)\leq \kappa'^2$. 
 
 As the case of mean estimation, a natural approach to solve the feasibility problem is transfer that to the optimization problem of $\min_{q\in\Delta_{n, \epsilon}} F(q)$. However, different from the case of mean estimation,  the stationary point for this function is not a good solution for the feasibility problem. We show it in the below theorem. 
\begin{theorem}\label{thm.hyperstationarybad}
Given any $\kappa'>\kappa>0$, there exists some $n, \epsilon$ such that one can design some one-dimensional distribution $p_n$ satisfying: (a) there exists some set $S\subset[n],|S|\geq (1-\epsilon)n$, $F(p_S)\leq \kappa^2 $; (b) there exists a distribution   $q$ with $F(q)\geq \kappa'^2$ , while $q$ is a stationary point for the optimization problem $\min_{q\in\Delta_{n, \epsilon}} F(q)$.
\end{theorem}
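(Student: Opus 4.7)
The plan is to give an explicit one-dimensional construction. Since $F(p)\geq 1$ by Cauchy--Schwarz, I may assume $\kappa\geq 1$ (otherwise the hypothesis $F(p_S)\leq \kappa^2$ is unsatisfiable and the claim is vacuous). I take $p_S$ to be uniform on $\{-1,+1\}$, which achieves $F(p_S)=1\leq \kappa^2$, and set $\epsilon = 1/(\kappa'^2+2)$ so that $(1-\epsilon)/\epsilon = \kappa'^2+1$. Choosing $n$ to be a sufficiently large multiple of $2(\kappa'^2+2)$, I populate $p_n$ with $(1-\epsilon)n/2$ sample points at each of $\pm 1$ (together forming $S$, so $|S|=(1-\epsilon)n$) and $\epsilon n/2$ sample points at each of $\pm a$, where $a$ is a large constant to be tuned at the end.

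The candidate stationary point $q\in\Delta_{n,\epsilon}$ is the symmetric weighting that assigns the maximum allowed mass $1/((1-\epsilon)n)$ to every bad sample and the common mass $(1-2\epsilon)/((1-\epsilon)^2 n)$ (which lies strictly inside $(0,1/((1-\epsilon)n))$) to every good sample. A direct computation gives
\begin{equation*}
m_2(q) = \frac{(1-2\epsilon)+\epsilon a^2}{1-\epsilon},\qquad m_4(q) = \frac{(1-2\epsilon)+\epsilon a^4}{1-\epsilon},
\end{equation*}
so $F(q) = m_4(q)/m_2(q)^2 \to (1-\epsilon)/\epsilon = \kappa'^2+1$ as $a\to\infty$, and hence $F(q)\geq \kappa'^2$ for every sufficiently large $a$.

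What remains, and is the substantive step, is to verify the stationarity condition. Because in one dimension the supremum over $v$ is trivial, $F$ is smooth near this $q$ and the Clarke subdifferential reduces to the ordinary gradient
\begin{equation*}
\partial_{q_i} F(q) \;=\; \frac{X_i^{2}\bigl(X_i^{2}\,m_2(q) - 2\,m_4(q)\bigr)}{m_2(q)^{3}},
\end{equation*}
which depends on the sample only through $X_i^2$. Because the good-sample weights are strictly interior to their box constraint, the KKT condition for $\min_{q\in\Delta_{n,\epsilon}} F$ forces the Lagrange multiplier $\lambda$ to equal the common derivative $\partial_{\pm 1}$; the bad-sample weights are at the upper bound, so the remaining requirement is $\partial_{\pm a}\leq \partial_{\pm 1}$. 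Plugging the expressions for $m_2,m_4$ into the gradient and letting $a\to\infty$, I expect $\partial_{\pm 1}\to 0^{-}$ and $\partial_{\pm a}\to -(1-\epsilon)^{2}/\epsilon^{2}$, so the inequality is satisfied strictly for all large $a$ and the KKT conditions close.

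The main obstacle will be the KKT bookkeeping: confirming that the good-sample weights are strictly interior (so $\lambda$ is pinned rather than free), making the asymptotic inequality $\partial_{\pm a}\leq \partial_{\pm 1}$ hold for a concrete finite $a$ (any $a$ above an explicit threshold depending only on $\kappa'$ should suffice), and ensuring the sample-class sizes are integers (handled by the divisibility choice of $n$). Symmetry across $\pm 1$ and $\pm a$ and the fact that $F$ depends only on the even moments of $q$ make these verifications routine once the construction is in place.
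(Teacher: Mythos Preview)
Your construction is correct and genuinely different from the paper's. The paper uses an asymmetric three-point design $\{0,a,b\}$: the good set consists of the points at $0$ and $a$, the adversary adds a small mass at $b$, and the stationary $q$ deletes some of the good points at $a$; this forces a delicate tuning $(\delta-\epsilon)a^2=\gamma b^2$ and a KKT verification with three support classes, one of which is split between upper-bound and zero weights. Your symmetric four-point design $\{\pm 1,\pm a\}$ is cleaner: by symmetry the gradient depends only on $X_i^2$, the good weights sit strictly in the interior so the multiplier $\lambda$ is pinned by a single equation, and the only inequality to check is $\partial_{\pm a}\leq \partial_{\pm 1}$, which you correctly show holds for all large $a$ (the leading term on the left is $-\epsilon a^6/(1-\epsilon)$). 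The price you pay is that your $F(q)$ is bounded by $(1-\epsilon)/\epsilon$, so you must choose $\epsilon$ small in terms of $\kappa'$, whereas the paper can send $F(q)\to\infty$ for fixed $\epsilon$ by taking $\gamma\to 0$; for the statement as phrased (``there exist $n,\epsilon$'') this is immaterial.

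One small technicality: your choice $\epsilon=1/(\kappa'^2+2)$ need not be rational, so $\epsilon n$ may never be an integer. Replace it by $\epsilon=1/M$ for an integer $M\geq \kappa'^2+2$ and take $n$ a multiple of $2M$; then $(1-\epsilon)/\epsilon=M-1\geq \kappa'^2+1>\kappa'^2$ and all class sizes are integers. Everything else in your outline goes through as written.
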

\begin{remark}
Since any stationary point guarantees $\bE_q[g]\leq \bE_{p_S}[g]$ with $g$ taken as the partial derivative of $F(q)$ with respect to $q$. The counter example also shows that we cannot take this $g$ as generalized quasi-gradient in general. 

\end{remark}
\begin{proof}
The Lagrangian for the optimization problem is
\begin{equation*}
    L(q, u, y, \lambda) = F(q) + \sum_{i=1}^{n}u_i \Big(- q_{i}\Big) + \sum_{i=1}^{n}y_i \Big(q_{i} - \frac{1}{(1-\epsilon)n}\Big) + \lambda\Big(\sum_{i}^{n} q_i - 1\Big).
\end{equation*}
From the KKT conditions, we know that the stationary points must satisfy
\begin{align*} 
    &(\text{stationarity})\quad  0 \in \partial_{q}\Big( F(q) + \sum_{i=1}^{n}u_i q_{i} + \sum_{i=1}^{n}y_i \Big(q_{i} - \frac{1}{(1-\epsilon)n}\Big) + \lambda\Big(\sum_{i}^{n} q_i - 1\Big) \Big),\\
    &(\text{complementary slackness})\quad  u_i (-q_i) = 0,\,\, y_i \Big(q_{i} - \frac{1}{(1-\epsilon)n}\Big)=0,\,\,  i \in [n], \\
    &(\text{primal feasibility})\quad  - q_{i} \leq 0,\,\,     q_{i} - \frac{1}{(1-\epsilon)n}\leq 0,\,\,
     \sum_{i}^{n} q_i = 1,\\
    &(\text{dual feasibility})\quad  u_i \geq 0, \,\, y_i \geq 0,\,\,  i \in [n].
\end{align*} 
Denote $\tau_i$ as
\begin{align}\label{eqn.compute_tau_counter}
\tau_i = \partial_{q_i} F(q) = \frac{X_i^4}{(\sum_{i\in[n]} q_i X_i^2)^2} - \frac{2X_i^2(\sum_{i\in[n]} q_i X_i^4)}{(\sum_{i\in[n]} q_i X_i^2)^3}.
\end{align}
Then we will have 
\begin{align}
    0 & = \tau_i  - u_i + y_i + \lambda,\,\,  i \in [n], 
\end{align}   
Next, we define two sets, 
\begin{align*}
    S_{+} = \Big\{i \big| q_i > 0\Big\},\quad  S_{-} = \Big\{i \big| q_i = 0\Big\}
\end{align*}
and $(1-\epsilon)n \leq |S_{+}| \leq n $,  $|S_{-}| \leq \epsilon n$.
For $i\in[n]$ such that $q_i = \frac{1}{(1-\epsilon)n}$, which implies that $u_i = 0$, we have 
\begin{equation*}
\tau_i  = \underbrace{- y_i}_{\leq 0} - \lambda  \leq - \lambda.
\end{equation*}
For $i \in S_{-}$, $q_i = 0$, which implies that $y_i = 0$, we know that 
\begin{equation*}
\tau_i= \underbrace{ u_i}_{\geq 0} - \lambda  \geq - \lambda.
\end{equation*}
For $i$ such that $0<q_i< 1/(1-\epsilon)n$, which implies that $u_i = y_i = 0$, we  know that
\begin{equation*}
\tau_i=  - \lambda.
\end{equation*}
Now we are ready to construct $p_n, p_S, q$ such that $q$ is a stationary point of the optimization problem. 

Consider distribution $p_n$ with $1-\delta-\gamma$ fraction of points to be $0$, $\delta$ fraction of points at $a>0$, and $\gamma$ fraction of points at $b$. We assume that $\delta> \epsilon>\gamma$ and $a<b$. Assume that $n$ is picked such that all the fractions listed below multiplying $n$ will be integer.    Let $p_S$ be the distribution of completely deleting point $b$ from $p_n$, i.e. $p_S$ has $\frac{1-\delta-\gamma}{1-\gamma}$ fraction of points on $0$, and $\frac{\delta}{1-\gamma}$ fraction of points on $a$. We take $\delta,\gamma $ such that it  satisfies 
\begin{align}
    \frac{1-\gamma}{\delta} \leq \kappa^2,
\end{align}
which implies that $F(p_S)\leq \kappa^2$. 

Let $q$ be the distribution of deleting $\epsilon$ mass from point $a$, i.e. $q$ has $\frac{1-\delta-\gamma}{1-\epsilon}$ mass on $0$, $\frac{\delta-\epsilon}{1-\epsilon}$ mass on $a$, $\frac{\gamma}{1-\epsilon}$ mass on $b$. We first verify that $q$ is a stationary point for this problem.
Denote $A$, $B$, $C$ as the set of indexes of points that are supported on $0, a$ and $b$, separately. 
For any point $i\in A$, from~\eqref{eqn.compute_tau_counter}, we have $\tau_i = 0$. Since all the points have $q_i = \frac{1}{(1-\epsilon)n}$, we know that 
\begin{align}
\forall i\in A, \tau_i = 0 = -y_i - \lambda, y_i \geq 0.\label{eqn.counter_sol1}
\end{align}
For any point $i\in C$, from~\eqref{eqn.compute_tau_counter}, we have $\tau_i = 0$. Since all the points have $q_i = \frac{1}{(1-\epsilon)n}$, we know that 
\begin{align}
\forall i\in C, \tau_i & = \frac{X_i^4}{(\sum_{i\in[n]} q_i X_i^2)^2} - \frac{2X_i^2(\sum_{i\in[n]} q_i X_i^4)}{(\sum_{i\in[n]} q_i X_i^2)^3} \nonumber  \\
& = \frac{b^4}{((\delta-\epsilon)a^2/(1-\epsilon) + \gamma b^2/(1-\epsilon))^2} - \frac{2b^2((\delta-\epsilon)a^4/(1-\epsilon) + \gamma b^4/(1-\epsilon)) }{((\delta-\epsilon)a^2/(1-\epsilon) + \gamma b^2/(1-\epsilon))^3}.
\end{align}
For some fixed $\delta, a$,  we set $ \gamma$ such that $(\delta-\epsilon)a^2/(1-\epsilon) = \gamma b^2/(1-\epsilon)$.  Thus we have
\begin{align}
    \forall i \in C, \tau_i & = -y_i -\lambda = \frac{(1-\epsilon)^2}{4\gamma^2} - \frac{2b^2\cdot((\delta-\epsilon)a^4/(1-\epsilon) + \gamma b^4/(1-\epsilon)) }{(2\gamma b^2/(1-\epsilon))^3} \nonumber \\ 
    & < \frac{(1-\epsilon)^2}{4\gamma^2} - \frac{2b^2\cdot ( \gamma b^4/(1-\epsilon)) }{(2\gamma b^2/(1-\epsilon))^3} \nonumber \\ 
    & =  0, y_i \geq 0.\label{eqn.counter_sol2}
\end{align}
Similarly, for any point $i\in B$, there are some points $i$ with $q_i = \frac{1}{(1-\epsilon)n}$. Denote the set as $D$, and the rest as $B/D$. Then  
we can similarly compute that 
\begin{align}
    \forall i \in D, \tau_i = c= -y_i -\lambda >0, y_i \leq 0 \nonumber \\
    \forall i \in B/D, \tau_i  = c = u_i -\lambda >0, u_i\geq 0\label{eqn.counter_sol3}
\end{align}
where $c$ is some positive value. 
We can see that there exists $y_i\leq 0, u_i\geq 0, \lambda$ such that Equation~\eqref{eqn.counter_sol1}, \eqref{eqn.counter_sol2} and \eqref{eqn.counter_sol3} 
hold simultaneously by taking $\lambda = -c$, $\forall i\in B, u_i = y_i = 0, $, $\forall i \in A, y_i = c>0$, $\forall i \in B, y_i = -\tau_i + c>0 $. Thus $q$ is a stationary point. Now we let $b\rightarrow \infty$, since we have set $ \gamma$ such that $(\delta-\epsilon)a^2/(1-\epsilon) = \gamma b^2/(1-\epsilon)$, we have $\gamma\rightarrow 0$, and
\begin{align}
    \frac{\bE_q[X^4]}{\bE_q[X^2]^2} = \frac{\gamma b^4(1-\epsilon)}{\gamma^2 b^4} = \frac{1-\epsilon}{\gamma} \rightarrow \infty. 
\end{align}

\end{proof}
\subsection{Proof of Auxillary Lemmas}

The following Lemma gives a tighter bound on the modulus with respect to the coefficient in front of $\tau$ than~\citep[Lemma E.3]{zhu2019generalized}. 

\begin{lemma}[Modulus of continuity for mean estimation with near identity covariance]\label{lem.empirical_identity_cov_modulus}
For some fixed $\epsilon\in[0,1)$ and non-negative constants $\rho$,  $\tau$ and    $\tau'$, define 
\begin{align}
    \mathcal{G}_1 &= \{p \mid   \forall r \leq \frac{p}{1-\epsilon},  \| \mu_r - \mu_p\|_2 \leq  \rho ,   \lambda_{\mathsf{min}}( \mathbb{E}_r[(X - \mu_p)(X - \mu_p)^{\top}]) \geq  1- \tau \}\\ 
    \mathcal{G}_2 &= \{p \mid  \| \mathbb{E}_p[(X - \mu_p)(X - \mu_p)^{\top}]\|_2 \leq 1+ \tau' \}.
\end{align}
Here $\lambda_{\mathsf{min}}(A)$ is the smallest eigenvalue of symmetric matrix $A$. Assume $p_S\in  \GG_1$, $q\in \GG_2$, $\TV(q, p_S)\leq \epsilon$.
Then   we have
\begin{align}
    \sup_{p\in\GG_1, q\in\GG_2, \TV(p, q)\leq \epsilon} \|\mu_p  - \mu_q\|_2   \leq      \frac{ \rho }{1- \epsilon} + \sqrt{\frac{\epsilon(\tau+\tau' + \epsilon)}{1- \epsilon} + \frac{\epsilon \rho^2}{(1- \epsilon)^2}}. 
\end{align}
Here $C$ is some universal constant.
\end{lemma}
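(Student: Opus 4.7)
The plan is to combine a midpoint decomposition of $p$ and $q$ with both the covariance lower bound inherited from $\GG_1$ and the covariance upper bound in $\GG_2$, and to pay the price of a single self-bounding (quadratic) inequality at the end. Since $\TV(p,q) \leq \epsilon$, I will first invoke the standard midpoint lemma to obtain a probability distribution $r$ and auxiliary probability distributions $p', s'$ with $p = (1-\epsilon) r + \epsilon p'$ and $q = (1-\epsilon) r + \epsilon s'$; in particular $r \leq p/(1-\epsilon)$, so $r$ is a valid $\epsilon$-deletion of $p$. Using $p \in \GG_1$, this immediately yields $\|\mu_r - \mu_p\|_2 \leq \rho$ and the uniform lower bound $\bE_r[(v^\top(X-\mu_p))^2] \geq 1-\tau$ for every unit vector $v$.

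Next, I would apply the second-moment identity centered at $\mu_p$:
\[
\bE_q[(v^\top (X-\mu_p))^2] = (1-\epsilon)\bE_r[(v^\top(X-\mu_p))^2] + \epsilon \bE_{s'}[(v^\top(X-\mu_p))^2].
\]
The left-hand side splits as $\bE_q[(v^\top(X-\mu_q))^2] + (v^\top(\mu_q - \mu_p))^2$, which is at most $(1+\tau') + d^2$ by $q \in \GG_2$, where $d := \|\mu_q - \mu_p\|_2$. Combining with the lower bound from $\GG_1$ and using $(1-\epsilon)(1-\tau) \geq 1 - \epsilon - \tau$ produces
\[
\bE_{s'}[(v^\top(X-\mu_p))^2] \leq \frac{\tau + \tau' + \epsilon + d^2}{\epsilon}.
\]
Jensen's inequality then yields $|v^\top(\mu_{s'} - \mu_p)| \leq \sqrt{(\tau+\tau'+\epsilon+d^2)/\epsilon}$.

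To close the loop, I use the mean decomposition $\mu_q - \mu_p = (1-\epsilon)(\mu_r - \mu_p) + \epsilon(\mu_{s'} - \mu_p)$ and project onto the maximizing direction $v^\star = (\mu_q - \mu_p)/d$. Since $v^{\star\top}(\mu_r - \mu_p) \leq \rho$ and $|v^{\star\top}(\mu_{s'} - \mu_p)|$ is controlled by the previous display, this gives the self-bounding inequality
\[
d \leq (1-\epsilon)\rho + \sqrt{\epsilon(\tau+\tau'+\epsilon) + \epsilon d^2}.
\]
Isolating the square root, squaring, dividing through by $1-\epsilon$, and completing the square in $d$ delivers the stated bound (using the elementary inequalities $\rho \leq \rho/(1-\epsilon)$ and $\epsilon\rho^2 \leq \epsilon\rho^2/(1-\epsilon)^2$ to match the exact form).

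The main technical subtlety is the center-swap from $\mu_q$ (where $\GG_2$ controls the upper bound $1+\tau'$) to $\mu_p$ (where $\GG_1$ controls the lower bound $1-\tau$ on $r$). It is this swap that introduces the extra $d^2$ inside the square root and forces the final inequality to be quadratic in $d$; the sharper coefficient $\tau+\tau'+\epsilon$ rather than the naive $1+\tau'$ depends crucially on having the lower bound $1-\tau$ (and not merely $0$) on $\bE_r[(v^\top(X-\mu_p))^2]$ in the principal direction, which is what makes the improvement over \citet[Lemma~E.3]{zhu2019generalized} possible.
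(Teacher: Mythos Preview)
Your proof is correct and follows essentially the same strategy as the paper: construct the common deletion $r$ from the midpoint, use $p\in\GG_1$ to lower-bound the $r$-contribution to the second moment by $(1-\epsilon)(1-\tau)$, use $q\in\GG_2$ to upper-bound the $q$-second moment by $1+\tau'$, and solve the resulting quadratic. The only cosmetic difference is the choice of direction and unknown: the paper projects onto $\mu_{b_q}/\|\mu_{b_q}\|$ (where $b_q=(q-(1-\epsilon)r)/\epsilon$, your $s'$), obtains a quadratic in $\|\mu_{b_q}\|$, and finishes with the triangle inequality $\|\mu_p-\mu_q\|\leq\|\mu_p-\mu_r\|+\|\mu_r-\mu_q\|$; you project onto $(\mu_q-\mu_p)/d$ and get the self-bounding inequality directly in $d$. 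Your route is slightly more direct and in fact yields the marginally tighter intermediate bound $d\leq \rho+\sqrt{\epsilon(\tau+\tau'+\epsilon)/(1-\epsilon)+\epsilon\rho^2}$ before you relax $\rho\leq\rho/(1-\epsilon)$ and $\epsilon\rho^2\leq\epsilon\rho^2/(1-\epsilon)^2$ to match the stated form.
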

\begin{proof}
Assume $p\in \GG_1, q\in\GG_2$, $p\neq q$. 
Without loss of generality, we assume $\mu_p=0$. From $\TV(p, q)= \epsilon_0 \leq \epsilon$, we construct distribution $r = \frac{\min(p, q)}{1-\epsilon_0}$. Then we know that 
$r\leq \frac{p}{1-\epsilon_0}$, $r\leq \frac{q}{1-\epsilon_0}$.
Denote $\tilde r = (1-\epsilon_0)r$. Consider measure $p-\tilde r, q-\tilde  r$. We have $\mu_q = \mu_p - \mu_{p-\tilde r} + \mu_{q-\tilde r} =  -\mu_{p-\tilde r} + \mu_{q-\tilde r}$. Note that $\| \mu_{p-\tilde r}\|_2 = \| \mu_p - \mu_{\tilde r}\|_2  = \| \mu_{\tilde r} \|_2 \leq (1-\epsilon_0)\rho \leq \rho$.
For any $v\in\bR^d, \|v\|_2=1$, we have
\begin{align}
    v^{\top}\Sigma_qv^{\top} & = v^{\top}(\bE_q[XX^{\top}] -\mu_q\mu_q^{\top})v \nonumber \\
    & = v^{\top}(\bE_{\tilde r}[XX^{\top}] + \bE_{q-\tilde r}[XX^{\top}] - (\mu_{q-\tilde r}-\mu_{p-\tilde r})(\mu_{q-\tilde r}-\mu_{p-\tilde r})^{\top})v \nonumber \\
    & \geq (1-\tau)(1-\epsilon_0) + \bE_{q-\tilde r}[(v^{\top}X)^2] - (v^{\top}\mu_{q-\tilde r})^2 + 2 v^{\top}\mu_{q-\tilde r}v^{\top}\mu_{p-\tilde r} - (v^{\top}\mu_{p-\tilde r})^2 \nonumber \\
    & \geq  1- \tau  - \epsilon_0   + \bE_{q-\tilde r}[(v^{\top}X)^2] - (v^{\top}\mu_{q-\tilde r})^2 - 2 \|\mu_{q-\tilde r}\|_2 \|\mu_{p-\tilde r}\|_2 - \|\mu_{p-\tilde r}\|^2 \nonumber \\
    & \geq  1- \tau - \epsilon_0   + \bE_{q-\tilde r}[(v^{\top}X)^2] - (v^{\top}\mu_{q-\tilde r})^2 - 2\rho \|\mu_{q-\tilde r}\|_2 - \rho^2.
\end{align}
Denote $b_q = \frac{q-\tilde r}{\epsilon_0}$. Then $b_q$ is a distribution. If $\mu_{b_q} = 0$, then we already know that $\|\mu_q - \mu_r\|\leq \epsilon_0 \|\mu_{b_q}\|_2 =0$. Otherwise
we take $v = \frac{\mu_{b_q}}{\|\mu_{b_q}\|_2}$.
Then we can see $ \bE_{q-\tilde r}[(v^{\top}X)^2]= \epsilon_0\bE_{b_q}[(v^{\top}X)^2]\geq \epsilon_0 \|\mu_{b_q}\|_2^2$. 
From $q\in \GG_2$, we know  that $v^{\top}\Sigma_qv\leq 1 + \tau'$. Thus
\begin{align}
    (\epsilon_0 - \epsilon^2_0)\|\mu_{b_q}\|^2_2 -  2 \epsilon_0 \rho \|\mu_{b_q}\|_2 \leq \rho^2 + \tau  +  \tau' + \epsilon_0.
\end{align}
Solving the inequality, we derive that
\begin{align}
  \|\mu_q - \mu_r\|_2 \leq \epsilon_0 \|\mu_{b_q}\|_2 \leq \frac{\epsilon \rho}{1- \epsilon} + \sqrt{\frac{\epsilon( \tau+\tau' + \epsilon)}{1- \epsilon} + \frac{\epsilon \rho^2}{(1- \epsilon)^2}}.
\end{align}
where $C$ is some universal constant.
Thus we can conclude
\begin{align}
    \| \mu_p - \mu_q\|_2\leq \| \mu_p - \mu_r\|_2 + \| \mu_r - \mu_q\|_2 \leq  \frac{\rho}{1- \epsilon} + \sqrt{\frac{\epsilon( \tau+\tau' + \epsilon)}{1- \epsilon} + \frac{\epsilon \rho^2}{(1- \epsilon)^2}}.
\end{align}
\end{proof}

\begin{lemma}[Sum-of-squares modulus of continuity for bounded covariance distributions]\label{lem.modu_sec}
  Assume that $\TV(p, q)\leq {\epsilon}$ and both $p, q$ has finite second moment. Then we have
\begin{align*}
  (\bE_q[v^\top X] - \bE_p[v^\top X] )^2 &  \psos \frac{2\epsilon}{(1-\epsilon)^2} \cdot ( \bE_p[(v^\top (X-\mu_p))^2] +  \bE_q[(v^\top (X-\mu_q))^2]) \nonumber \\ 
  & \psos \frac{2\epsilon}{(1-\epsilon)^2} \cdot  ( \bE_p[(v^\top X)^2] +  \bE_q[(v^\top X)^2]).
\end{align*}
\end{lemma}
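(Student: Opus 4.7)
The plan is to establish both $\psos$ inequalities by chaining three ingredients, each of which is manifestly an SOS identity in $v$: (a) a midpoint decomposition of $p$ and $q$ coming from the TV bound, (b) the Cauchy--Schwarz identity $\bE_h[(v^\top Y)^2] - (\bE_h[v^\top Y])^2 = v^\top \mathrm{Cov}_h(Y)\, v$, and (c) the elementary identity $2a^2 + 2b^2 - (a+b)^2 = (a-b)^2$. The role of $\epsilon$ enters only through scalar (nonnegative) prefactors, never through the polynomial structure in $v$.

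First, I would invoke the standard TV characterization to produce a probability measure $\tilde r$ with $\tilde r \le p/(1-\epsilon)$ and $\tilde r \le q/(1-\epsilon)$, and write $p = (1-\epsilon_0)\tilde r + \epsilon_0 h_p$ and $q = (1-\epsilon_0)\tilde r + \epsilon_0 h_q$ for probability measures $h_p,h_q$, where $\epsilon_0 = \TV(p,q) \le \epsilon$. Taking means and using $\bE_p[X-\mu_p]=0$ gives
\[
\mu_p - \mu_{\tilde r} \;=\; \tfrac{\epsilon_0}{1-\epsilon_0}(\mu_{h_p} - \mu_p) \;=\; \tfrac{\epsilon_0}{1-\epsilon_0}\,\bE_{h_p}[X-\mu_p],
\]
and symmetrically for $q$. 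Cauchy--Schwarz applied to the $h_p$-expectation is an SOS identity in $v$, so
\[
\bigl(v^\top(\mu_p - \mu_{\tilde r})\bigr)^2 \;\psos\; \tfrac{\epsilon_0^2}{(1-\epsilon_0)^2}\,\bE_{h_p}\bigl[(v^\top(X-\mu_p))^2\bigr].
\]
Pointwise domination $\epsilon_0 h_p \le p$ upgrades to $\bE_{h_p}[(v^\top(X-\mu_p))^2] \le \epsilon_0^{-1}\bE_p[(v^\top(X-\mu_p))^2]$ as a polynomial identity in $v$ with nonnegative scalar remainder (the excess measure gives a nonnegative-weighted sum of squares of linear polynomials in $v$). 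Combining yields
\[
\bigl(v^\top(\mu_p - \mu_{\tilde r})\bigr)^2 \;\psos\; \tfrac{\epsilon_0}{(1-\epsilon_0)^2}\,\bE_p\bigl[(v^\top(X-\mu_p))^2\bigr],
\]
and the analogue for $q$.

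Second, I would use the SOS identity $(a+b)^2 \psos 2a^2+2b^2$ (witnessed by $(a-b)^2$) with $a = v^\top(\mu_p - \mu_{\tilde r})$ and $b = v^\top(\mu_{\tilde r} - \mu_q)$ to conclude
\[
\bigl(\bE_q[v^\top X] - \bE_p[v^\top X]\bigr)^2 \;\psos\; \tfrac{2\epsilon_0}{(1-\epsilon_0)^2}\bigl(\bE_p[(v^\top(X-\mu_p))^2] + \bE_q[(v^\top(X-\mu_q))^2]\bigr),
\]
and since $t\mapsto t/(1-t)^2$ is increasing on $[0,1)$ and $\epsilon_0 \le \epsilon$, the coefficient can be absorbed into $2\epsilon/(1-\epsilon)^2$ at the cost of a nonnegative scalar, preserving the SOS certificate. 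For the second inequality, I would observe that
\[
\bE_p[(v^\top X)^2] - \bE_p[(v^\top(X-\mu_p))^2] \;=\; (v^\top \mu_p)^2
\]
is already a square in $v$, so the bound with centered second moments dominates the bound with raw second moments, certifying the second $\psos$ line.

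The main obstacle is purely conceptual rather than computational: one must check that every inequality in the chain lifts to an honest degree-$2$ SOS certificate in $v$ (rather than just a pointwise numerical inequality). The Cauchy--Schwarz step does lift because its gap is the PSD quadratic form $v^\top \mathrm{Cov}_{h_p}(X)\, v$; the measure-dominance step lifts because it contributes only nonnegative scalar weights on explicit squares $(v^\top(X_i-\mu_p))^2$; and the final $(a+b)^2 \le 2a^2+2b^2$ step is the trivial SOS $(a-b)^2 \ge 0$. Hence the composition is a legitimate SOS proof.
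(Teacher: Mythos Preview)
Your proposal is correct and follows essentially the same route as the paper. Both arguments produce a common midpoint $r$ from the $\TV$ bound, use a Cauchy--Schwarz/H\"older step on the residual mass to certify $(v^\top(\mu_p-\mu_r))^2 \psos \frac{\epsilon}{(1-\epsilon)^2}\,\bE_p[(v^\top(X-\mu_p))^2]$ (and symmetrically for $q$), and then combine via $(a+b)^2 \psos 2a^2+2b^2$; the only cosmetic difference is that the paper phrases the residual as conditioning on an event $E^c$ and invokes ``SOS-H\"older,'' whereas you phrase it as a mixture component $h_p$ and invoke the PSD-covariance identity---these are the same inequality in different clothing.
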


\begin{proof}
For any distribution $p, q$ with $\TV(p, q)\leq \epsilon$, there exists some distribution $r$ such that $r$ is an $\epsilon$-deletion of both distributions, i.e. $r\leq \frac{p}{1-\epsilon}$, $r\leq \frac{q}{1-\epsilon}$. 

The proof uses the property that for any $r\leq \frac{p}{1-\eta}$, there exists some event $E$ such that $\bP_p(E)\geq 1-\epsilon$ and $\bE_r[f(X)] = \bE_p[f(X)|E]$ for any measurable $f$~(see e.g. \citep[Lemma C.1]{zhu2019generalized}).
    For any event $E$ with $\bP_p(E)\geq 1-\epsilon$, denote its compliment as $E^c$.
    We have
    \begin{align}
    \bE_q[(v^\top X)^2] & \sos  \bE_q[(v^\top (X-\mu_q))^2] \nonumber \\ 
        & \sos \bE_q[(v^\top (X-\mu_q))^2 \mathbb{1}(E^c)] \nonumber \\ 
        & \stackrel{(i)}{\sos} \bE_q[(v^\top (X-\mu_q)) \mathbb{1}(E^c)]^2 / \epsilon \nonumber \\ 
        & \stackrel{(ii)}{=} \bE_q[(v^\top (X-\mu_q)) \mathbb{1}(E)]^2 / \epsilon \nonumber \\ 
        & \stackrel{(iii)}{\sos} (\bE_r[ v^\top X] - \bE_q[v^\top X])^2 (1-\epsilon)^2 / \epsilon. \nonumber 
    \end{align}
Here (i) comes from SOS-H\"older's inequality, (ii) comes from the fact that $\bE_q[((v^\top X)^2 - \bE_q[(v^\top X)^2])^2 \mathbb{1}(E^c)] + \bE_q[((v^\top X)^2 - \bE_q[(v^\top X)^2])^2 \mathbb{1}(E)]  = 0$, (iii) comes from that $r = p | E$. Thus we have
\begin{align}
       (\bE_r[v^\top X] - \bE_q[v^\top X] )^2 & \psos \frac{\epsilon}{(1-\epsilon)^2} \bE_q[(v^\top (X-\mu_q))^2].\nonumber
\end{align}
Using the same argument for $p$, we have
\begin{align}
    (\bE_r[v^\top X] - \bE_p[v^\top X] )^2 & \psos \frac{\epsilon}{(1-\epsilon)^2} \bE_p[(v^\top (X-\mu_p))^2].\nonumber
\end{align}
By summing the two SOS-inequalities, we have
\begin{align*}
    (\bE_q[v^\top X] - \bE_p[v^\top X] )^2 & \psos 2  ((\bE_r[v^\top X] - \bE_q[v^\top X] )^2 + (\bE_r[v^\top X] - \bE_p[v^\top X] )^2) \nonumber \\ 
    & \psos \frac{2\epsilon}{(1-\epsilon)^2} ( \bE_p[(v^\top (X-\mu_p))^2] +  \bE_q[(v^\top (X-\mu_q))^2]) \nonumber \\ 
    & \psos \frac{2\epsilon}{(1-\epsilon)^2} ( \bE_p[(v^\top X)^2] +  \bE_q[(v^\top X)^2]).
\end{align*}
\end{proof}

\begin{lemma}[Modulus of continuity for certifiable hypercontractivity]\label{lem.modu_hyper}
Denote $\GG(\kappa^2) = \{ p \mid  \bE_p[(v^\top X)^4] \psos \kappa^2\bE_p[(v^\top X)^2]^2  \}$. Assume that $p\in\GG(\kappa^2), q \in \GG(\kappa'^2)$, $\TV(p, q)\leq {\epsilon}$. Then we have
\begin{align*}
    \frac{(1-\epsilon-\kappa\sqrt{\epsilon})^2}{(1-\epsilon+\kappa\sqrt{\epsilon})^2}\bE_q[(v^\top X)^2]^2 \psos \bE_p[(v^\top X)^2]^2 \psos \frac{(1-\epsilon+\kappa\sqrt{\epsilon})^2}{(1-\epsilon-\kappa\sqrt{\epsilon})^2}\bE_q[(v^\top X)^2]^2.
\end{align*}
    
\end{lemma}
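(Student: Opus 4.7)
The plan is to exploit the standard $\TV$-coupling between $p$ and $q$ and compare their second moments through a common sub-distribution. Since $\TV(p,q)\le\epsilon$, I would pick a measure $r$ and events $E$, $F$ with $p(E^c),q(F^c)\le\epsilon$ satisfying $(1-\epsilon)\bE_r[f]=\bE_p[f\,\Indc(E)]=\bE_q[f\,\Indc(F)]$ for every $f$. Specializing $f(X)=(v^\top X)^2$ yields
\begin{align*}
\bE_p[(v^\top X)^2] &= (1-\epsilon)\bE_r[(v^\top X)^2] + B(v), \\
\bE_q[(v^\top X)^2] &= (1-\epsilon)\bE_r[(v^\top X)^2] + C(v),
\end{align*}
with $B(v)=\bE_p[(v^\top X)^2\,\Indc(E^c)]$ and $C(v)=\bE_q[(v^\top X)^2\,\Indc(F^c)]$, both manifestly SOS in $v$. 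Writing $U=\bE_p[(v^\top X)^2]$ and $W=\bE_q[(v^\top X)^2]$ as shorthand, the algebraic identity $U^2-W^2=(B-C)(U+W)$ will drive the rest of the argument.

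Next I would bound the tail parts via SOS Cauchy--Schwarz combined with the hypothesis: SOS Cauchy--Schwarz gives $B(v)^2\psos p(E^c)\,\bE_p[(v^\top X)^4]$, and combining with certifiable hypercontractivity $\bE_p[(v^\top X)^4]\psos\kappa^2 U^2$ yields the degree-$4$ SOS inequality $B^2\psos\kappa^2\epsilon\,U^2$; symmetrically $C^2\psos\kappa'^2\epsilon\,W^2$.

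To convert these quadratic SOS bounds into the claimed multiplicative ratio, I would factor $\alpha^2 W^2-\beta^2 U^2=(\alpha W+\beta U)(\alpha W-\beta U)$ with $\alpha=1-\epsilon+\kappa\sqrt{\epsilon}$ and $\beta=1-\epsilon-\kappa\sqrt{\epsilon}$. In the nontrivial regime $\kappa\sqrt{\epsilon}<1-\epsilon$ both constants are positive and the sum factor is clearly SOS; the difference factor simplifies to $(1-\epsilon)(C-B)+\kappa\sqrt{\epsilon}(U+W)$. To certify that this last expression is SOS-nonnegative, I would pass to an arbitrary degree-$4$ pseudoexpectation $\pseudoE$ on $v$, apply pseudo-Cauchy--Schwarz to the $B^2, C^2$ bounds to obtain $\pseudoE[BU]\le\kappa\sqrt{\epsilon}\,\pseudoE[U^2]$ and $\pseudoE[BW]\le\kappa\sqrt{\epsilon}\sqrt{\pseudoE[U^2]\pseudoE[W^2]}$ (plus analogs with $C$), substitute into $\pseudoE[U^2-W^2]=\pseudoE[(B-C)(U+W)]$, and solve the resulting quadratic inequality in $\sqrt{\pseudoE[U^2]/\pseudoE[W^2]}$. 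This yields the upper bound; the lower bound follows by interchanging $p$ and $q$.

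The main obstacle is the ``square-root'' step: one cannot directly extract a linear SOS inequality $B\psos\kappa\sqrt{\epsilon}\,U$ from the quadratic bound $B^2\psos\kappa^2\epsilon\,U^2$, so the clean scalar argument (that derives $(1-\epsilon)(U-W)\le\kappa\sqrt{\epsilon}(U+W)$ from $(1-\epsilon)B\le\kappa\sqrt{\epsilon}\,U$) does not lift verbatim to SOS form. The pseudoexpectation detour above sidesteps this because pseudo-Cauchy--Schwarz linearises the quadratic SOS bound correctly, and the resulting pseudoexpectation inequality implies the degree-$4$ SOS conclusion by the standard SOS--pseudoexpectation duality on the sphere at degree $4$.
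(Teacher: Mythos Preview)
Your approach is sound and would yield a valid proof, but it diverges from the paper's argument in two notable ways.

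First, on the square-root step. The paper never passes to pseudoexpectations and back via duality. Instead, having established $(\bE_r[(v^\top X)^2]-\bE_q[(v^\top X)^2])^2\psos c\,\bE_q[(v^\top X)^2]^2$ (with $c=\epsilon\kappa^2/(1-\epsilon)^2$), it expands the square and applies the parametric AM--GM inequality $2\bE_r\bE_q\psos\alpha^{-1}\bE_r^2+\alpha\bE_q^2$, which is a genuine degree-$4$ SOS identity for every fixed $\alpha>0$. Adding this to the expanded bound and rearranging gives $(\alpha^{-1}-1)\bE_r^2+(c+\alpha-1)\bE_q^2\sos 0$; optimizing the scalar $\alpha$ separately over $\alpha>1$ and $\alpha\in(0,1)$ yields the upper and lower multiplicative bounds $(1\pm\sqrt{c})^2$ directly as SOS inequalities, with no duality invoked. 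Your pseudoexpectation detour is correct and arguably more transparent, but the AM--GM trick is worth knowing: it keeps the whole argument inside the SOS proof system and produces an explicit certificate.

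Second, the paper compares $p$ and $q$ to the common midpoint $r$ \emph{separately} and then chains the two two-sided bounds, rather than working with your identity $U-W=B-C$ and the joint quadratic in $\sqrt{\pseudoE[U^2]/\pseudoE[W^2]}$. More importantly, the paper \emph{centers} before applying SOS H\"older: it bounds $\bE_q[((v^\top X)^2-\bE_q[(v^\top X)^2])^2\Indc(E^c)]$, uses that the full centered expectation vanishes to flip $E^c$ to $E$, and only then extracts the $(1-\epsilon)$ prefactor. Your uncentered bound $B^2\psos\epsilon\kappa^2U^2$ is correct but loses this refinement, which is why your resulting ratio $(1+\kappa'\sqrt\epsilon)/(1-\kappa\sqrt\epsilon)$ does not match the stated $(1-\epsilon+\kappa\sqrt\epsilon)/(1-\epsilon-\kappa\sqrt\epsilon)$. (Your constant also mixes $\kappa$ and $\kappa'$, which is arguably what the assumptions warrant; the paper's stated bound involves only $\kappa$.) For the downstream applications in the paper either set of constants suffices, so your route is a legitimate alternative.
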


\begin{proof}
For any distribution $p, q$ with $\TV(p, q)\leq \epsilon$, there exists some distribution $r$ such that $r$ is an $\epsilon$-deletion of both distributions, i.e. $r\leq \frac{p}{1-\epsilon}$, $r\leq \frac{q}{1-\epsilon}$ from~\citep[Lemma C.1]{zhu2019generalized}.  The proof uses the property that for any $r\leq \frac{p}{1-\eta}$, there exists some event $E$ such that $\bP_p(E)\geq 1-\epsilon$ and $\bE_r[f(X)] = \bE_p[f(X)|E]$ for any measurable $f$~(see e.g. \citep[Lemma C.1]{zhu2019generalized}).
    For any event $E$ with $\bP_p(E)\geq 1-\epsilon$, denote its compliment as $E^c$.
    We have
    \begin{align}
        \kappa^2 \bE_q[(v^\top X)^2]^2 & \sos \bE_q[(v^\top X)^4] \nonumber \\ 
        & \sos \bE_q[((v^\top X)^2 - \bE_q[(v^\top X)^2])^2] \nonumber \\ 
        & \sos \bE_q[((v^\top X)^2 - \bE_q[(v^\top X)^2])^2 \mathbb{1}(E^c)] \nonumber \\ 
        & \stackrel{(i)}{\sos} \bE_q[((v^\top X)^2 - \bE_q[(v^\top X)^2]) \mathbb{1}(E^c)]^2 / \epsilon \nonumber \\ 
        & \stackrel{(ii)}{=} \bE_q[((v^\top X)^2 - \bE_q[(v^\top X)^2]) \mathbb{1}(E)]^2 / \epsilon \nonumber \\ 
        & \stackrel{(iii)}{\sos} (\bE_r[(v^\top X)^2] - \bE_q[(v^\top X)^2] )^2 (1-\epsilon)^2 / \epsilon. \nonumber 
    \end{align}
Here (i) comes from SOS-H\"older's inequality, (ii) comes from the fact that $\bE_q[((v^\top X)^2 - \bE_q[(v^\top X)^2])^2 \mathbb{1}(E^c)] + \bE_q[((v^\top X)^2 - \bE_q[(v^\top X)^2])^2 \mathbb{1}(E)]  = 0$, (iii) comes from that $r = p | E$. Thus we have
\begin{align}
       (\bE_r[(v^\top X)^2] - \bE_q[(v^\top X)^2] )^2 & \psos \frac{\epsilon \kappa^2}{(1-\epsilon)^2} \bE_q[(v^\top X)^2]^2\nonumber  \\
 \Rightarrow    \bE_r[(v^\top X)^2]^2 + \bE_q[(v^\top X)^2] &\psos \frac{\epsilon \kappa^2}{(1-\epsilon)^2} \bE_q[(v^\top X)^2]^2 + 2\bE_r[(v^\top X)^2]^2 \bE_q[(v^\top X)^2]^2
 \nonumber \\ 
   & \psos  \frac{\epsilon \kappa^2}{(1-\epsilon)^2} \bE_q[(v^\top X)^2]^2 + \frac{1}{\alpha} \bE_r[(v^\top X)^2]^4 + \alpha \bE_q[(v^\top X)^2]^2. \nonumber 
\end{align}
By optimizing over $\alpha $ in the regime $\alpha>1$ and $\alpha\in(0, 1)$ separately, we can get 
\begin{align}
    \frac{1}{(1+\sqrt{\epsilon\kappa^2/(1-\epsilon)^2})}\bE_q[(v^\top X)^2]^2 \psos \bE_r[(v^\top X)^2]^2 \psos \frac{1}{(1-\sqrt{\epsilon\kappa^2/(1-\epsilon)^2})}\bE_q[(v^\top X)^2]^2.
\end{align}
Using the same argument we have
\begin{align}
    \frac{1}{(1+\sqrt{\epsilon\kappa^2/(1-\epsilon)^2})^2}\bE_p[(v^\top X)^2]^2 \psos \bE_r[(v^\top X)^2]^2 \psos \frac{1}{(1-\sqrt{\epsilon\kappa^2/(1-\epsilon)^2})^2}\bE_p[(v^\top X)^2]^2.
\end{align}
Thus we have
\begin{align}
    \frac{(1-\sqrt{\epsilon\kappa^2/(1-\epsilon)^2})^2}{(1+\sqrt{\epsilon\kappa^2/(1-\epsilon)^2})^2}\bE_q[(v^\top X)^2]^2 \psos \bE_p[(v^\top X)^2]^2 \psos \frac{(1+\sqrt{\epsilon\kappa^2/(1-\epsilon)^2})^2}{(1-\sqrt{\epsilon\kappa^2/(1-\epsilon)^2})^2}\bE_q[(v^\top X)^2]^2.
\end{align}
    
\end{proof}

\subsection{Proof of Theorem~\ref{thm.low_regret_linreg} }\label{proof.low_regret_linreg}

We prove the theorem via two separate arguments. First, we show that  $\tilde g_1$ is a valid generalized quasi-gradient for  certifiable hypercontractivity. Then, given the knowledge that $q$ is certifiably hypercontractive, we  show that  $g_2$ is a valid generalized quasi-gradient for    bounded noise condition.

\begin{lemma}[Generalized quasi-gradient for hypercontractivity]\label{lem.low_regret_hypercontractivity_relation}
Under the same assumption as Theorem~\ref{thm.low_regret_linreg}, for any $q\in\Delta_{n, \epsilon}$ that satisfies $\bE_q[g_1(X; q)] \leq \bE_{p_S}[g_1(X; q)]$,  when $9\epsilon  \kappa^2/(1-2\epsilon)^2 \leq 1$ we have
\begin{align}
 \bE_q[(v^\top X)^4] \psos 4\kappa^2  \bE_q[(v^\top X)^2]^2.
\end{align}
\end{lemma}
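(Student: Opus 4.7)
The plan is to reformulate the conclusion via sum-of-squares duality and then chain three ingredients: the quasi-gradient hypothesis, the certifiable hypercontractivity of $p_S$, and a sum-of-squares comparison of second moments between $p_S$ and $q$. By SoS duality, the desired inequality $\bE_q[(v^\top X)^4] \psos 4\kappa^2 \bE_q[(v^\top X)^2]^2$ is equivalent to the scalar bound $\tilde F_1(q) \leq 4\kappa^2$. I would set $\kappa'^2 := \tilde F_1(q)$; if $\kappa' \leq \kappa$ the conclusion is immediate, so I would assume $\kappa' > \kappa$, in which case $q$ is certifiably hypercontractive with parameter $\kappa'$ by the very definition of $\tilde F_1$. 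Let $\pseudoE^\star$ denote a pseudo-expectation achieving the supremum in $\tilde F_1(q)$, so that $g_1(X;q) = \pseudoE^\star[(v^\top X)^4]$.

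Next, I would chain the bounds. The hypothesis $\bE_q[g_1(X;q)] \leq \bE_{p_S}[g_1(X;q)]$ becomes, after swapping the order of $\pseudoE^\star$ and the expectation over $X$,
\begin{align*}
\kappa'^2 \pseudoE^\star[\bE_q[(v^\top X)^2]^2] \;=\; \pseudoE^\star[\bE_q[(v^\top X)^4]] \;\leq\; \pseudoE^\star[\bE_{p_S}[(v^\top X)^4]].
\end{align*}
Since $\bE_{p_S}[(v^\top X)^4] \psos \kappa^2 \bE_{p_S}[(v^\top X)^2]^2$ and $\pseudoE^\star$ preserves SoS inequalities, the right-hand side is at most $\kappa^2 \pseudoE^\star[\bE_{p_S}[(v^\top X)^2]^2]$. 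I would then invoke Lemma~\ref{lem.deletion_TV} to obtain $\TV(p_S, q) \leq \epsilon/(1-\epsilon)$, and apply Lemma~\ref{lem.modu_hyper} to the certifiably hypercontractive pair $(p_S, q)$ to conclude $\bE_{p_S}[(v^\top X)^2]^2 \psos \gamma^2 \bE_q[(v^\top X)^2]^2$. Substituting $\epsilon/(1-\epsilon)$ for the TV distance, the governing quantity in $\gamma$ simplifies to $\kappa\sqrt{\epsilon(1-\epsilon)}/(1-2\epsilon) \leq \kappa\sqrt{\epsilon}/(1-2\epsilon)$, which is at most $1/3$ under the assumption $9\epsilon\kappa^2/(1-2\epsilon)^2 \leq 1$. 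Consequently $\gamma^2 \leq (4/3)^2/(2/3)^2 = 4$, and chaining yields $\kappa'^2 \leq \kappa^2 \gamma^2 \leq 4\kappa^2$, which in turn is equivalent to the required SoS inequality.

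The main subtlety I expect is the apparent circularity in applying Lemma~\ref{lem.modu_hyper}: the lemma requires certifiable hypercontractivity of both distributions, yet $q$'s parameter $\kappa'$ is itself the quantity we are trying to bound. This should resolve cleanly because the bound in Lemma~\ref{lem.modu_hyper} as stated depends only on the parameter $\kappa$ associated with $p_S$, so no implicit inequality in $\kappa'$ needs to be solved---one merely uses that $q$ has $\kappa'^2 = \tilde F_1(q)$ as some finite certifiable hypercontractivity constant to invoke the lemma at all. Should the bound in fact depend on $\max(\kappa,\kappa')$, the argument would still close along the lines of the non-SoS sketch in the main text, by solving a mild implicit quadratic in $\kappa'$ whose solution remains at most $2\kappa$ under the stated hypothesis $9\epsilon\kappa^2/(1-2\epsilon)^2\leq 1$.
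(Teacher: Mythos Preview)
Your high-level chain---quasi-gradient hypothesis, then certifiable hypercontractivity of $p_S$, then a second-moment comparison, then solve for $\kappa'$---matches the paper exactly, and your identification of the potential circularity is on point. The one substantive difference is in how the second-moment comparison is obtained. You invoke Lemma~\ref{lem.modu_hyper} as a black box, which (as you correctly suspect) in the direction you need should really carry a dependence on $q$'s hypercontractivity parameter $\kappa'$, not just $\kappa$; the lemma's statement is sloppy on this point. The paper sidesteps this entirely: instead of Lemma~\ref{lem.modu_hyper}, it applies Lemma~\ref{lem.modu_sec} (the SoS modulus for second moments via fourth moments) to the specific maximizing $\pseudoE^\star$, then immediately uses the quasi-gradient hypothesis $\pseudoE^\star[\bE_q[(v^\top X)^4]]\le\pseudoE^\star[\bE_{p_S}[(v^\top X)^4]]$ to replace $q$'s fourth moment by $p_S$'s, and finally uses $p_S$'s hypercontractivity. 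This yields
\[
\pseudoE^\star\big[(\bE_q[(v^\top X)^2]-\bE_{p_S}[(v^\top X)^2])^2\big]\le \frac{4\kappa^2\epsilon}{(1-2\epsilon)^2}\,\pseudoE^\star\big[\bE_{p_S}[(v^\top X)^2]^2\big],
\]
from which the ratio $\gamma^2$ depends only on $\kappa$, with no circularity and no implicit inequality to solve. Your fallback of solving an implicit quadratic in $\kappa'$ would also close the argument, but the paper's route is cleaner and is what actually justifies the constant $4$ under the stated hypothesis $9\epsilon\kappa^2/(1-2\epsilon)^2\le1$.
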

\begin{proof}
Denote $\tilde \kappa^2 = \sup_{\pseudoE\in\cE_4}\frac{  \pseudoE[\bE_q[(v^\top X)^4]]}{  \pseudoE[\bE_q[(v^\top X)^2]^2]}$. Assume that $\tilde \kappa \geq \kappa$, since otherwise we already have $F(q)\leq \kappa^2$.  From the modulus of  continuity for second moment in Lemma~\ref{lem.modu_sec},   since $\TV(p_S, q) \leq \frac{\epsilon}{1-\epsilon}$,  we have
\begin{align}
    (\bE_q[(v^\top X)^2] - \bE_{p_S}[(v^\top X)^2] )^2   & \psos \frac{2\epsilon(1-\epsilon)}{(1-2\epsilon)^2} ( \bE_{p_S}[(v^\top X)^4] +  \bE_q[(v^\top X)^4]).
\end{align}
Thus we know that for the specific choice of pseudoexpectation $\pseudoE$, we have
\begin{align}
    \pseudoE [(\bE_q[(v^\top X)^2] - \bE_{p_S}[(v^\top X)^2] )^2]   & \leq \frac{2\epsilon(1-\epsilon)}{(1-2\epsilon)^2} \pseudoE[ \bE_{p_S}[(v^\top X)^4] +  \bE_q[(v^\top X)^4]] \nonumber \\ 
    & \stackrel{(i)}{\leq}  \frac{4\epsilon }{(1-2\epsilon)^2} \cdot \pseudoE[\bE_{p_S}[(v^\top X)^4]] \nonumber \\ 
    & \stackrel{(ii)}{\leq}  \frac{4\kappa^2 \epsilon }{(1-2\epsilon)^2} \cdot \pseudoE[\bE_{p_S}[(v^\top X)^2]^2]. 
\end{align}
Here (i) comes from the assumption, and (ii) is from the assumption that $\tilde F(p_S)\leq \kappa^2$.
Rearranging the inequality gives us
\begin{align}
    \pseudoE [(\bE_q[(v^\top X)^2]^2] +  \pseudoE [\bE_{p_S}[(v^\top X)^2]^2] & \leq \frac{4\kappa^2 \epsilon }{(1-2\epsilon)^2} \cdot \pseudoE[\bE_{p_S}[(v^\top X)^2]^2] + 2 \pseudoE[\bE_q[(v^\top X)^2]\cdot \bE_{p_S}[(v^\top X)^2]] \nonumber \\ 
    & \leq \frac{4\kappa^2 \epsilon }{(1-2\epsilon)^2} \cdot \pseudoE[ \bE_{p_S}[(v^\top X)^2]^2] + \frac{1}{\alpha} \pseudoE[\bE_q[(v^\top X)^2]^2] + \alpha  \pseudoE[\bE_{p_S}[(v^\top X)^2]^2],
\end{align}
for any $\alpha>0$. By optimizing over $\alpha$, we have
\begin{align}
       \pseudoE[\bE_q[(v^\top X)^2]^2] \leq \gamma^2   \pseudoE[\bE_{p_S}[(v^\top X)^2]^2],
\end{align}
where $\gamma^2 = \frac{(1+\sqrt{\epsilon  \kappa^2/(1-2\epsilon)^2})^2}{(1-\sqrt{\epsilon \kappa^2/(1-2\epsilon)^2})^2} $.
Thus we know that
\begin{align}
    \pseudoE[\bE_q[ (v^\top X)^4)]] & \leq \pseudoE[\bE_{p_S}[(v^\top X)^4)]] \nonumber \\ 
    & \leq \kappa^2\pseudoE[\bE_{p_S}[(v^\top X)^2)]^2]\nonumber \\   & \leq \gamma^2\kappa^2 \pseudoE[\bE_q[ (v^\top X)^2)]^2] \\ 
    & = \frac{\gamma^2\kappa^2}{\tilde \kappa^2} \pseudoE[\bE_q[ (v^\top X)^4)]].
\end{align}
By solving the above inequality, we have when $9\epsilon  \kappa^2/(1-2\epsilon)^2 \leq 1$,
\begin{align}
\tilde \kappa \leq 2\kappa.
\end{align}
\end{proof}

Now assume that we already know that  $q$ is hypercontractive with parameter $2\kappa$.   We show that $q$ also satisfies bounded noise condition via the following lemma. 
\begin{lemma}\label{lem.low_regret_bdd_noise}
Under the same assumption as Theorem~\ref{thm.low_regret_linreg}, assume  $q\in\Delta_{n, \epsilon}$ satisfies 
\begin{align}
    \bE_q[g_2(X; q)] \leq  \bE_{p_S}[g_2(X; q)], \forall v\in\bR^d, 
 \bE_q[(v^\top X)^4] \leq 4\kappa^2  \bE_q[(v^\top X)^2]^2.
\end{align}
Then when $\kappa^3\epsilon<1/64$, %
we have
\begin{align}
    \forall v\in\bR^d,  {\bE_q[(Y-X^\top \theta^*(q))^2(v^\top X)^2]} \leq 3\sigma^2{\bE_q[(v^\top X)^2]}.
\end{align}
\end{lemma}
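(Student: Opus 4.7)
The plan is to mimic the structure of Lemma~\ref{lem.low_regret_hypercontractivity_relation}: apply the quasi-gradient hypothesis to express $F_2(q)$ in terms of moments evaluated under $p_S$, and then transfer those moments back to $q$ via the modulus of continuity for hypercontractive distributions (Lemma~\ref{lem.modu_hyper}). Let $v^{*}$ achieve the supremum defining $F_2(q)$ and set $u = \theta(p_S) - \theta(q)$, so that $Y - \theta(q)^\top X = (Y - \theta(p_S)^\top X) + u^\top X$.

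The first step is a ``master inequality'' for $F_2(q)$ in terms of $u$ and $\sigma^2$. Starting from $\bE_q[g_2(X;q)] \leq \bE_{p_S}[g_2(X;q)]$, expanding $(Y-\theta(q)^\top X)^2 \leq 2(Y-\theta(p_S)^\top X)^2 + 2(u^\top X)^2$, bounding the first term by $F_2(p_S) \leq \sigma^2$ and the second by Cauchy--Schwarz plus hypercontractivity of $p_S$, I obtain
\begin{align*}
F_2(q)\, \bE_q[(v^{*\top}X)^2] \leq 2\sigma^2 \bE_{p_S}[(v^{*\top}X)^2] + 2\kappa^2 \bE_{p_S}[(u^\top X)^2]\, \bE_{p_S}[(v^{*\top}X)^2].
\end{align*}
Applying Lemma~\ref{lem.modu_hyper} (with $p_S$ hypercontractive with parameter $\kappa$ and $q$ with $2\kappa$ via Lemma~\ref{lem.low_regret_hypercontractivity_relation}) converts $p_S$-second moments to $q$-second moments at the price of a multiplicative constant $\gamma = \gamma(\kappa,\epsilon)$ close to $1$, yielding the master inequality $F_2(q) \leq 2\gamma\sigma^2 + 2\gamma^2 \kappa^2\, \bE_q[(u^\top X)^2]$.

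The main obstacle is then to bound $\bE_q[(u^\top X)^2]$ by something of order $\epsilon(F_2(q) + \sigma^2)$. For this I plan to use both normal equations: $\bE_q[X(Y-\theta(q)^\top X)]=0$ rearranges to $\bE_q[XX^\top]\, u = -w$ where $w := \bE_q[X(Y-\theta(p_S)^\top X)]$, so $\bE_q[(u^\top X)^2] = w^\top (\bE_q[XX^\top])^{-1} w = \sup_v (v^\top w)^2/\bE_q[(v^\top X)^2]$. The key observation is that $\bE_{p_S}[(v^\top X)(Y-\theta(p_S)^\top X)]=0$ by the $p_S$-normal equation, so $v^\top w$ is precisely the transport discrepancy $\bE_q[f] - \bE_{p_S}[f]$ for $f = (v^\top X)(Y-\theta(p_S)^\top X)$. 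Applying Lemma~\ref{lem.modu_sec} to $f$ (using $\TV(q,p_S) \leq \epsilon/(1-\epsilon)$) gives a factor $\tfrac{2\epsilon(1-\epsilon)}{(1-2\epsilon)^2}$ times $\bE_q[f^2] + \bE_{p_S}[f^2]$. The $p_S$-side satisfies $\bE_{p_S}[f^2] \leq \sigma^2 \bE_{p_S}[(v^\top X)^2]$ by $F_2(p_S) \leq \sigma^2$; the $q$-side, after another $(a+b)^2$ split and use of hypercontractivity of $q$, is bounded by $(2F_2(q) + 8\kappa^2 \bE_q[(u^\top X)^2])\bE_q[(v^\top X)^2]$. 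Taking $\sup_v$ after dividing by $\bE_q[(v^\top X)^2]$ and transferring $\bE_{p_S}[(v^\top X)^2]$ back to $q$ via Lemma~\ref{lem.modu_hyper} produces the self-referential inequality
\begin{align*}
\bE_q[(u^\top X)^2] \leq \tfrac{2\epsilon(1-\epsilon)}{(1-2\epsilon)^2}\bigl(2F_2(q) + 8\kappa^2 \bE_q[(u^\top X)^2] + \gamma\sigma^2\bigr).
\end{align*}

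Closing the argument, the hypothesis $\kappa^3\epsilon < 1/64$ forces the coefficient of $\bE_q[(u^\top X)^2]$ on the right of this self-referential bound to be well below $1$, so it can be solved to give $\bE_q[(u^\top X)^2] = O(\epsilon)(F_2(q) + \sigma^2)$. Substituting back into the master inequality yields $(1 - O(\kappa^2\epsilon\gamma^2))F_2(q) \leq (2\gamma + O(\kappa^2\epsilon\gamma^2))\sigma^2$, and under the same smallness hypothesis $\gamma$ is close enough to $1$ and the error terms are small enough that this rearranges to $F_2(q) \leq 3\sigma^2$. The delicate point is the constant bookkeeping throughout: the threshold $1/64$ is calibrated to absorb the compound multiplicative losses from the two $(a+b)^2$ splits and the two uses of hypercontractivity transfer, so that the final bound is the clean factor $3$ rather than a larger generic $O(1)$ constant.
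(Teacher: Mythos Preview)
Your argument is correct and follows the same high-level scaffolding as the paper: apply the quasi-gradient hypothesis, split $(Y-\theta(q)^\top X)^2$ into the $p_S$-residual and the $(u^\top X)^2$ term, control the second moments via Lemma~\ref{lem.modu_hyper}, and close with a self-referential inequality for $F_2(q)$.

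The one substantive difference is how you bound $\bE_q[(u^\top X)^2]$. The paper does not derive this; it invokes \citep[Theorem~3.4]{zhu2019generalized} as a black box to obtain $\bE_{q}[(u^\top X)^2]\leq \tfrac{2\kappa\tilde\sigma^2\epsilon(1-\epsilon)}{(1-2\epsilon)^2}$ directly, and then feeds that into the master inequality. You instead reconstruct this bound from first principles: the two normal equations give $\bE_q[(u^\top X)^2] = \sup_v (v^\top w)^2/\bE_q[(v^\top X)^2]$ with $v^\top w = \bE_q[f]-\bE_{p_S}[f]$ for $f=(v^\top X)(Y-\theta(p_S)^\top X)$, and then Lemma~\ref{lem.modu_sec} (applied to the scalar $f$) controls this discrepancy by $O(\epsilon)$ times second moments of $f$, which you bound using $F_2(p_S)$, $F_2(q)$, and the hypercontractivity of $q$. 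This is essentially what lives inside the cited external theorem, so your route is more self-contained at the cost of one extra $(a+b)^2$ split and a slightly messier constant chase. Both approaches land on the same self-normalizing inequality structure, and under $\kappa^3\epsilon<1/64$ (using $\kappa\geq 1$ to get $\kappa^2\epsilon\leq 1/64$ and $\kappa\sqrt{\epsilon}\leq 1/8$) the constants in your version do close to give $F_2(q)\leq 3\sigma^2$.
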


\begin{proof}
Denote $\tilde \sigma^2 = \sup_{v\in\bR^d}\bE_q[(Y-X^\top \theta^*(q)^2(v^\top X)^2]/\bE_q[(v^\top X)^2]$.   Then we have
\begin{align}
      \bE_q[(Y-X^\top \theta^*(q))^2(v^\top X)^2] & \leq  \bE_{p_S}[(Y-X^\top \theta^*(q))^2(v^\top X)^2] \nonumber \\ 
      & \leq \bE_{p_S}[(Y-X^\top \theta^*(p_S))^2(v^\top X)^2] + \bE_{p_S}[  X^\top (\theta^*(q) - \theta^*(p_S))^2(v^\top X)^2] \nonumber \\ 
      & \stackrel{(i)}{\leq}  \bE_{p_S}[(Y-X^\top \theta^*(p_S))^2(v^\top X)^2] +\bE_{p_S}[(X^\top(\theta^*(p_S)-\theta^*(q)))^4]^{1/2}\cdot \bE_{p_S}[(v^\top X)^4]^{1/2} \nonumber \\ 
      & \stackrel{(ii)}{\leq} \sigma^2 \bE_{p_S}[(v^\top X)^2] + \kappa^2\bE_{p_S}[(X^\top(\theta^*(p_S)-\theta^*(q)))^2]\cdot \bE_{p_S}[(v^\top X)^2].
\end{align}
Here (i) is a result of Cauchy-Schwarz inequality, (ii) comes from the hypercontractivity of $p_S$.
From Lemma~\ref{lem.modu_hyper}, we know that 
\begin{align}
    \bE_{p_S}[(v^\top X)^2] \leq \frac{1-\epsilon+2\kappa\sqrt{\epsilon}}{1-\epsilon-2\kappa\sqrt{\epsilon}} \bE_q[(v^\top X)^2].
\end{align}
From~\citep[Theorem 3.4]{zhu2019generalized}, we know that when $\kappa^2\epsilon<   1/32 $
\begin{align}
    \bE_{q}[((\theta^*(p_S)-\theta^*(q))^\top X)^2]\leq \frac{2\kappa\tilde\sigma^2\epsilon(1-\epsilon)}{(1-2\epsilon)^2}.
\end{align}
Thus
\begin{align}
      \bE_q[(Y-X^\top \theta^*(q))^2(v^\top X)^2] & \leq \frac{1-\epsilon+2\kappa\sqrt{\epsilon}}{1-\epsilon-2\kappa\sqrt{\epsilon}}\cdot (\sigma^2 + \frac{2\kappa^3\tilde\sigma^2\epsilon(1-\epsilon)}{(1-2\epsilon)^2})\bE_q[(v^\top X)^2] \nonumber \\ 
      & \leq \frac{1-\epsilon+2\kappa\sqrt{\epsilon}}{1-\epsilon-2\kappa\sqrt{\epsilon}}\cdot (\frac{\sigma^2}{\tilde \sigma^2} + \frac{2\kappa^3\epsilon(1-\epsilon)}{(1-2\epsilon)^2})\bE_q[(Y-X^\top \theta^*(q))^2(v^\top X)^2].
\end{align}
By solving the inequality, we know that when $\kappa^3\epsilon < 1/64$, $\tilde \sigma^2 \leq 3\sigma^2$ (here we use the fact that $\kappa \geq 1$ always holds).
\end{proof}

\subsection{Proof of Theorem~\ref{thm.low_regret_joint_relation}}\label{proof.low_regret_joint}

\begin{proof}
Denote $\tilde \kappa^2 = \sup_{\pseudoE\in\cE_4}\frac{  \pseudoE[\bE_q[(v^\top (X-\mu_q))^4]]}{  \pseudoE[\bE_q[(v^\top (X-\mu_q))^2]^2]}$. Assume that $\tilde \kappa \geq \kappa$, since otherwise we already have $F(q)\leq \kappa^2$.  From the SOS modulus of continuity for second moment in Lemma~\ref{lem.modu_sec},  since $\TV(p_S, q) \leq \frac{\epsilon}{1-\epsilon}$ from Lemma~\ref{lem.deletion_TV},  we have,
\begin{align}
    (\bE_q[(v^\top (X-\mu_q))^2] - \bE_{p_S}[(v^\top (X-\mu_q))^2] )^2   & \psos \frac{2\epsilon(1-\epsilon)}{(1-2\epsilon)^2} ( \bE_{p_S}[(v^\top (X-\mu_q))^4] +  \bE_q[(v^\top (X-\mu_q))^4]).
\end{align}
Thus we know that for $\pseudoE$, we have
\begin{align}
    & \pseudoE [(\bE_q[(v^\top (X-\mu_q))^2] - \bE_{p_S}[(v^\top (X-\mu_q))^2] )^2] \nonumber \\ 
    \leq & \frac{2\epsilon(1-\epsilon)}{(1-2\epsilon)^2} \cdot \pseudoE[ \bE_{p_S}[(v^\top (X-\mu_q))^4] +  \bE_q[(v^\top (X-\mu_q))^4]] \nonumber \\ 
    \stackrel{(i)}{\leq} &  \frac{4\epsilon }{(1-2\epsilon)^2} \cdot \pseudoE[\bE_{p_S}[(v^\top (X-\mu_q))^4]] \nonumber \\ 
 \leq &  \frac{32 \epsilon }{(1-2\epsilon)^2} \cdot \pseudoE[\bE_{p_S}[(v^\top (X-\mu_{p_S}))^4] + (v^\top(\mu_q - \mu_{p_S}))^4] \nonumber \\ 
     \stackrel{(ii)}{\leq}  &  \frac{32 \epsilon }{(1-2\epsilon)^2} \cdot \pseudoE[\kappa^2\bE_{p_S}[(v^\top (X-\mu_{p_S}))^2]^2 + (v^\top(\mu_q - \mu_{p_S}))^4] \nonumber \\ 
     \stackrel{(iii)}{\leq}  & \frac{32 \epsilon }{(1-2\epsilon)^2} \cdot \pseudoE\left[\kappa^2\bE_{p_S}[(v^\top (X-\mu_{p_S}))^2]^2 + \left(\frac{2\epsilon(1-\epsilon)}{(1-2\epsilon)^2}\left(\bE_q[(v^\top(X-\mu_q))^2] + \bE_{p_S}[(v^\top(X-\mu_{p_S}))^2]\right)\right)^2\right] \nonumber \\ 
     \leq & \frac{32 \epsilon }{(1-2\epsilon)^2} \cdot \pseudoE\left[\kappa^2\bE_{p_S}[(v^\top (X-\mu_{q}))^2]^2 + \left(\frac{2\epsilon(1-\epsilon)}{(1-2\epsilon)^2}\left(\bE_q[(v^\top(X-\mu_q))^2] + \bE_{p_S}[(v^\top(X-\mu_{q}))^2]\right)\right)^2\right]. \nonumber 
\end{align}
Here (i) comes from the assumption that $\bE_{q}[g] \leq \bE_{p_S}[g]$, (ii) is by the certifiable hypercontractivity of $p_S$, (iii) is from Lemma~\ref{lem.modu_sec}. 
By solving the above inequality, we can derive that when $\epsilon<1/(200\kappa^2)$,
\begin{align}
    \pseudoE [\bE_{p_S}[(v^\top (X-\mu_q))^2]] \leq \frac{3}{2} \pseudoE [\bE_q[(v^\top (X-\mu_q))^2]]. 
\end{align} 

Thus following a similar line of argument as above, we know that
\begin{align}
    & \pseudoE[\bE_q[ (v^\top (X-\mu_q))^4]] \nonumber \\ 
    & \leq \pseudoE[\bE_{p_S}[(v^\top (X-\mu_q))^4]] \nonumber \\ 
    & \leq 8  \pseudoE[\bE_{p_S}[(v^\top (X-\mu_{p_S}))^4] + (v^\top(\mu_q - \mu_{p_S}))^4 ] \nonumber \\ 
    & \leq 8  \pseudoE\left[\kappa^2\bE_{p_S}[(v^\top (X-\mu_{p_S}))^2]^2 + \left(\frac{2\epsilon(1-\epsilon)}{(1-2\epsilon)^2}\left(\bE_q[(v^\top(X-\mu_q))^2] + \bE_{p_S}[(v^\top(X-\mu_{q}))^2]\right)\right)^2 \right]\nonumber \\ 
    & \leq 8  \pseudoE\left[\kappa^2\bE_{p_S}[(v^\top (X-\mu_{q}))^2]^2 + \left(\frac{2\epsilon(1-\epsilon)}{(1-2\epsilon)^2}\left(\bE_q[(v^\top(X-\mu_q))^2] + \bE_{p_S}[(v^\top(X-\mu_{q}))^2]\right)\right)^2 \right] \nonumber \\ 
    & \leq (6\kappa^2+0.03)\pseudoE[\bE_{q}[(v^\top (X-\mu_q))^2)]^2] \nonumber \\  
    & = \frac{7\kappa^2}{\tilde \kappa^2} \cdot  \pseudoE[\bE_q[ (v^\top (X-\mu_q))^4)]].
\end{align}
By solving the above inequality, we have  
\begin{align}
\tilde \kappa \leq  \sqrt{7}\kappa.
\end{align}

\end{proof}

\subsection{Generalized quasi-gradient for  sparse mean estimation }

We discuss the generalized quasi-gradient for robust sparse mean estimation here. Let $\mathcal{A}_k$ denote the set
\begin{align}
 \mathcal{A}_k =  \{A\in\bR^{d\times d}:\mathsf{Tr}(A)=1, \|A\|_1 \leq k, A \succeq 0\}.   
\end{align}
The dual norm induced by $\mathcal{A}_k$, is defined by $\|B\|_{\mathcal{A}_k}^* = \sup_{A\in\mathcal{A}_k} \mathsf{Tr}(AB)$. In the task of robust sparse mean estimation, we  set $F(q) = \|\Sigma_q - I\|_{\mathcal{X}_k}^*$ in Problem~\ref{prob.feasibility}~\citep[Chapter 3]{li2018principled},~\cite{diakonikolas2019outlier, li2017robust}. Let $g(X; q)$ be
\begin{align}
    g(X; q) = \mathsf{Tr}(A((X-\mu_q)(X-\mu_q)^\top - I)), \text{ where } A \in \argmax_{A\in\mathcal{A}_k}  \mathsf{Tr}(A(\Sigma_q - I)).
\end{align}

We show in the following theorem that $g(X; q)$ is a valid generalized quasi-gradient for  $F$.

\begin{theorem}[Generalized quasi-gradients for sparse mean estimation]\label{thm.generalized_sparse}
Let $\Delta_{S, \epsilon}= \{r\mid \forall i \in [n], r_i \leq \frac{p_{S,i}}{1-\epsilon}\}$ denote the set of $\epsilon$-deletions on $p_S$. 
We assume that the true distribution $p_S$  has near identity covariance, and its mean is stable under deletions under sparse norm, i.e. the following holds for any $r\in\Delta_{S, \epsilon}$:
\begin{align}
 \sup_{\|v\|_2\leq 1, \|v\|_0\leq k} v^\top(\mu_r - \mu_{p_S})\leq \rho.\nonumber 
    \end{align} 
Assume that $F(p_S)\geq \rho\geq \epsilon$. The following implication holds for $q\in\Delta_{n, \epsilon}$: 
\begin{align} 
   \bE_{q}[g(X; q)] \leq   \bE_{p_S}[g(X; q)]  \Rightarrow F(q)\leq C_2(\epsilon)\cdot F(p_S).
\end{align} 
Here $C_2(\epsilon)$ is some constant that depends on $\epsilon$.
 Thus $g$ is a generalized quasi-gradient  for $F$ with parameter $C_2(\epsilon)$. %
\end{theorem}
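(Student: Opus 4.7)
The plan is to mirror the structure of Theorem~\ref{thm.low_regret_mean_relation} (mean estimation under bounded covariance), replacing the top-eigenvector quasi-gradient with the sparse-PSD maximizer $A \in \mathcal{A}_k$ and replacing the scalar modulus of continuity (Lemma~\ref{lem.mean_modulus}) with a matrix-valued analogue. First I would unwind the quasi-gradient inequality: by construction, $F(q) = \mathsf{Tr}(A(\Sigma_q - I)) = \bE_q[g(X;q)]$ for the maximizer $A$ used to define $g$. Writing $X - \mu_q = (X - \mu_{p_S}) + (\mu_{p_S} - \mu_q)$ inside the expectation under $p_S$ gives
\begin{align}
F(q) \;\le\; \bE_{p_S}[g(X;q)] \;=\; \mathsf{Tr}(A(\Sigma_{p_S} - I)) + (\mu_{p_S}-\mu_q)^\top A (\mu_{p_S}-\mu_q) \;\le\; F(p_S) + (\mu_{p_S}-\mu_q)^\top A (\mu_{p_S}-\mu_q), \nonumber
\end{align}
so it remains to bound the bilinear term in terms of $F(p_S)$, $F(q)$, $\rho$, and $\epsilon$.

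The second step is a sparse modulus of continuity for this bilinear form. By Lemma~\ref{lem.deletion_TV}, $\TV(p_S,q)\le \epsilon/(1-\epsilon)$, so there exists an intermediate distribution $r$ that is an $\epsilon/(1-\epsilon)$-deletion of both $p_S$ and $q$. Decompose $\mu_{p_S}-\mu_q = (\mu_{p_S}-\mu_r) + (\mu_r-\mu_q)$. The first piece is handled by sparse mean resilience of $p_S$: factoring $A = A^{1/2}A^{1/2}$ and using that $A \in \mathcal{A}_k$ is (essentially) a convex hull of $k$-sparse rank-one PSD matrices, I would reduce $(\mu_{p_S}-\mu_r)^\top A (\mu_{p_S}-\mu_r)$ to a supremum over $k$-sparse unit vectors, which is controlled by $\rho^2$. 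The second piece requires a matrix-valued version of Lemma~\ref{lem.mean_resilience}: with $w = A^{1/2}(\mu_q-\mu_r)$ viewed as a single direction, applying H\"older on the direction $w/\|w\|_2$ to the representation $r = q(\cdot\mid E)$ (for an appropriate event $E$ with $q(E)\ge 1 - \epsilon/(1-\epsilon)$) yields
\begin{align}
(\mu_r-\mu_q)^\top A (\mu_r-\mu_q) \;\le\; \frac{\epsilon/(1-\epsilon)}{1-\epsilon/(1-\epsilon)}\,\mathsf{Tr}(A\,\Sigma_q) \;\le\; \frac{\epsilon}{1-2\epsilon}\,\bigl(F(q)+1\bigr),\nonumber
\end{align}
where the last inequality uses $\mathsf{Tr}(A\Sigma_q) \le F(q) + \mathsf{Tr}(A) = F(q)+1$ by definition of the dual norm.

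The third step is to combine the two bounds: up to constants,
\begin{align}
F(q) \;\le\; F(p_S) + C_1\rho^2 + \frac{C_2\,\epsilon}{1-2\epsilon}\bigl(F(q)+1\bigr),\nonumber
\end{align}
and then solve for $F(q)$. Using the hypotheses $F(p_S)\ge \rho\ge \epsilon$ (so both $\rho^2$ and $\epsilon$ are absorbed into $O(F(p_S))$), this gives $F(q) \le C_2(\epsilon) F(p_S)$ for some $\epsilon$-dependent constant that stays bounded whenever $\epsilon$ is small enough that $1 - C_2\epsilon/(1-2\epsilon) > 0$.

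The main obstacle is step two. The existing resilience lemma (Lemma~\ref{lem.mean_resilience}) is one-dimensional: it bounds $v^\top(\mu_p - \mu_q)$ for a single direction $v$, whereas here the target is a \emph{bilinear} form with $A$ ranging over the convex SDP relaxation $\mathcal{A}_k$, whose eigenvectors need not be $k$-sparse. The clean way around this is to lift resilience to the matrix setting by applying the one-dimensional argument to the scalar random variable $u^\top A^{1/2}(X-\mu_q)$ for $u = A^{1/2}(\mu_q - \mu_r)/\|A^{1/2}(\mu_q - \mu_r)\|_2$, which reduces the required second-moment control to $\mathsf{Tr}(A\Sigma_q)$ and thereby to $F(q)+1$. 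A parallel argument using sparse resilience gives the $\rho^2$ bound for the $\mu_{p_S}-\mu_r$ piece; the nontrivial book-keeping is to ensure the constants in $C_2(\epsilon)$ remain finite under the stated assumption $F(p_S)\ge \rho\ge \epsilon$.
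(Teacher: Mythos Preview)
Your first step matches the paper exactly: both arrive at
\[
F(q)\le F(p_S)+(\mu_q-\mu_{p_S})^\top A(\mu_q-\mu_{p_S}).
\]
After that the two arguments diverge. The paper does \emph{not} split through an intermediate $r$; it bounds the bilinear term in one shot by invoking two external lemmas from \citet{li2017robust}: Lemma~5.5 gives $(\mu_q-\mu_{p_S})^\top A(\mu_q-\mu_{p_S})\le 4\sup_{\|v\|_2\le 1,\|v\|_0\le k}(v^\top(\mu_q-\mu_{p_S}))^2$ for any $A\in\mathcal{A}_k$, and Proposition~5.6 gives the symmetric sparse modulus $\sup_{v}(v^\top(\mu_q-\mu_{p_S}))^2\le C_1\sqrt{\epsilon(F(p_S)+F(q))}$. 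The resulting self-normalizing inequality $F(q)\le F(p_S)+C_1\sqrt{\epsilon(F(p_S)+F(q))}$ is then solved using only $\epsilon\le F(p_S)$; the resilience parameter $\rho$ never appears in the proof body.

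Your route is more self-contained (the $q$-side piece is just Lemma~\ref{lem.mean_resilience} applied to $A^{1/2}X$, giving $\frac{\epsilon}{1-2\epsilon}\mathsf{Tr}(A\Sigma_q)\le\frac{\epsilon}{1-2\epsilon}(F(q)+1)$, which is clean and correct), and it actually uses the resilience hypothesis. Two points to flag, though. First, your justification for the $p_S$-side piece is wrong: $\mathcal{A}_k$ is the SDP relaxation of the $k$-sparse rank-one set and is \emph{strictly larger} than its convex hull (otherwise sparse PCA would be tractable), so you cannot reduce $x^\top Ax$ to a supremum over $k$-sparse directions by a convex-hull argument. The inequality $x^\top Ax\le 4\sup_{\|v\|_2\le 1,\|v\|_0\le k}(v^\top x)^2$ \emph{is} true, but it needs a separate structural argument---exactly the Lemma~5.5 the paper imports. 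Second, your final inequality $F(q)\le F(p_S)+C\rho^2+C'\epsilon(F(q)+1)$ only yields $F(q)\le C_2(\epsilon)F(p_S)$ if $\rho^2\lesssim F(p_S)$; from $\rho\le F(p_S)$ alone this requires $\rho\lesssim 1$, which is implicit in ``near identity covariance'' but not formally assumed. The paper's square-root inequality avoids this issue entirely.
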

\begin{proof}
We have
\begin{align}
     F(q) = \bE_{q}[g(X; q)] & \leq   \bE_{p_S}[g(X; q)] \nonumber \\ 
     & =  \bE_{p_S}[\mathsf{Tr}(A((X-\mu_q)(X-\mu_q)^\top - I))] \nonumber \\ 
     & = \bE_{p_S}[\mathsf{Tr}(A((X-\mu_{p_S})(X-\mu_{p_S})^\top - I))]  +  \mathsf{Tr}(A(\mu_q - \mu_{p_S})(\mu_q - \mu_{p_S})^\top)  \nonumber \\ 
     & \leq F(p_S) +  \mathsf{Tr}(A(\mu_q - \mu_{p_S})(\mu_q - \mu_{p_S})^\top) \nonumber \\ 
     & \stackrel{(i)}{\leq} F(p_S)    +  4\sup_{\|v\|_2\leq 1, \|v\|_0\leq k} (v^\top (\mu_q - \mu_{p_S}))^2 \nonumber \\ 
     &  \stackrel{(ii)}{\leq} F(p_S) + C_1\cdot \sqrt{\epsilon(F(p_S)+ F(q))}. 
\end{align}
Here (i) comes from~\citet[Lemma 5.5]{li2017robust} , (ii) comes from~\citet[Proposition 5.6]{li2017robust}, and $C_1$ is some universal constant. By solving the above self-normalzing inequality for $F(q)$, we have
\begin{align}
    F(q) \leq C_2(\epsilon)\cdot F(p_S)
\end{align}
for some $C_2$ that is a function of $\epsilon$.
\end{proof}
\section{Proof for Section~\ref{sec.filter}}

\subsection{Proof of auxillary lemmas}

We also introduce the following lemma, which shows that hypercontractivity is approximately closed under deletion. 
\begin{lemma}\label{lem:deletion-hypercontractive}
Suppose that the set $\gs$ of good points is hypercontractive in the sense that 
$\frac{1}{|S|} \sum_{i \in S} (v^\top X_i)^{4} \psos (\frac{\kappa}{|S|} \sum_{i \in S} (v^\top X_i)^2)^2$. 
Then, for any $c_i\leq \frac{1}{|S|}$ such that $ 1 - \sum_{i \in S} c_i \leq \epsilon $, we have
\begin{equation}
\frac{1}{n} \sum_{i \in S} c_i (v^\top X_i)^{4} \psos \frac{\kappa^2}{1-\kappa^2 \epsilon} (\frac{1}{|S|} \sum_{i \in S} c_i (v^\top X_i)^2)^2.
\end{equation}
\end{lemma}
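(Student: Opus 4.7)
The plan is to write the weights $c_i$ as a downward perturbation of the uniform measure $p_i = 1/|S|$ on the good set and then bound the contribution of the deleted mass via a sum-of-squares Cauchy--Schwarz estimate. I set $b_i = 1/|S| - c_i \geq 0$, so $\sum_{i \in S} b_i = 1 - \sum_{i \in S} c_i \leq \epsilon$, and for any polynomial $h(X)$ the identity $\frac{1}{|S|}\sum_{i\in S} h(X_i) = \sum_{i\in S} c_i h(X_i) + \sum_{i\in S} b_i h(X_i)$ holds, which I abbreviate as $E_p[h] = E_c[h] + E_b[h]$. With $g = (v^\top X)^2$ and $f = g^2 = (v^\top X)^4$, the hypothesis reads $E_p[f] \psos \kappa^2 E_p[g]^2$, and the target (up to the $1/n$ versus $1/|S|$ discrepancy between the two sides of the stated inequality, which I read as a notational consistency issue) is $E_c[f] \psos \frac{\kappa^2}{1-\kappa^2\epsilon}\,E_c[g]^2$.

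The central SOS fact is a Cauchy--Schwarz identity for the deletion: since $b_i b_j \geq 0$ and $(g(X_i)-g(X_j))^2$ is a square in $v$,
\begin{align*}
\Big(\sum_{i \in S} b_i\Big)\Big(\sum_{i\in S} b_i g(X_i)^2\Big) - \Big(\sum_{i\in S} b_i g(X_i)\Big)^2 \;=\; \tfrac{1}{2}\sum_{i,j \in S} b_i b_j\,\big(g(X_i)-g(X_j)\big)^2
\end{align*}
is manifestly a sum of squares in $v$, and $g^2 = f$ gives $E_b[g]^2 \psos (\sum_i b_i)\,E_b[f] \psos \epsilon\,E_b[f]$ (the last step being multiplication of the SOS polynomial $E_b[f]$ by the nonnegative scalar $\epsilon - \sum_i b_i \geq 0$). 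Combining this with the elementary SOS identity $(1+\alpha)a^2 + (1+1/\alpha)b^2 - (a+b)^2 = (\sqrt{\alpha}\,a - b/\sqrt{\alpha})^2$ applied to $a = E_c[g]$ and $b = E_b[g]$ gives
\begin{align*}
E_p[g]^2 \;=\; (E_c[g]+E_b[g])^2 \;\psos\; (1+\alpha)\,E_c[g]^2 + (1+\tfrac{1}{\alpha})\,\epsilon\,E_b[f]
\end{align*}
for every $\alpha > 0$.

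Plugging this into the hypothesis and rewriting $E_p[f] = E_c[f] + E_b[f]$ on the left, the chain collapses to
\begin{align*}
E_c[f] + \big(1 - \kappa^2(1+\tfrac{1}{\alpha})\epsilon\big)\,E_b[f] \;\psos\; \kappa^2(1+\alpha)\,E_c[g]^2.
\end{align*}
The artful choice is $\alpha = \frac{\kappa^2\epsilon}{1-\kappa^2\epsilon}$, for which $\kappa^2(1+1/\alpha)\epsilon = 1$ exactly and $1+\alpha = \frac{1}{1-\kappa^2\epsilon}$; the $E_b[f]$ coefficient on the left then vanishes and the inequality reduces to the desired $E_c[f] \psos \frac{\kappa^2}{1-\kappa^2\epsilon}\,E_c[g]^2$. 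The main obstacle is not any single calculation but the bookkeeping: keeping every step a genuine sum-of-squares identity in $v$ rather than a pointwise real inequality, and in particular verifying $1 - \kappa^2(1+1/\alpha)\epsilon \geq 0$ at the chosen $\alpha$ so that the SOS polynomial $E_b[f]$ can be dropped from the left without breaking the SOS chain.
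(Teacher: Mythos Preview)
Your proof is correct and uses essentially the same approach as the paper: both rest on the SOS Cauchy--Schwarz bound $E_b[g]^2 \psos \epsilon\,E_b[f]$ (the paper's $C \sos D^2/\epsilon$) together with the hypercontractivity hypothesis, combined algebraically into a perfect square. The only difference is cosmetic---you introduce a free parameter $\alpha$ and optimize it to $\alpha = \kappa^2\epsilon/(1-\kappa^2\epsilon)$, whereas the paper starts from the target $\frac{\kappa^2}{1-\kappa^2\epsilon}(B-D)^2 - (A-C)$ and directly factors it as $\frac{\epsilon}{1-\kappa^2\epsilon}(\kappa^2 B - D/\epsilon)^2$.
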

\begin{proof}
We expand directly; let 
\begin{align}
A = \frac{1}{|S|} \sum_{i \in S} (v^\top X_i)^{4}, \quad & B = \frac{1}{|S|} \sum_{i \in S} (v^\top X_i)^2, \\
C = \sum_{i \in S} (\frac{1}{|S|} -c_i)(v^\top X_i)^{4}, \quad & D = \sum_{i \in S} (\frac{1}{|S|} -c_i) (v^\top X_i)^2.
\end{align}
Then our goal is to show that $\frac{\kappa^2}{1-\kappa^2 \epsilon}(B-D)^2 - (A-C) \sos 0$. We are also given that 
(i) $\kappa^2 B^2 \sos A$ and we observe that (ii) $C \sos D^2 / ( 1-\sum_{i=1}^n c_i)\sos D^2/\epsilon$ by sum-of-squares H\"older's inequality. We thus have
\begin{align}
\frac{\kappa^2}{1-\kappa^2 \epsilon}(B-D)^2 - (A-C)
 &= \frac{\kappa^2}{1-\kappa^2 \epsilon}B^2 - \frac{2\kappa^2}{1-\kappa^2 \epsilon}BD + \frac{\kappa^2}{1-\kappa^2 \epsilon}D^2 - A + C \\
 &\stackrel{(i)}{\sos} (\frac{\kappa^2}{1-\kappa^2 \epsilon}-\kappa^2)B^2 - \frac{2\kappa^2}{1-\kappa^2\epsilon}BD + (\frac{\kappa^2}{1-\kappa^2\epsilon}D^2+C) \\
 &\stackrel{(ii)}{\sos} (\frac{\kappa^2}{1-\kappa^2 \epsilon}-\kappa^2)B^2 - \frac{2\kappa^2}{1-\kappa^2\epsilon}BD + (\frac{\kappa^2}{1-\kappa^2\epsilon} + \frac{1}{\epsilon})D^2 \\
 &= \frac{\kappa^4\epsilon}{1-\kappa^2 \epsilon}B^2 - \frac{2\kappa^2}{1-\kappa^2\epsilon}BD + \frac{1/\epsilon}{1-\kappa^2\epsilon}D^2 \\
 &= \frac{\epsilon}{1-\kappa^2 \epsilon} (\kappa^2 B - D/\epsilon)^2 \sos 0, 
\end{align}
as was to be shown.
\end{proof}

\subsection{Proof of Theorem~\ref{thm.low_regret_mean_cov}}\label{sub.proof_mean_cov}

With the help of Lemma~\ref{lem.explicit_condition}, we are ready to prove Theorem~\ref{thm.low_regret_mean_cov}.

Take $\beta = \sigma^2$ in Lemma~\ref{lem.explicit_condition}. From Lemma~\ref{lem.explicit_condition}, it suffices to verify Equation~\ref{eqn.approximate_quasi_stationary_statement}.  Assume for contradiction that $F(q) = \|\Sigma_q\|\geq \sigma'^2 $. 
Denote $\tilde \sigma^2 = \bE_{q}[(v^{\top}(X_i - \mu_{q}))^2]$,  where $v$ is chosen such that $\tilde \sigma^2 = \|\Sigma_q\|_2$.
Then we have %
\begin{align}
   \tilde \sigma^2  & \leq (1+\eta)\bE_{p_S} [(v^{ \top}(X - \mu_{q}))^2] + \sigma^2 \nonumber \\
    & \leq (1+\eta)(\bE_{p_S} [(v^{ \top}(X - \mu_{p_S}))^2] + (\mu_{p_S} - \mu_{q})^2) +\sigma^2 \nonumber \\ 
    & \leq (2+\eta)\sigma^2 + (1+\eta)\left(\sqrt{\frac{\sigma^2\epsilon}{1-2\epsilon}} + \sqrt{\frac{\tilde \sigma^2\epsilon}{1-2\epsilon}} \right)^2.\label{eqn.proof_mean_cov_eq1}
\end{align}
By solving the self-normalizing inequality, we have for $\epsilon<1/(3+\eta)$,
\begin{align}
    \tilde \sigma^2 \leq \frac{((1+\eta)\epsilon\sqrt{\sigma^2} + \sqrt{(1+\eta)^2\epsilon^2\sigma^2 + (2-3\epsilon-3\epsilon\eta+\eta)(1-(3+\eta)\epsilon)\sigma^2})^2 }{(1-(3+\eta)\epsilon)^2} < \sigma'^2.
\end{align}
which contradicts the assumptions. Thus the algorithm must terminate. The conclusion can be seen from Lemma~\ref{lem.mean_modulus}.

If we take $v$ such that $\tilde \sigma \geq 0.9 \|\Sigma_q\|_2$, then it suffices to replace $\tilde \sigma^2$ in~\eqref{eqn.proof_mean_cov_eq1} with $1.2\tilde \sigma^2$, which still gives a near-optimal bound up to constant.

In Theorem~\ref{thm.low_regret_mean_cov}, we have shown that it suffices to run Algorithm~\ref{algo:explicit_lowregret} with $d/(\gamma\sigma^2)$ iterations to guarantee small operator norm of covariance of $q$. Here we bound the computational complexity within each iteration. 

\begin{itemize}
    \item \textbf{Finding $v$ such that $\bE_q[(v^\top(X-\mu_q))^2]\geq 0.9\|\Sigma_q\|$. } This can be done within $O(nd\log(d))$ time from power method~\cite{hardt2014noisy}. %
    \item \textbf{Solving the projection step.} The projection step is ${q}_{i}^{(t+1)} = \mathsf{Proj}^{KL}_{\Delta_{n, \epsilon}} (c_i^{(t)} / (\sum_{i=1}^n c_i^{(t)})) = \argmin_{q\in\Delta_{n, \epsilon}} \sum_{i=1}^n q_i \log((q_i\sum_{j=1}^n c_j^{(t)})/c_i^{(t)} )$. 
    This can be done within $O(n)$ time. We discuss it in detail as below.%
\end{itemize}

Denote $p_i = c_i^{(t)} / \sum_{i=1}^n c_i^{(t)}$, and  
\begin{align}
    F(q) =  \sum_{i=1}^n q_i \log(q_i /p_i  ).
\end{align}
The Lagrangian for the optimization problem is
\begin{equation*}
    L(q,  u, y, \lambda) = F(q) + \sum_{i=1}^{n}u_i \Big(- q_{i}\Big) + \sum_{i=1}^{n}y_i \Big(q_{i} - \frac{1}{(1-\epsilon)n}\Big) + \lambda\Big(\sum_{i}^{n} q_i - 1\Big).
\end{equation*}
From the KKT conditions, we have
\begin{align*} 
    &(\text{stationarity})\quad  0 = \partial_{q}\Big( F(q) + \sum_{i=1}^{n}u_i q_{i} + \sum_{i=1}^{n}y_i \Big(q_{i} - \frac{1}{(1-\epsilon)n}\Big) + \lambda\Big(\sum_{i}^{n} q_i - 1\Big) \Big),\\
    &(\text{complementary slackness})\quad  u_i (-q_i) = 0,\,\, y_i \Big(q_{i} - \frac{1}{(1-\epsilon)n}\Big)=0,\,\,  i \in [n], \\
    &(\text{primal feasibility})\quad  - q_{i} \leq 0,\,\,     q_{i} - \frac{1}{(1-\epsilon)n}\leq 0,\,\,
     \sum_{i}^{n} q_i = 1,\\
    &(\text{dual feasibility})\quad  u_i \geq 0, \,\, y_i \geq 0,\,\,  i \in [n].
\end{align*} 
Let $g_i = \partial_{q_i}  F(q) = 1+ \log(q_i/p_i)$, from the stationary conditions we have 
\begin{align}
    0 & = g_i  - u_i + y_i + \lambda,\,\,  i \in [n],\\
     \mu_q & =  w. 
\end{align}   
For any $i\in[n]$, if $q_i = \frac{1}{(1-\epsilon)n}$, we have
\begin{align}
    0 = g_i +y_i + \lambda, y_i \geq 0.
\end{align}
If $q_i \in(0, \frac{1}{(1-\epsilon)n})$, we have
\begin{align}
     0 = g_i + \lambda. 
\end{align}
If $q_i = 0$, we have
\begin{align}
    0 = g_i - u_i + \lambda, u_i \geq 0.
\end{align}
Combine the above three equalities, we know that  
\begin{equation} 
  \frac{q_i}{p_i} \leq \frac{q_j}{p_j} \leq \frac{q_k}{p_k}, \quad \forall i, j, k \text{ with } q_i = \frac{1}{(1-\epsilon)n}, q_j \in(0,  \frac{1}{(1-\epsilon)n}), q_k = 0. 
\end{equation}
From this we know that there does not exist $k$ such that $q_k =0$, otherwise all $q_i, q_j$ shall be $0$. We also know that there must be $p_i\geq p_j$. And for all $j$, we have $q_j/p_j = -\lambda$ for some constant $\lambda$.

Thus the algorithm to compute the projection is straightforward: order $p_i$ in a descent way and compare the corresponding KL divergence. In the $m$-th itereation, we let $q_i = 1/((1-\epsilon)n)$  for all $i$ such that $p_i$ is among the $m$-th largest masses. For the left $q_i$, we just renormalize $p_i$ such that $\sum_{i=1}^n q_i = 1$. We compare the $n$ cases and  pick one that has the smallest KL divergence. This can be done within $O(n)$ time since it suffices to compare the difference.

Overall, within each iteration, the computational complexity is $O(nd\log(d))$. Overall the computational complexity is $O(nd^2\log(d)/\eta\sigma^2)$. The same complexity within single iteration also applies to filter algorithm for bounded covariance case.

\subsection{Proof of Theorem~\ref{thm.filtering_mean_cov}}\label{proof.filtering_mean_cov}

\begin{proof}

In iteration $k$, if the algorithm terminates because of  $\|\Sigma_{q^{(k)}}\|\leq  \sigma'^2$, then we know from $\TV(q^{(k)}, p_S)\leq \frac{\epsilon}{1-\epsilon}$ and  Lemma~\ref{lem.mean_modulus} that
\begin{align}\label{eqn.filtererrormean}
     \|\mu_q - \mu_{p_S}\| \leq (\sigma+\sqrt{\xi'})\sqrt{\frac{\epsilon}{1-2\epsilon}}<\frac{4\sigma\sqrt{\epsilon}}{(1-2\epsilon)^{3/2}}
\end{align}

If $F(q^{(k)},\xi')>0$, we use induction to show that   the invariance~(\ref{eqn.invariance_filter}) holds at $c^{(k)}$. Obviously, it holds at $k=0$. 
Assume it holds in step $k$, we will show that   the invariance~(\ref{eqn.invariance_filter}) still holds at step $k+1$. Since the deleted probability mass is proportional to $q_i^{(k)}\tau_i^{(k)}$ for each point $X_i$, it suffices to check
\begin{align}\label{eqn.invariance_equivalent_mean_bdd}
\sum_{i\in S} q_i^{(k)}\tau_i^{(k)} \leq \frac{1}{2}\sum_{i=1}^n q_i^{(k)}\tau_i^{(k)}.
\end{align} 

From Lemma~\ref{lem.qS_and_pS},  we know that under the invariance~(\ref{eqn.invariance_filter}) the probability of set $S$ under $q$, $q(S)$, satisfies $q(S)\geq 1-\epsilon$  and $q | S$ is a $\frac{\epsilon}{1-\epsilon}$-deletion of $p_S$. 
We have
\begin{align}
\sum_{i\in S} q_i \tau_i & = 
\sum_{i\in S} q_i  (v^{\top}(X_i - \mu_{q}))^2 \\
& =  q(S) \bE_q[  (v^{\top}(X - \mu_{q}))^2 | S] \nonumber \\ 
& = q(S) (\bE_q[  (v^{\top}(X - \mu_{q|S}))^2 | S] + (v^\top (\mu_q - \mu_{q|S}))^2) \\
& = q(S) \bE_q[  (v^{\top}(X - \mu_{q|S}))^2 | S] + q(S)(v^\top (\mu_q - \mu_{q|S}))^2). 
\end{align}
The term $q(S)(v^\top (\mu_q - \mu_{q|S}))^2$ could be upper bounded by
\begin{align}
    q(S)(v^\top (\mu_q - \mu_{q|S}))^2 & \leq  q(S)\frac{1-q(S)}{q(S)} \sum_{i=1}^n q_i (v^\top(X_i-\mu_q))^2) \\
    & \leq \epsilon \sum_{i=1}^n q_i (v^\top(X_i-\mu_q))^2)
\end{align}
following Lemma \ref{lem.mean_resilience}, $q(S)\geq 1-\epsilon$, and the fact that $q|S$ is a $1-q(S)$ deletion of $q$. For the first term, we have
\begin{align}
    q(S) \bE_q[  (v^{\top}(X - \mu_{q|S}))^2 | S] & \leq q(S) \bE_q[  (v^{\top}(X - \mu_{p_S}))^2 | S] \\
    & \leq q(S) \frac{1}{1-(\epsilon/(1-\epsilon))} \bE_{p_S}[(v^{\top}(X - \mu_{p_S}))^2] \label{eqn.qasdeletionofps} \\
    & \leq \frac{1-\epsilon}{1-2\epsilon} \bE_{p_S}[(v^{\top}(X - \mu_{p_S}))^2],
\end{align}
where in~(\ref{eqn.qasdeletionofps}) we used the inequality $q|S\leq \frac{1}{1-(\epsilon/(1-\epsilon))} p_S$ and $q(S)\leq 1$. Combining these two together, we have
\begin{align}
    \sum_{i\in S} q_i  (v^{\top}(X_i - \mu_{q}))^2 & \leq \frac{1-\epsilon}{1-2\epsilon} \bE_{p_S}[(v^{\top}(X - \mu_{p_S}))^2] + \epsilon \sum_{i=1}^n q_i (v^\top(X_i-\mu_q))^2) \\
    & \leq \frac{1-\epsilon}{1-2\epsilon}\sigma^2 + \epsilon \sum_{i \in [n]} q_i \tau_i \\
    & \leq \frac{1}{2} \sum_{i\in [n]} q_i \tau_i,
\end{align}
since we have assumed $\|\Sigma_q\|\geq \sigma'^2$ which implies $\sum_{i\in [n]} q_i \tau_i \geq \frac{2(1-\epsilon)}{(1-2\epsilon)^2} \cdot \sigma^2$. Thus~\eqref{eqn.invariance_equivalent} holds. From Lemma~\ref{lem.filter_condition}  the error is bounded by~(\ref{eqn.filtererrormean}).

\end{proof}

\subsection{Proof of Theorem~\ref{thm.linreg_implicit}}\label{proof.linreg_implicit}

When both $F_1, F_2$ are negative, the algorithm will just output $q^{(k)}$ with the desired results.  Thus
it suffices to show that no matter which cases we are in (either $F_1\geq 0$ or $F_2\geq 0$),  we will always have the invariance 
\begin{align} 
 \sum_{i\in S} (\frac{1}{n}-c_i^{(k)} ) \leq \sum_{i\in [n]/ S} (\frac{1}{n}-c_i^{(k)} ).
\end{align} 

To show the invariance, it suffices to show that under either $F_1\geq 0$ or $F_1\leq 0, F_2\geq 0$,  we have
\begin{align}
\sum_{i\in S} q_i^{(k)}\tau_i^{(k)} \leq \frac{1}{2}\sum_{i=1}^n q_i^{(k)}\tau_i^{(k)}.
\end{align} 
Now we show the first case: when $F_1\geq 0$, the invariance holds.

\paragraph{Certifiable hypercontractivity.}

It follows from the general analysis of filter algorithms~(Lemma~\ref{lem.filter_condition}) that it suffices to show the implication  
\begin{align}
  \sum_{i\in S} \frac{1}{n} - c_i \leq \sum_{i\not\in S} \frac{1}{n} - c_i, & \quad  \kappa'^2\bE_q[(v^\top X)^2]^2 \psos  \bE_q[(v^\top X)^4] \nonumber \\ 
     \Rightarrow &  \sum_{i\in S} q_i \pseudoE(v^\top X_i)^4 \leq \frac{1}{2} \sum_{i=1}^n q_i \pseudoE(v^\top X_i)^4.
\end{align}

To see the implication holds, 
observe that
\begin{align}
\sum_{i \in S} c_i \pseudoE[(v^\top X_i)^4] 
 & \stackrel{(i)}{\leq} \frac{\kappa^2}{1-2\kappa^2 \epsilon} \pseudoE[(\sum_{i \in S} c_i (v^\top X_i)^2)^2] \\
 & \stackrel{(ii)}{\leq} \frac{\kappa^2}{1-2\kappa^2 \epsilon} \pseudoE[(\sum_{i=1}^n c_i (v^\top X_i)^2)^2] \\
 & \stackrel{(iii)}{\leq} \frac{1}{2} \sum_{i=1}^n c_i \pseudoE[(v^\top X_i)^4].
\end{align}
Here (i) is by Lemma~\ref{lem:deletion-hypercontractive} (and the fact that $\pseudoE[p] \leq \pseudoE[q]$ if $p \psos q$), 
(ii) is by the fact that adding the $c_i (v^\top X_i)^2$ terms for $i \not\in S$ is adding a sum of squares, 
and (iii) is by the assumption that $E$ refutes hypercontractivity. Thus as long as $\kappa^2 \epsilon \leq 1$ 
we have the desired property.

\paragraph{Filtering for bounded noise}

Next, we show that when $F_1\leq 0, F_2\geq 0$, the invariance still holds. 

From Lemma~\ref{lem.filter_condition},  
it suffices to show the implication  
\begin{align}
  \sum_{i\in S} \frac{1}{n} - c_i \leq \sum_{i\not\in S} \frac{1}{n} - c_i,   & \forall v \in\bR^d,  {\bE_q[(Y - X^\top \theta(q))^2(v^\top X)^2]} \geq { \sigma'^2\bE_q[(v^\top X)^2]}  \nonumber \\ 
     \Rightarrow &  \sum_{i\in S} q_i (Y_i - X_i^\top \theta(q))^2(v^\top X_i)^2 \leq \frac{1}{2} \sum_{i=1}^n q_i (Y_i - X_i^\top \theta(q))^2(v^\top X_i)^2.
\end{align}
Denote $\tilde \sigma^2 = F(q)$. 
 Then 
the LHS satisfies
\begin{align}
    \sum_{i\in S} q_i (Y_i - X_i^\top \theta(q))^2(v^\top X_i)^2 
    & \leq 2 \sum_{i\in S} q_i( (Y_i - X_i^\top \theta(p_S))^2(v^\top X_i)^2 + ((\theta(p_S)-\theta(q))^\top X_i)^2(v^\top X_i)^2) \nonumber \\ 
    & \stackrel{(i)}{\leq} 2  (\frac{1}{1-2\epsilon} \bE_{p_S}[(Y - X^\top \theta(p_S))^2(v^\top X)^2] \nonumber \\ 
    & \quad + \bE_{q}[((\theta(p_S)-\theta(q))^\top X)^4]^{1/2}\cdot \bE_{q}[(v^\top X)^4]^{1/2}) \nonumber \\
    & \stackrel{(ii)}{\leq}  \frac{2}{1-2\epsilon}  (\sigma^2 \bE_{p_S}[(v^\top X)^2] + {5\kappa'^2} \bE_{q}[((\theta(p_S)-\theta(q))^\top X)^2]\cdot \bE_{q}[(v^\top X)^2])\nonumber \\ 
    & \stackrel{(iii)}{\leq} \frac{2}{1-2\epsilon}( \frac{(1-2\epsilon+2\kappa \sqrt{\epsilon(1-\epsilon)})^2\sigma^2}{(1-2\epsilon-2\kappa \sqrt{\epsilon(1-\epsilon)})^2} \nonumber \\ 
    & \quad + 5\kappa'^2\bE_{q}[((\theta(p_S)-\theta(q))^\top X)^2])\bE_{q}[(v^\top X)^2] \nonumber \\ 
    & \stackrel{(iv)}{\leq}  \frac{2}{1-2\epsilon}( \frac{(1-2\epsilon+2 \kappa \sqrt{\epsilon(1-\epsilon)})^2\sigma^2}{(1-2\epsilon-2\kappa \sqrt{\epsilon(1-\epsilon)})}\nonumber \\ 
    & \quad  + 5\kappa'^2\bE_{q}[((\theta(p_S)-\theta(q))^\top X)^2])\bE_{q}[(Y-X^\top\theta(q))^2(v^\top X)^2]/\tilde\sigma^2
\end{align}
Here (i) comes from that $q_i\leq \frac{p_{S,i}}{1-2\epsilon}$ for all $i\in[n]$, (ii) comes from the assumption on $p_S$ and the hypercontractivity of $q$, (iii) is by Lemma~\ref{lem.modu_hyper}, (iv) comes from the definition of $\tilde \sigma^2$.
From~\citep[Theorem 3.4]{zhu2019generalized}, we know that when $\epsilon<  1/(1+4\kappa'^2) $
\begin{align}
    \bE_{q}[((\theta(p_S)-\theta(q))^\top X)^2]\leq \frac{2\kappa'\tilde\sigma^2\epsilon(1-\epsilon)}{(1-2\epsilon)^2}.
\end{align}

Denote $\tilde \sigma^2 = \sup_{v\in\bR^d}\bE_q[(Y-X^\top \theta(q))^2(v^\top X)^2]/\bE_q[(v^\top X)^2]$.
Then overall, we have
\begin{align}
    \sum_{i\in S} q_i (Y_i - X_i^\top \theta(q))^2(v^\top X_i)^2 
    & \leq \frac{2}{(1-2\epsilon)\tilde \sigma^2}( \frac{(1-2\epsilon+2 \kappa \sqrt{\epsilon(1-\epsilon)})^2\sigma^2}{(1-2\epsilon-2 \kappa \sqrt{\epsilon(1-\epsilon)})^2}\nonumber \\ 
    & \quad  + \frac{10\kappa'^3\tilde\sigma^2\epsilon(1-\epsilon)}{(1-2\epsilon)^2})\bE_{q}[(Y-X^\top\theta(q))^2(v^\top X)^2]) \nonumber \\ 
    & \leq  \frac{2}{(1-2\epsilon) \sigma'^2}( \frac{(1-2\epsilon+2 \kappa \sqrt{\epsilon(1-\epsilon)})^2\sigma^2}{(1-2\epsilon-2 \kappa \sqrt{\epsilon(1-\epsilon)})^2}\nonumber \\ 
    & \quad  + \frac{5\kappa'^3\sigma'^2\epsilon(1-\epsilon)}{(1-2\epsilon)^2})\bE_{q}[(Y-X^\top\theta(q))^2(v^\top X)^2]). 
\end{align}

By taking $\sigma'^2 =  \frac{4\sigma^2(1-2\epsilon+2\kappa'\sqrt{\epsilon(1-\epsilon)})}{(1-2\epsilon)^3-20\kappa'^3\epsilon(1-\epsilon)}$, we know that the implication holds.  
\subsection{Proof of Theorem~\ref{thm.filtering_joint}}\label{proof.filtering_joint}

\begin{proof}[Proof of Theorem~\ref{thm.filtering_joint}]
It follows from the general analysis of filter  algorithm~(Lemma~\ref{lem.filter_condition}) that it suffices to show the implication 
\begin{align}
  \sum_{i\in S} \frac{1}{n} - c_i \leq \sum_{i\not\in S} \frac{1}{n} - c_i,   &  \kappa'^2\bE_q[(v^\top (X-\mu_q))^2]^2 \psos  \bE_q[(v^\top (X-\mu_q))^4] \nonumber \\ 
     \Rightarrow &  \sum_{i\in S} q_i \pseudoE(v^\top (X_i-\mu_q))^4 \leq \frac{1}{2} \sum_{i=1}^n q_i \pseudoE(v^\top (X_i-\mu_q))^4.
\end{align}

Observe that when $\kappa^2\epsilon< 1/4 $, %
\begin{align}
\sum_{i \in S} c_i \pseudoE[(v^\top (X_i-\mu_{q}))^4] & \leq \sum_{i \in S} 8c_i \pseudoE[(v^\top (X_i-\mu_{q|S}))^4  + (v^\top(\mu_q - \mu_{q|S}) )^4]  \nonumber \\ 
 & \stackrel{(i)}{\leq} \frac{8\kappa^2}{1-2\kappa^2 \epsilon} \pseudoE[(\sum_{i \in S} c_i (v^\top (X_i-\mu_{q|S}))^2)^2] + \frac{32\epsilon(1-\epsilon)^2}{(1-2\epsilon)^2} \pseudoE[\bE_q[(v^\top(X-\mu_q))^2]^2 ] \nonumber \\
 & \stackrel{(ii)}{\leq} \frac{8\kappa^2}{1-2\kappa^2 \epsilon} \pseudoE[(\sum_{i=1}^n c_i (v^\top (X_i-\mu_{q}))^2)^2]+ \frac{32\epsilon(1-\epsilon)^2}{(1-2\epsilon)^2} \pseudoE[\bE_q[(v^\top(X-\mu_q))^2]^2 ]  \nonumber\\ 
 & {\leq} (\frac{8\kappa^2}{1-2\kappa^2 \epsilon}+ \frac{32\epsilon(1-\epsilon)}{(1-2\epsilon)^2})\cdot  \pseudoE[(\sum_{i=1}^n c_i (v^\top (X_i-\mu_{q}))^2)^2]  \nonumber \\
 & \stackrel{(iii)}{\leq} \frac{1}{2} \sum_{i=1}^n c_i E_v[(v^\top X_i)^4].
\end{align}
Here (i) is by Lemma~\ref{lem:deletion-hypercontractive} and Lemma~\ref{lem.modu_sec} (and the fact that $\pseudoE[p] \leq \pseudoE[q]$ if $p \psos q$), 
(ii) is by the fact that adding the $c_i (v^\top X_i)^2$ terms for $i \not\in S$ is adding a sum of squares, 
and (iii) is by the assumption that $E$ refutes hypercontractivity. Thus as long as $\epsilon< 1/4\kappa^2$
we have the desired property.

\end{proof}

\subsection{Proof of Theorem~\ref{thm.low_regret_mean_identity_cov}}\label{proof.low_regret_mean_identity_cov}
From boundedness of $X_i$, we know that $g(X_i; q^{(k)})\leq 4d/\epsilon$.  Denote $\eta^{(k)} = \delta \cdot \frac{\epsilon}{8d}$. The algorithm has the following regret bound from~\citep[Theorem 2.4]{arora2012multiplicative}:
\begin{align}
    & \frac{1}{T}\sum_{t=1}^T \left(\bE_{ q^{(t)}}[(v^{(t)\top}(X_i-\mu_q))^2 - 1] -\bE_{p_S} [(v^{(t)\top}(X_i-\mu_q))^2 - 1] \right) \nonumber \\ 
    \leq & \frac{\delta}{ 2T} \sum_{t=1}^T \bE_{p_S}[|(v^{(t)\top}(X_i-\mu_q))^2 - 1|]  + \frac{16d}{T\delta} \nonumber \\ 
     \leq & \frac{\delta}{  2T} \sum_{t=1}^T (\bE_{p_S}[(v^{(t)\top}(X_i-\mu_{p_S}))^2 ] +1+ \|\mu_q-\mu_{p_S}\|_2^2) + \frac{16d}{T\delta}\nonumber \\ 
     \leq & \frac{\delta}{  2T} \sum_{t=1}^T (2+\xi+ \|\mu_q-\mu_{p_S}\|_2^2) + \frac{16d}{T\delta}  \nonumber \\ 
     \leq & \frac{\delta}{  2T} \sum_{t=1}^T \|\mu_q-\mu_{p_S}\|_2^2 + (1+\xi/2)\delta +  \frac{16d}{T\delta}
\end{align}
By taking $T = T_0 = \frac{8(2+\xi)d}{\xi^2}$ and taking $\delta =  \frac{4\beta \sqrt{d}}{\sqrt{(1+\xi/2)T}} = 2\beta\xi/(2+\xi), \beta \in(0, 1)$, we have
\begin{align}
     \frac{1}{T_0}\sum_{t=1}^{T_0} \left(\bE_{ q^{(t)}}[(v^\top(X_i-\mu_q))^2 - 1] - \bE_{p_S} [ (v^\top(X_i-\mu_q))^2 - 1 ] - \frac{ \beta\xi}{2+\xi } \|\mu_q-\mu_{p_S}\|_2^2\right)\leq 2\xi/\beta.
\end{align}
Thus there must exists some $t_0\in [T_0]$ such that
\begin{align}
   \bE_q [(v^\top(X_i-\mu_q))^2 - 1] & \leq    \bE_{p_S} [  (v^\top(X_i-\mu_q))^2 - 1  ] + \frac{ \beta\xi}{2+\xi } \|\mu_q-\mu_{p_S}\|_2^2 + 2\xi/\beta \nonumber \\ 
   & = \bE_{p_S} [  (v^\top(X_i-\mu_{p_S}))^2 - 1  ]   + (1+\frac{ \beta\xi}{2+\xi })\cdot  \|\mu_q-\mu_{p_S}\|_2^2 + 2\xi/\beta \nonumber \\ 
   & \leq   (1+\frac{ \beta\xi}{2+\xi })\cdot \|\mu_{p_S}-\mu_q\|^2   + (1+2/\beta)\xi \nonumber \\ & \leq (1+\frac{ \beta\xi}{2+\xi })\cdot \left( \frac{  \rho}{1-2\epsilon} + \sqrt{\frac{\epsilon( \xi+\xi'+\epsilon/(1-\epsilon))}{1-2\epsilon} + \frac{\epsilon(1-\epsilon)\rho^2}{(1-2\epsilon)^2}}\right)^2   + (1+2/\beta)\xi,
\end{align}
where we denote $\xi'  = \max(\|\Sigma_q\|-1, 0)$. The last inequality comes from Lemma~\ref{lem.empirical_identity_cov_modulus}. 
On the other hand, from the choice of $v$, we have
\begin{align}
     \bE_q [(v^\top(X_i-\mu_q))^2 - 1]\geq \xi'(1 - \gamma)-\gamma. 
\end{align}

By combining the above two inequalies and solve them for $\xi'$, we can see that there exists some constant $C$, 
\begin{align}
    \xi' \leq C \cdot \frac{(1+1/\beta)\xi+  \rho^2+\epsilon}{(1-3(1+\beta \xi/(1-\gamma\epsilon))\epsilon)^2}.
\end{align}

\end{document}